\newtheorem{theorem}{\bf \emph{Theorem}}
\newtheorem{lemma}{\bf \emph{Lemma}}
\newenvironment{breakablealgorithm}
  {
   \begin{center}
     \refstepcounter{algorithm}
     \hrule height.8pt depth0pt \kern2pt
     \renewcommand{\caption}[2][\relax]{
       {\raggedright\textbf{\ALG@name~\thealgorithm} ##2\par}%
       \ifx\relax##1\relax 
         \addcontentsline{loa}{algorithm}{\protect\numberline{\thealgorithm}##2}%
       \else 
         \addcontentsline{loa}{algorithm}{\protect\numberline{\thealgorithm}##1}%
       \fi
       \kern2pt\hrule\kern2pt
     }
  }{
     \kern2pt\hrule\relax
   \end{center}
  }
\begin{document}

\title{Detecting outliers by clustering algorithms}

\author{Qi~Li,~
        Shuliang~Wang
\IEEEcompsocitemizethanks{\IEEEcompsocthanksitem Q. Li is with School of Information Science and Technology, Beijing Forestry University, Beijing, 100083, China.\protect\\
E-mail: liqi2024@bjfu.edu.cn
\IEEEcompsocthanksitem  S. Wang is with School of Computer Science \& Technology, Beijing Institute of Technology, Beijing, 100081, China.\protect\\}
}

\maketitle

\begin{abstract}
Clustering and outlier detection are two important tasks in data mining. Outliers frequently interfere with clustering algorithms to determine the similarity between objects, resulting in unreliable clustering results. Currently, only a few clustering algorithms (\emph{e.g.}, DBSCAN) have the ability to detect outliers to eliminate interference. For other clustering algorithms, it is tedious to introduce another outlier detection task to eliminate outliers before each clustering process. Obviously, how to equip more clustering algorithms with outlier detection ability is very meaningful. Although a common strategy allows clustering algorithms to detect outliers based on the distance between objects and clusters, it is contradictory to improving the performance of clustering algorithms on the datasets with outliers. In this paper, we propose a novel outlier detection approach, called ODAR, for clustering. ODAR maps outliers and normal objects into two separated clusters by feature transformation. As a result, any clustering algorithm can detect outliers by identifying clusters. Experiments show that ODAR is robust to diverse datasets. Compared with baseline methods, the clustering algorithms achieve the best on 7 out of 10 datasets with the help of ODAR, with at least 5\% improvement in accuracy. 
\end{abstract}

\begin{IEEEkeywords}
outlier detection, clustering, high-order density
\end{IEEEkeywords}

\section{Introduction}
\label{sec:introduction}
Outlier detection aims to discover objects that do not conform to the global pattern of the dataset. It is commonly used to identify credit card fraud, analyze health condition, detect device failure, maintain network security, and explore new stars in astronomical images \cite{panjei2022survey}. Clustering aims to divide objects into different categories solely based on similarity. It is widely used in many fields, such as pattern recognition, data compression, image segmentation, time series analysis, information retrieval, spatial data analysis, biomedical research, and so on \cite{chao2021survey}.

Clustering and outlier detection are two diametrically opposed tasks because clustering focuses on objects that are dense in the feature space, while outliers are usually sparsely distributed in the feature space. However, it seems that clustering and outlier detection are again two inseparable tasks because outliers can greatly affect the clustering algorithm's ability. As shown in Figure \ref{fig:example-clustering}, there are several outliers between the two clusters. Since similarity can be propagated by neighbors \cite{wang2018delta}, these outliers will act as similarity propagation chains to induce clustering algorithms to identify the two clusters as one category. 

\begin{figure}[h]
  \centering
  \includegraphics[width=3in]{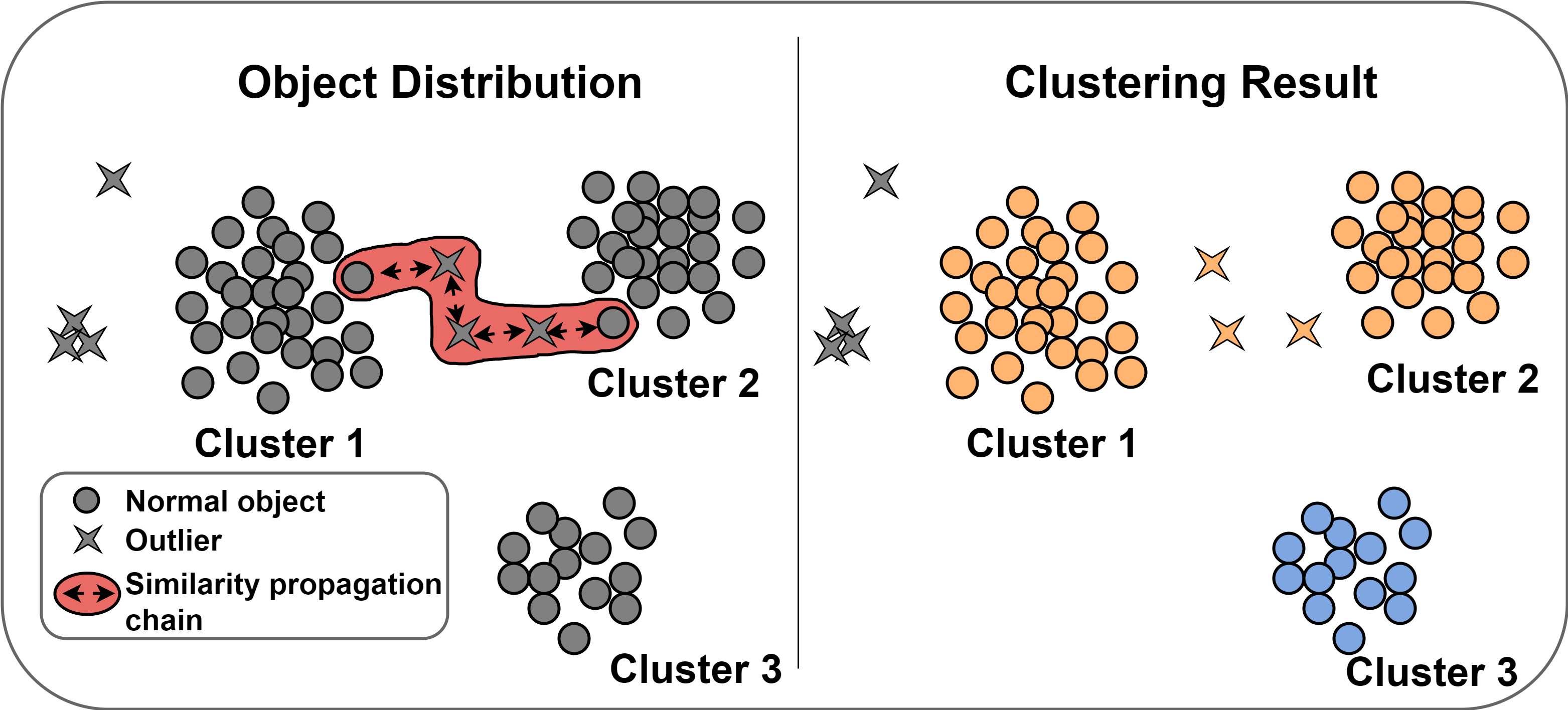}
  \caption{An example of outlier misleading clustering. The outliers that are between clusters act as similarity propagation chains, increasing the similarity between clusters and thus misleading clustering algorithms to identify different clusters as one category.}
  \label{fig:example-clustering}
\end{figure}

In order to exclude outlier interference, some clustering algorithms are equipped with outlier detection approaches, but these approaches are not universal and only match specific clustering principles. For example, DBSCAN \cite{ester1996density} uses non-density reachable relations to detect outliers, provided that all density reachable relations have been determined by its clustering principle; DPC \cite{rodriguez2014clustering} strips outliers from the edge of the cluster, provided that the cluster and surrounding sparse objects have been identified as a class by its clustering principle. So far, a large number of existing clustering algorithms are not immune to outliers. Currently, a common strategy is to calculate the distance from the object to clusters (or clustering centers), the farther the distance, the more likely the object is to be an outlier \cite{panjei2022survey}. However, this strategy is contradictory to improving the performance of clustering algorithms on datasets with outliers, because it is premised on the need to accurately identify clusters, but outliers instead interfere with the identification of clusters. Therefore, it is a valuable study how to establish an  association between outlier detection task and clustering task so that \textbf{\emph{any}} existing clustering algorithm gains effective outlier detection ability.

In this paper, we propose a universal and effective outlier detection approach called ODAR (\underline{O}utlier \underline{D}etection \underline{A}pproach via cluster \underline{R}ecognition) that meets the above requirements. We summarize the main contributions of this work as follows:

\textbf{1) We propose a daring idea of turning the outlier detection process into a clustering process, so that \emph{any} existing clustering algorithm
gains outlier detection ability.} Specifically, ODAR maps a dataset of arbitrary dimensionality into a two-dimensional feature space (we call it ODAR space), where normal objects and outliers are divided into two clusters. In this way, \emph{\textbf{any}} clustering algorithm can detect outliers by identifying the clusters in ODAR space with its own cluster identification ability. 

\begin{figure}[h]
  \centering
  \includegraphics[width=3in]{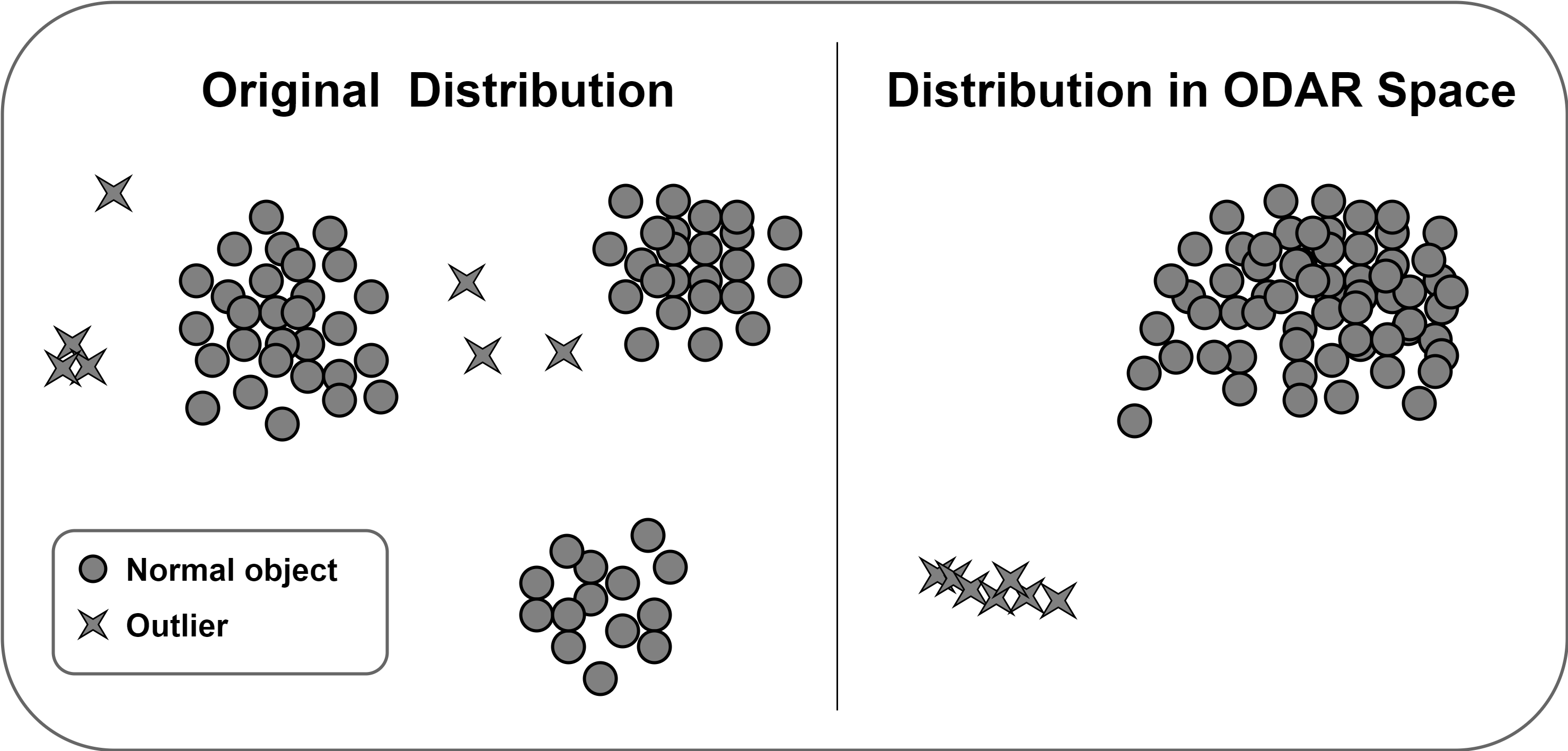}
  \caption{ODAR maps the dataset into the ODAR space, where high-order density can ensure normal objects and outliers are divided into clearly separable clusters.}
  \label{fig:example-ODAR}
\end{figure}

\textbf{2) We propose a novel concept of high-order density, \emph{i.e.} the density of local density values, to ensure the \emph{effectiveness} of ODAR.} Obviously, in ODAR space, only if the cluster in which outliers are located and the cluster in which normal objects are located are clearly separable, then ODAR is effective. Traditional outlier detection approaches \cite{rodriguez2014clustering, ester1996density, wang2020extreme, wang2018delta} only focus on the local density of objects. However, it is difficult to find a clear gap in local density values between outliers and normal objects because there are always outliers close to each other whose local densities are close to those of normal objects, as shown in Figure \ref{fig:example-clustering}. Fortunately, regardless of the proximity of outliers to each other, the distribution of their local density values must be sparse because their number is much less than that of normal objects. As a result, the high-order density values of outliers are not only small but also significantly smaller than those of normal objects. That is, the high-order density widens the difference between outliers and normal objects but narrows the difference between outliers. Therefore, in addition to the local density, we use the high-order density as another feature of ODAR space. The proposed high-order density ensures that outliers and normal objects can be mapped into two \emph{clearly separable} clusters in ODAR space, as shown in Figure \ref{fig:example-ODAR}, thus ensuring the effectiveness of ODAR.

\textbf{3) We design a component clustering strategy to ensure the \emph{universality} of ODAR.} In some specific datasets, there are some normal objects with extremely large local densities (hereafter referred to as large-density objects). Due to the small number of large-density objects, their high-order densities may be as small as the high-order densities of outliers, leading clustering algorithms with weak cluster identification abilities to treat outliers and large-density objects in a cluster in ODAR space. The component clustering strategy does not directly cluster the dataset but first clusters the local density dimension to exclude large-density objects, and then second-clusters the previous result through the high-order density dimension to exclude common normal objects. As a result, clustering algorithms with weak cluster identification abilities can accurately identify the outlier cluster in ODAR space. Therefore, the component clustering strategy improves the universality of ODAR, so that clustering algorithms with different cluster identification abilities accurately detect outliers.

\textbf{4) Numerous experiments confirm the advantages of ODAR.} These experiments show that: 1) ODAR has strong robustness, and it is not limited by the distribution and number of outliers, and unbalanced density. 2) Regardless of clustering algorithms, ODAR enables them to obtain excellent outlier detection with an average accuracy of 0.84 (1 is completely accurate). 3) Compared with 17 baseline methods, the clustering algorithms with the help of ODAR perform best on 7 out of 10 datasets. In terms of average accuracy, they outperform baseline methods by at least 5\%. 4) ODAR is not sensitive to input parameter. Under different parameter values, the clustering algorithms with the help of ODAR are consistently superior to baseline methods. 5) ODAR has a shorter runtime than most baseline methods.

The remainder of the paper is organized as follows. The next section is about related works. Section \ref{sec:ODAR} introduces the theory of ODAR. Section \ref{sec:experiments} verifies the robustness and effectiveness of ODAR. The conclusion and future works are shown in Section \ref{sec:conlusion}.

\section{Related works}
\textbf{Outlier-clustering algorithms (Clustering algorithms that can directly detect outliers):} \emph{DBSCAN} is the most famous clustering algorithm with the ability of outlier detection \cite{wang2020extreme}. Based on \emph{epsilon} and \emph{minpts}, it selects all with density reachable relation and then treats the remaining sparse objects, those without the relation, as outliers. Outlier-clustering algorithms usually have two approaches to detect outliers. One approach \cite{abbas2021denmune, rodriguez2014clustering, d2021automatic} peels off outliers from identified clusters. Another approach \cite{wang2020extreme, wang2018delta, he2003discovering} assumes each object should belong to a certain cluster, so some abnormal clusters exist, all objects inside of which are outliers. \emph{DPC} \cite{rodriguez2014clustering} belongs to the first approach. It first assigns each object into a subject cluster, then peels off those low-density boundary objects from the subject clusters as outliers. \emph{Delta clustering} \cite{wang2018delta} and \emph{CBLOF} \cite{he2003discovering} belong to the second approach. \emph{Delta clustering} separates the whole dataset into different clusters and counts the number of objects in each cluster. If the number in certain clusters are far less than others, then those clusters are set to be abnormal clusters. All objects in abnormal clusters are treated as outliers. After clustering, \emph{CBLOF} classifies identified clusters into \emph{large clusters} and \emph{small clusters}. Based on where an object comes from and the clusters' size, \emph{CBLOF} can identify whether the object is an outlier. If an object is identified as an outlier, then the cluster in which the object is found is an abnormal cluster. \emph{Compared with the above algorithms, whose outlier detection approach can only match specific clustering principles, we propose a universal approach for clustering principles.}

\textbf{General outlier detection algorithms:} General outlier detection algorithms have a huge family, and can be roughly subdivided into neighbor-based \cite{mohotti2020efficient, wang2023self, breunig2000lof}, distance-based \cite{zhu2022high, tran2020real,almardeny2022novel,huang2023novel}, probabilistic-based \cite{2020COPOD, li2022ecod, zheng2021probabilistic}, network-based \cite{xu2023deep, goodge2022lunar, liu2020generative, sarvari2021unsupervised}, and ensemble-based \cite{zhao2021suod, zhao2019lscp} algorithms. \emph{LOF} \cite{breunig2000lof}, \emph{Isforest} \cite{liu2012isolation}, \emph{LSCP} \cite{zhao2019lscp} are the most representative and widely used methods. \emph{LOF} determines whether objects are outliers by calculating the similarity between neighbors. \emph{Isforest} is a tree classifier, and it detects outliers based on the locations of objects in the tree. \emph{LSCP} is a parallel outlier detection ensemble which selects competent detectors in the local region of a test instance. \emph{SO-GAAL} \cite{liu2020generative}, \emph{MO-GAAL} \cite{liu2020generative}, \emph{BAE} \cite{sarvari2021unsupervised}, \emph{RCA} \cite{liu2021rca}, \emph{DeepSVDD} \cite{ruff2018deep}, \emph{DIF} \cite{xu2023deep} and \emph{LUNAR} \cite{goodge2022lunar} are network-based outlier detection methods that have received a lot of attention recently. \emph{BAE} builds an adaptive cascade of autoencoders to detect outliers. \emph{SO-GAAL} applies a generative adversarial network to generate informative potential outliers to assist the classifier in describing the boundary. \emph{MO-GAAL} integrates multiple generators to generate informative potential outliers. \emph{RCA} trains collaborative auto-encoders to obtain the reconstruction errors for outlier detection. \emph{DeepSVDD} trains a neural network to map potential representations of most objects into a hypersphere, and then detects outliers based on the distance from the object to the hypersphere. \emph{LUNAR} implements local outlier methods (such as \emph{LOF}) in graph neural networks, and it uses information from the nearest neighbours in a trainable way to detect outliers.  \emph{DIF} is an \emph{Isforest} extension algorithm based on deep learning. \emph{Compared with the above outlier detection algorithms, which cannot be used directly for clustering task, we establish an association between clustering task and outlier detection task, so that any clustering algorithm can directly gain outlier detection ability to eliminate outliers before clustering.}

\begin{figure}[h]
  \centering
  \includegraphics[width=3in]{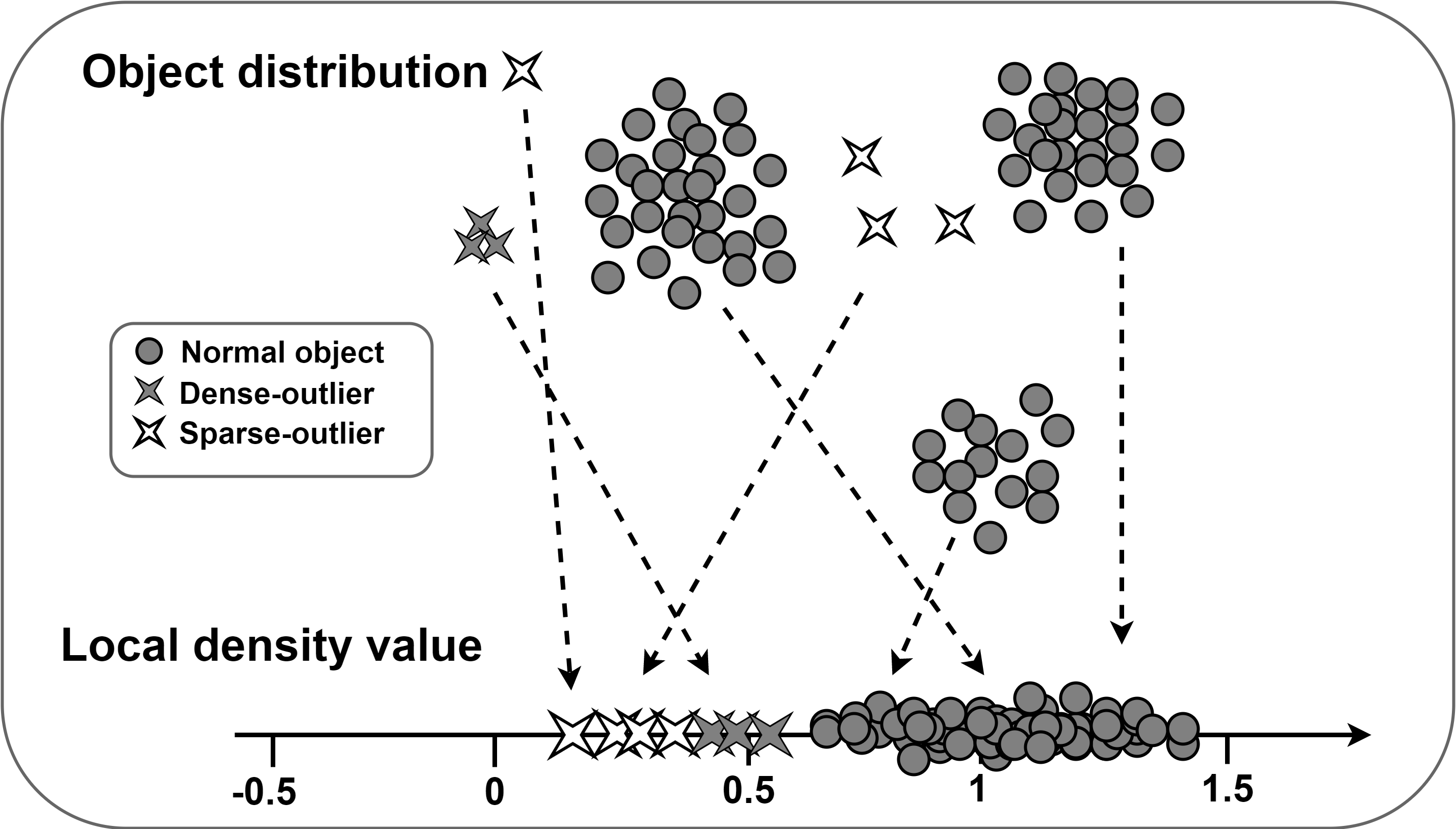}
  \caption{Local density does not clearly distinguish between normal objects and outliers.}
  \label{fig:example-local density}
\end{figure}

\section{Method}
\label{sec:ODAR}

Generally, there are two types of outliers: sparse-outliers and dense-outliers. Sparse-outliers are sparsely distributed objects in the data space, and they are away from other objects, such as the hollow star-shaped objects in Figure \ref{fig:example-local density}. Dense-outliers are several objects that are gathered in one place far from subject objects, such as the solid star-shaped objects in Figure \ref{fig:example-local density}. For example, the heights of pygmy horses are dense-outliers relative to normal horses. Sparse-outliers can be easily detected because they have an obvious characteristic that their local densities are much smaller than those of normal objects. \emph{DBSCAN}, \emph{DPC}, \emph{LOF}, and \emph{KNN} actually make use of this characteristic as the basis for judging outliers. However, such an obvious characteristic does not occur in dense-outliers. Gathering dense-outliers have greater local densities, even close to normal objects. Figure \ref{fig:example-local density} shows the local density values corresponding to the different objects in the dataset of Figure \ref{fig:example-clustering}. It can be observed that the gap between the local density values of sparse-outliers and normal objects is large, but the gap between the local density values of dense-outliers and normal objects is small.

Therefore, it will be significant to minimize the differences between outliers so that they are significantly different from normal objects. This paper consists of two parts: feature transformation and clustering strategy. The feature transformation constructs a feature space that allows sparse-outliers and dense-outliers to be mapped into a cluster away from normal objects. Then, the clustering strategy assists clustering algorithms in detecting outliers by identifying clusters.

\subsection{Feature Transformation}
Let $O$ be a set of $N$ objects in $d$-dimensional space $R^d$. First, the local density value is calculated for each object in $O$. We select $-e^x$ as the core of local density because its wide gradient range can make an obvious difference between normal objects and outliers. Let $o_i$ be the $i$-th object in $O$, its local density is defined as follows,
\begin{equation}
\rho_i=-f(\sum_{j=1}^ke^{g(\|o_{ij}-o_i\|_2)}),
\label{eq1}
\end{equation}
where $k$ is the unique input parameter of ODAR, and $o_{ij}$ is the $j$-nearest object of $o_i$. $g(\cdot)$ and $f(\cdot)$ are used to scale density values to avoid large differences between sparse-outliers and dense-outliers. Specifically, $g(\cdot)$ is a common normalization function, which normalizes the distance between objects rather than the objects' components.
\begin{equation}
g(\|o_{ij}-o_i\|_2)=\frac{\|o_{ij}-o_i\|_2-\min\limits_{s\in [1, N]}(\|o_{sj}-o_s\|_2)}{\max\limits_{s\in [1, N]}(\|o_{sj}-o_s\|_2)-\min\limits_{s\in [1, N]}(\|o_{sj}-o_s\|_2)},
\label{eq2}
\end{equation}
where $N$ is the number of objects in dataset $O$. $f(\cdot)$ is a function to adjust the center of density values' distribution to 1 by dividing the mean of all local densities. Consequently,
\begin{equation}
\rho_i=\frac{-N\sum_{j=1}^k e^{\frac{\|o_{ij}-o_i\|_2-\min\limits_{s\in [1, N]}(\|o_{sj}-o_s\|_2)}{\max\limits_{s\in [1, N]}(\|o_{sj}-o_s\|_2)-\min\limits_{s\in [1, N]}(\|o_{sj}-o_s\|_2)}}}{\sum_{v=1}^N\sum_{j=1}^k e^{\frac{\|o_{vj}-o_v\|_2-\min\limits_{s\in [1, N]}(\|o_{sj}-o_s\|_2)}{\max\limits_{s\in [1, N]}(\|o_{sj}-o_s\|_2)-\min\limits_{s\in [1, N]}(\|o_{sj}-o_s\|_2)}}}.
\label{eq3}
\end{equation}

\textbf{Main Idea.} As we described above, dense-outliers' density values may be close to those of normal objects. However, based on two important facts (\emph{i.e., \textbf{1)} dense-outliers' density values are still smaller than those of normal objects; \textbf{2)} the number of outliers is much less than the number of normal objects}), we can draw the conclusion that dense-outliers' density values are sparse (the detailed proof is given in Theorem \ref{theorem}), which means that the density of the density value (hereinafter referred to as \emph{\textbf{high-order density}}) of a dense-outlier must be small. To prove Theorem \ref{theorem}, we first give Lemma \ref{lemma}.

\begin{lemma}
\label{lemma}
If $a<b<c<d$ and $b-a<d-c$, then $e^b-e^a<e^d-e^c$.
\end{lemma}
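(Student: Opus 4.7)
The plan is to exploit the fact that $e^x$ is an increasing convex function, so that both the length of an interval and its location affect the growth of $e^x$ monotonically in our favor. I would use the Mean Value Theorem on the two intervals $[a,b]$ and $[c,d]$ separately.

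First, apply the Mean Value Theorem to $e^x$ on $[a,b]$ to obtain a point $\xi_1 \in (a,b)$ with $e^b - e^a = e^{\xi_1}(b-a)$. Likewise, on $[c,d]$ there exists $\xi_2 \in (c,d)$ with $e^d - e^c = e^{\xi_2}(d-c)$. Next, observe from the hypothesis $a<b<c<d$ that $\xi_1 < b \le c < \xi_2$, so by monotonicity of $e^x$ we have $e^{\xi_1} < e^{\xi_2}$. Combining this with the second hypothesis $b-a < d-c$ (both sides positive), the product $e^{\xi_1}(b-a)$ is strictly less than $e^{\xi_2}(d-c)$, which is exactly $e^b - e^a < e^d - e^c$.

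An equally short alternative I would keep in reserve is the factorization $e^b - e^a = e^a(e^{b-a}-1)$ and $e^d - e^c = e^c(e^{d-c}-1)$; then $e^a < e^c$ because $a < c$, and $e^{b-a}-1 < e^{d-c}-1$ because $e^x$ is increasing and $b-a < d-c$, with both factors positive, so the two inequalities multiply. This avoids invoking the Mean Value Theorem and relies only on the monotonicity of the exponential.

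There is no real obstacle here; the only subtlety is checking that all factors involved are positive so that multiplying inequalities preserves the direction, which is immediate from $a<b$, $c<d$, and $a<c$. I would present the MVT version as the main proof because it exposes most directly why both the ``location'' ($\xi_1 < \xi_2$) and the ``length'' ($b-a < d-c$) contribute to the inequality.
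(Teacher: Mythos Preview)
Your proposal is correct. Both arguments you give are valid; the subtlety you flag about positivity of the factors is handled correctly.

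Your \emph{alternative} factorization argument is essentially the paper's own proof in cleaner form: the paper starts from $e^{b-a}<e^{d-c}$, rewrites this as $\frac{e^b-e^a}{e^a}<\frac{e^d-e^c}{e^c}$, and then uses $e^a<e^c$ to clear denominators---which is exactly your product comparison $e^a(e^{b-a}-1)<e^c(e^{d-c}-1)$ unpacked line by line. Your \emph{main} MVT argument, on the other hand, is a genuinely different route. It replaces the algebraic factorization by two applications of the Mean Value Theorem, so that the inequality becomes $e^{\xi_1}(b-a)<e^{\xi_2}(d-c)$ with $\xi_1<b<c<\xi_2$. What this buys is a transparent separation of the two hypotheses: the ordering $a<b<c<d$ controls the derivative factor via $\xi_1<\xi_2$, and the gap hypothesis $b-a<d-c$ controls the length factor. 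The paper's approach, by contrast, is slightly more elementary (no calculus theorem invoked, only monotonicity of $x\mapsto e^x$), at the cost of a few more lines of algebra. Either is perfectly adequate for this lemma.
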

\begin{proof}
\begin{align*}
&\because 0<b-a<d-c\\
&\therefore e^{b-a}<e^{d-c}\\
&\therefore\frac{e^b}{e^a}<\frac{e^d}{e^c}\\
&\therefore\frac{e^b-e^a+e^a}{e^a}<\frac{e^d-e^c+e^c}{e^c}\\
&\therefore\frac{e^b-e^a}{e^a}+1<\frac{e^d-e^c}{e^c}+1\\
&\therefore\frac{e^b-e^a}{e^a}<\frac{e^d-e^c}{e^c}\\
&\because e^a<e^c\\
&\therefore\frac{e^b-e^a}{e^a}<\frac{e^d-e^c}{e^a}\\
&\therefore e^b-e^a<e^d-e^c
\end{align*}
\end{proof}

\begin{theorem}
\label{theorem}
The density values of outliers are sparser than the density values of normal objects. 
\end{theorem}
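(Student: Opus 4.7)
The plan is to unpack ``sparser'' as ``the typical gap between two outliers' density values is larger than the typical gap between two normal objects' density values,'' and then prove this gap inequality by combining the two facts highlighted in the main-idea paragraph with Lemma \ref{lemma}. The overall structure will be to pick representative pairs of objects, track how the defining quantities propagate through $g(\cdot)$, the exponential core, the $k$-sum, and finally the scaling $f(\cdot)$, and show that the exponential step strictly widens the gap for the outlier pair.

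First I would fix notation: pick any two outliers $o_p, o_q$ and any two normal objects $o_r, o_s$, and for each compare their sorted $k$-nearest-neighbour distance profiles after normalization by $g(\cdot)$. By Fact~1 (outliers' density values are still smaller than those of normal objects), the $g$-normalized distances of outliers are larger than those of normal objects; concretely, for every $j \in \{1,\dots,k\}$ we may assume $g(\|o_{rj}-o_r\|_2), g(\|o_{sj}-o_s\|_2) < g(\|o_{pj}-o_p\|_2), g(\|o_{qj}-o_q\|_2)$. Then I would use Fact~2 (outliers are much fewer than normal objects) to argue that, since outliers are spread thinly across the data space, the gap in $g$-normalized $j$-th NN distance between the two outliers is at least as large as the corresponding gap between the two normal objects: for each $j$, $|g(\|o_{pj}-o_p\|_2) - g(\|o_{qj}-o_q\|_2)| \ge |g(\|o_{rj}-o_r\|_2) - g(\|o_{sj}-o_s\|_2)|$. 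This is the delicate step and the place where the proof is essentially heuristic rather than purely formal, since Fact~2 is a cardinality statement; the argument is that a sparser population has, on average, larger inter-point gaps than a dense one.

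With these two ingredients in place I would apply Lemma \ref{lemma} in its natural reading: setting $a, b$ to the two (smaller) normal-object $g$-values in increasing order and $c, d$ to the two (larger) outlier $g$-values in increasing order gives exactly the hypothesis $a<b<c<d$ and $b-a \le d-c$, and the lemma concludes
\begin{equation*}
e^{g(\|o_{sj}-o_s\|_2)} - e^{g(\|o_{rj}-o_r\|_2)} \;<\; e^{g(\|o_{qj}-o_q\|_2)} - e^{g(\|o_{pj}-o_p\|_2)}.
\end{equation*}
Summing this coordinatewise inequality over $j = 1, \dots, k$ preserves the direction, and dividing through by the common positive constant hidden in $f(\cdot)$ (the dataset-wide mean of local densities, which is identical for every object) and attaching the overall minus sign gives $|\rho_p - \rho_q| > |\rho_r - \rho_s|$. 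That is exactly ``the gap between outliers' density values exceeds the gap between normal objects' density values,'' so the density values of outliers are sparser than those of normal objects, as claimed.

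The hard part, as noted, is the middle step: turning the Fact~2 cardinality statement into a quantitative gap inequality on $g$-normalized $k$-NN distances. I would handle this by explicitly invoking Fact~2 as an assumption of the theorem (it is already listed as one of the two ``important facts'' preceding the statement) and treating the resulting gap inequality as its direct geometric consequence, rather than attempting a measure-theoretic derivation; Lemma \ref{lemma} then does the real work of amplifying this modest gap into a large separation in density values via the convexity of $e^x$.
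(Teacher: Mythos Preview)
Your overall strategy---feed a gap inequality on neighbour distances into Lemma~\ref{lemma} so that the exponential amplifies it into a gap inequality on density values---is exactly the paper's. You also correctly flag that the ``middle step'' is the heuristic part. There are, however, two concrete departures from the paper's argument worth knowing about.

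First, the paper does not compare \emph{arbitrary} outlier pairs against \emph{arbitrary} normal pairs. It begins by defining sparsity via the gap between \emph{adjacent} density values, and to manufacture adjacent pairs it takes an object together with its nearest neighbour in data space (so $a,\overline{a}$ for outliers and $b,\overline{b}$ for normal objects), arguing that nearest neighbours share most of their $k$-NN and hence have adjacent $\rho$-values. Your ``pick any two outliers $o_p,o_q$ and any two normal objects $o_r,o_s$'' does not produce adjacent density values in general, and the conclusion $|\rho_p-\rho_q|>|\rho_r-\rho_s|$ can fail badly for such pairs (take $o_r,o_s$ from very different density regimes inside the normal population). The nearest-neighbour pairing is what makes the comparison meaningful as a sparsity statement.

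Second, the paper does not derive the input gap inequality from Fact~2 (cardinality). Instead it uses that the $k$-NN of an outlier $\overline{a}$ are themselves more spread out than the $k$-NN of a normal object $\overline{b}$, giving $dis_i(\overline{b})-dis_{i-j}(\overline{b}) < dis_i(\overline{a})-dis_{i-j}(\overline{a})$ and hence the required distance-gap ordering; this is a local-geometry statement about neighbourhoods, not a population-size statement. The paper also simplifies by setting $k=1$ and discarding $f,g$ at the outset (noting they are proportional rescalings that do not affect sparsity), rather than tracking them through a sum over $j$ and a final division as you do. Your bookkeeping is more careful on that front, but the paper's shortcut is harmless and shortens the write-up.
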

\begin{proof}
The sparsity can be reflected by distance. The larger the distance between adjacent instances, the sparser  instances. Therefore, the theorem can be proved by the distance between adjacent density values. Let $a,\overline{a},\widetilde{a}\in outlier$, $b,\overline{b},\widetilde{b}\in normal$-$object$. $\overline{a}$ is the nearest object of $a$, $\widetilde{a}$ is the nearest object of $\overline{a}$. $\overline{b}$ is the nearest object of $b$, $\widetilde{b}$ is the nearest object of $\overline{b}$. Therefore, $a$ and $\overline{a}$ have many neighbor objects in common. $b$ and $\overline{b}$ also have many neighbor objects in common. Based on the Equation (\ref{eq1}), the density values of $a$ and $\overline{a}$ are adjacent, the density values of $b$ and $\overline{b}$ are also adjacent. Their density is as follows (here, we let $k=1$, because the proof is similar when $k>1$; since $f(\cdot)$ and $g(\cdot)$ are equal proportional reduction methods and do not affect sparsity, $f(\cdot)$ and $g(\cdot)$ are omitted)
\begin{align*}
\rho(a)=-e^{\|\overline{a}-a\|_2}\\
\rho(\overline{a})=-e^{\|\widetilde{a}-\overline{a}\|_2}\\
\rho(b)=-e^{\|\overline{b}-b\|_2}\\
\rho(\overline{b})=-e^{\|\widetilde{b}-\overline{b}\|_2}\\
\end{align*}
Let $dis_{i}(x)$ be the distance between object $x$ and its $i$-th nearest object. Assuming $\|\overline{a}-a\|_2<\|\widetilde{a}-\overline{a}\|_2$, $\|\overline{b}-b\|_2<\|\widetilde{b}-\overline{b}\|_2$, then
\begin{align*}
&|\rho(\overline{a})-\rho(a)|=e^{\|\widetilde{a}-\overline{a}\|_2}-e^{\|\overline{a}-a\|_2}\\
&|\rho(\overline{b})-\rho(b)|=e^{\|\widetilde{b}-\overline{b}\|_2}-e^{\|\overline{b}-b\|_2}\\
&\because \overline{a} \in outlier, \overline{b} \in normal{\rm-}object\\
&\therefore {\rm Neighbors \: of} \: \overline{a} \: {\rm are \: sparser \: than \: those \: of} \: \overline{b}. \\
&\therefore {\rm For} \: i,j, dis_{i}(\overline{b}) - dis_{i-j}(\overline{b}) < dis_{i}(\overline{a}) - dis_{i-j}(\overline{a})\\
&\therefore \|\widetilde{b}-\overline{b}\|_2 - \|\overline{b}-b\|_2 < \|\widetilde{a}-\overline{a}\|_2 - \|\overline{a}-a\|_2\\
&\because {\rm Lemma \: 1}\\
&\therefore e^{\|\widetilde{b}-\overline{b}\|_2}-e^{\|\overline{b}-b\|_2}<e^{\|\widetilde{a}-\overline{a}\|_2}-e^{\|\overline{a}-a\|_2}\\
&\therefore |\rho(\overline{b})-\rho(b)|<|\rho(\overline{a})-\rho(a)|\\
\end{align*}
Therefore, the density values of outliers are sparser than the density values of normal objects.
\end{proof}

We define the high-order density of $o_i$ as follows,
\begin{equation}
h\rho_i=\sum_{j=1}^N\chi(\frac{|\rho_i-\rho_j|}{\sigma})e^{\frac{-(\rho_i-\rho_j)^2}{\sigma^2}},
\label{eq4}
\end{equation}
where $\chi(x)=1$ if $x\leq 1$ and $\chi(x)=0$ otherwise. $\sigma=10\cdot(\max(\rho)-\min(\rho))/N$.  Obviously, the high-order density $h\rho_i$ of $o_i$ is not only affected by the \textbf{number} of $\rho_j$ in the neighborhood of $\rho_i$ with the radius $\sigma$, but also by the \textbf{compactness} between these $\rho_j$ and $\rho_i$. 

\begin{figure}[h]
  \centering
  \includegraphics[width=3in]{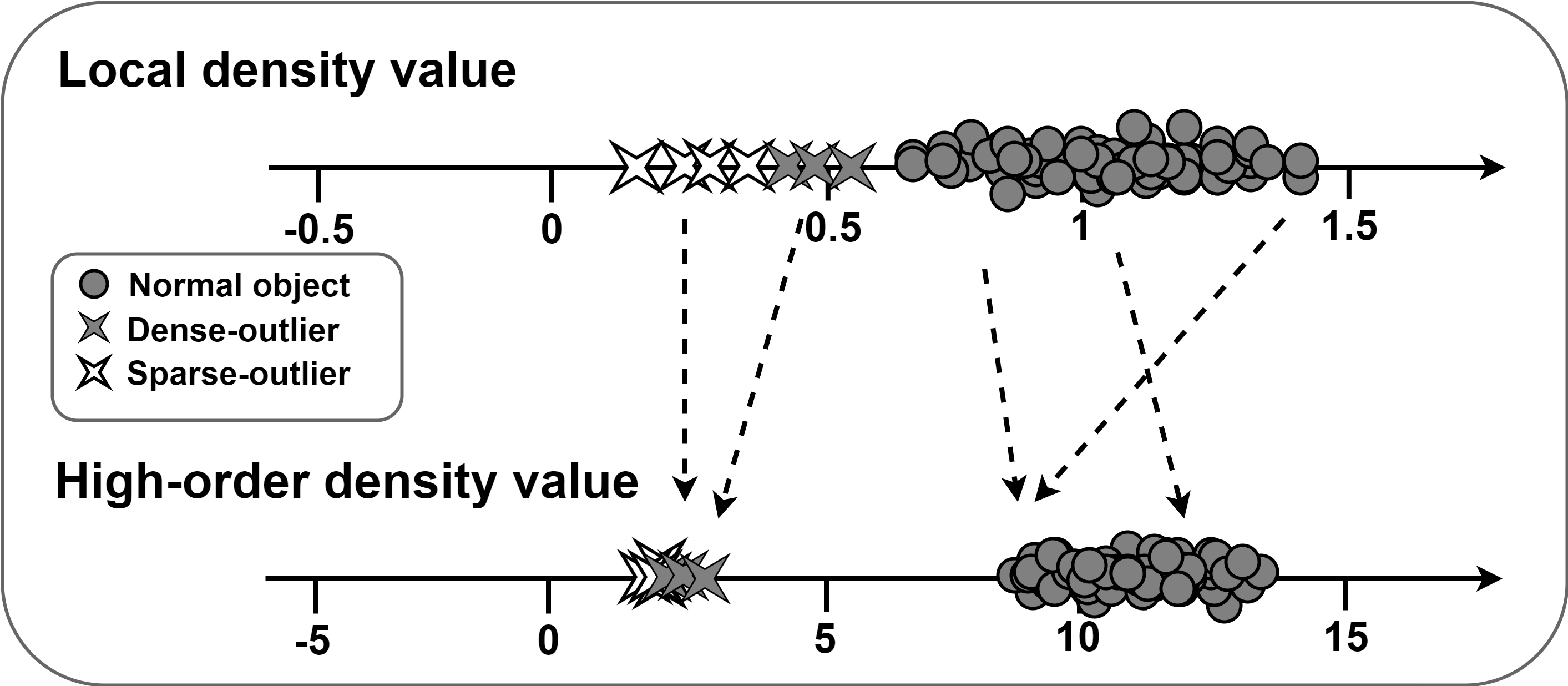}
  \caption{High-order density can clearly distinguish between normal objects and outliers.}
  \label{fig:example-high-order density}
\end{figure}

\textbf{Advantages.} Defined by Equation (\ref{eq4}), high-order density has two advantages. \textbf{The first advantage is the magnification of the difference}. As long as one factor (number or compactness) is small, high-order density will become small, and vice versa. Hence, even if the local densities of dense-outliers are close to those of normal objects, the high-order densities of dense-outliers will be significantly smaller than those of normal objects because the local density values of normal objects are denser according to Theorem \ref{theorem} (\emph{i.e.}, the number and compactness of $\rho_j$ around the local density values of normal objects are greater). \textbf{The second advantage is the formation of a cluster}. Regardless of sparse-outliers or dense-outliers, their high-order densities are small. Relative to normal objects, all outliers can form a cluster. Figure \ref{fig:example-high-order density} shows the high-order density values corresponding to the local density values in Figure \ref{fig:example-local density}. It can be observed that high-order density can clearly distinguish between normal objects and outliers compared to local density.

By combining local density and high-order density, we construct a novel feature space called ODAR space:
\begin{equation}
ODAR=\rho\otimes h\rho.
\label{eq5}
\end{equation}
As shown in Figure \ref{fig:example-ODAR}, in ODAR space, all outliers are gathered in one cluster. We call the cluster \emph{\textbf{the outlier-cluster}}. Normal objects are gathered in another cluster. Therefore, any clustering algorithm can detect outliers by identifying the outlier-cluster in ODAR space. Algorithm \ref{ag1} describes the detailed implementation of constructing ODAR space.

\algnewcommand\algorithmicinput{\textbf{Inputs:}}
\algnewcommand\Inputs{\item[\algorithmicinput]}
\algnewcommand\algorithmicoutput{\textbf{Output:}}
\algnewcommand\Output{\item[\algorithmicoutput]}
\begin{breakablealgorithm}
\label{ag1}
\caption{\emph{Constructing ODAR space}.}
\algblock[Name]{Begin}{End}
\begin{algorithmic}[1]
\Inputs{ $dataset$, $k$}
\Output{$ODAR$}
\State \textbf{Step 1: Calculating local density.}
\State $dis\_k \leftarrow KDtree(dataset,k)$; \# $dis\_k$ stores the distance between all objects and their nearest $k$ objects.
\State $max \leftarrow max(dis\_k)$; \# Finding the maximum distance.
\State $min \leftarrow min(dis\_k)$; \# Finding the minimum distance.
\For{ $i$ $in$ $range(N)$}  \# Lines 5 to 9 corresponds to the $g(\cdot)$ function.
\For{ $j$ $in$ $range(k)$}
\State $dis\_k[i,j] \leftarrow (dis\_k[i,j]-min)/(max-min)$; \# $dis\_k[i,j]$ stores the distance between object $i$ and object $j$.
\EndFor
\EndFor
\State $dis\_k \leftarrow e^{dis\_k}$;
\State $\rho \leftarrow sum(dis\_k, axis=1)$;
\State $\rho \leftarrow -\rho/mean(\rho)$; \# Corresponding to the $f(\cdot)$ function.
\State \textbf{Step 2: Calculating high-order density.}
\State $sorted_\rho \leftarrow  sort(\rho, axis=0)$; \# Sorting the local density.
\State $index_\rho \leftarrow argsort(\rho, axis=0)$;
\State $\sigma \leftarrow 10\cdot(\max(\rho)-\min(\rho))/N$;
\For{ $i$ $in$ $range(N)$}
\State $j \leftarrow i$;
\State $h\rho[index_\rho[i]] \leftarrow 0$; \# $index_\rho[i]$ stores the subscript of the object with the $i$-th smallest local density.
\While {$sorted_\rho[j]<sorted_\rho[i]$+$\sigma$ $And$ $j<N-1$}
\State $h\rho[index_\rho[i]] \leftarrow h\rho[index_\rho[i]]+exp(-(sorted_\rho[j]-sorted_\rho[i])^2/\sigma^2)$;
\State $j \leftarrow j+1$;
\EndWhile
\EndFor
\State \textbf{Step 3: Constructing ODAR space.}
\State $ODAR$ $\leftarrow [\rho, h\rho]$;
\State \emph{Return ODAR.}
\end{algorithmic}
\end{breakablealgorithm}

\normalsize

Many studies have shown that compacting the clusters before clustering will greatly reduce the difficulty of clustering \cite{li2020hibog, li2023improve}. Here, we design a simple shrinking method, which forces each object to move to the center of nearby objects, to compact the clusters in ODAR space. Such movement is as follows,
\begin{equation}
o_i=o_i+(\frac{1}{\beta}\sum_{j=1}^\beta o_{ij}-o_i),
\label{eq6}
\end{equation}
where $o_i\in$ ODAR is an object, $o_{ij}$ is the $j$-th nearest neighbor of $o_i$. We set $\beta=N/10$. Since $\beta=N/10$ is generally much smaller than the size of cluster, the movement of objects only occurs inside the cluster.

\begin{figure}[h]
  \centering
  \includegraphics[width=3.2in]{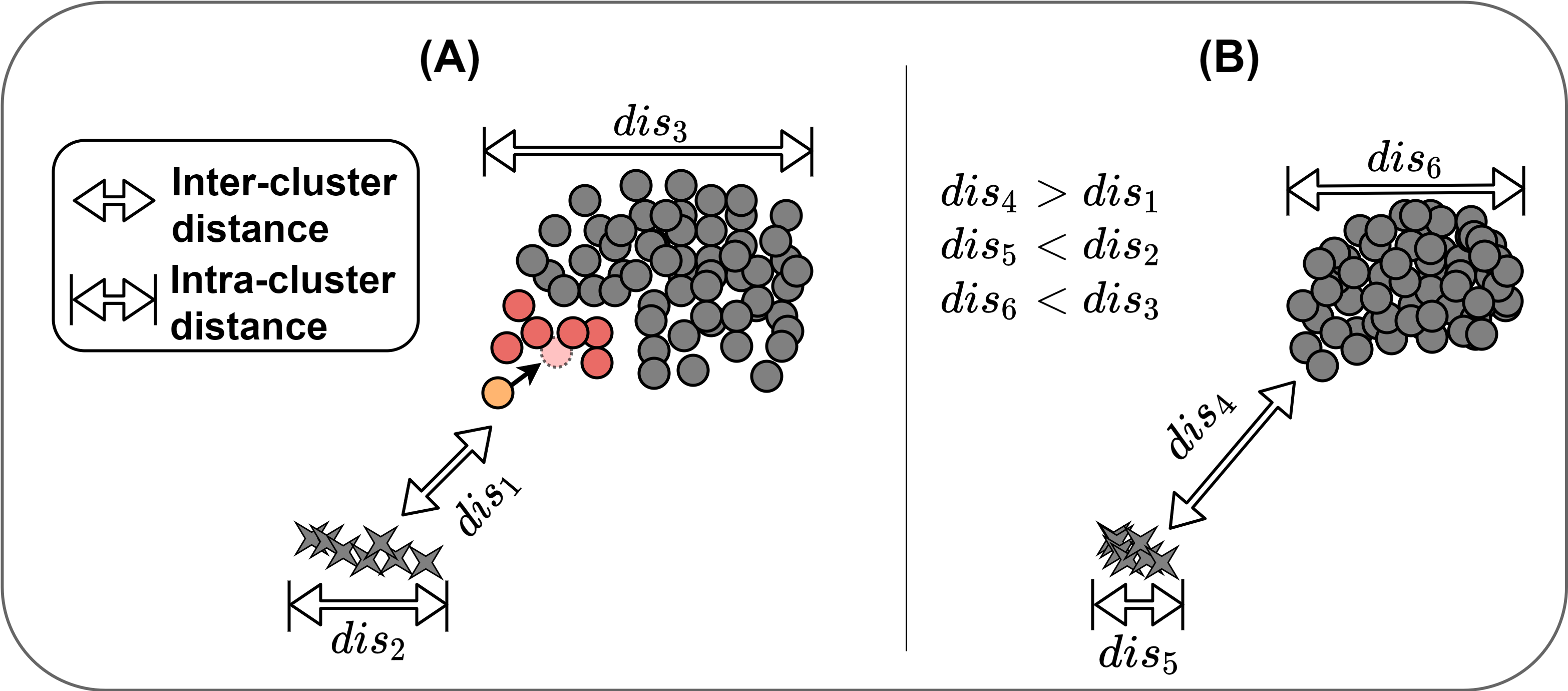}
  \caption{An example of the shrinking method. After shrinking, the clusters in ODAR space become compact.}
  \label{fig:example-shrink}
\end{figure}

As shown in Figure \ref{fig:example-shrink}(A), the red objects are $\beta$ neighbors of the orange object, and the light red object is the center of these red objects. According to Equation (\ref{eq6}), the orange object will move towards the light red object. When all objects have been moved, the inter-cluster distance becomes larger and the intra-cluster distance becomes smaller, as shown in Figure \ref{fig:example-shrink}(B). In other words, the clusters in ODAR space become more compact, which is very favorable for identifying the outlier-cluster in ODAR space. We will experimentally verify the necessity of the shrinking method in Section \ref{sec:the shrink method}.

\subsection{Clustering strategy}
\textbf{Motivation.} There are normal objects with extremely large local densities (hereafter referred to as \emph{\textbf{large-density objects}}) in some datasets. Since the number of large-density objects is small, their high-order densities are as small as outliers. In such datasets, objects can be subdivided into three types, \emph{i.e.}, common normal objects (large local density, large high-order density), large-density objects (large local density, small high-order density), and outliers (small local density, small high-order density). Obviously, large-density objects are similar to common normal objects as well as outliers. If clustering directly in the ODAR space, some clustering algorithms with weak abilities may misidentify large-density objects and outliers as a cluster. In order to improve the universality allowing clustering algorithms with different abilities to accurately identify the outlier-cluster, we design a new clustering strategy: component clustering. The component clustering strategy does not directly cluster the dataset but first clusters the local density dimension to exclude large-density objects, and then second-clusters the previous result through the high-order density dimension to exclude common normal objects.

For clustering on the local density dimension, since the local densities of outliers must be small, we adopt $M(\rho)$, which represents the median of $\rho$, as a simple clustering division to get $outlier_{\rho}$,
\begin{equation}
outlier_{\rho}=\{o_i|\rho_i<M(\rho)\}.
\label{eq7}
\end{equation}
$outlier_{\rho}$ excludes large-density objects. Next, any clustering algorithm can be selected to cluster the high-order density dimension of $outlier_{\rho}$,
\begin{equation}
label=clustering(h\rho^{\dagger}),
\label{eq8}
\end{equation}
in which $clustering$ is a clustering algorithm, $h\rho^{\dagger}=\{h\rho_j|o_j\in outlier_{\rho}\}$. As mentioned above, the high-order densities of outliers must also be small. Therefore, the outlier-cluster is
\begin{equation}
outlier=\{o_i|o_i\in outlier_{\rho}, label(h\rho_i)==label(h\rho_\Theta)\},
\label{eq9}
\end{equation}
in which $h\rho_\Theta$ is the smallest high-order density in $h\rho^{\dagger}$. Finally, the objects in $outlier$ are the detected outliers. Algorithm \ref{ag2} describes the detailed implementation of the component clustering strategy. We will experimentally verify the necessity of the component clustering strategy in Section \ref{sec:component clystering strategy}.

\begin{breakablealgorithm}
\label{ag2}
\caption{\emph{The component clustering strategy}.}
\algblock[Name]{Begin}{End}
\begin{algorithmic}[1]
\Inputs{$ODAR$}
\Output{$outlier$} \# it contains all detected outliers.
\State \textbf{Step 1: Clustering on the local density dimension.}
\For{ $i$ $in$ $range(N)$}
\If{ ODAR$[i,0]<M(\rho)$} \# $M(\rho)$ is the median of $\rho$
\State $outlier_{\rho}.append(i)$;
\EndIf
\EndFor
\State \textbf{Step 2: Clustering on the high-order density dimension.}
\State $min\_index\leftarrow argmin(outlier_{\rho}[:,1])$; \# $min\_index$ stores the subscript of the object with the smallest high-order density.
\State $label\leftarrow clustering(outlier_{\rho}[:,1])$; \# $label$ stores the clustering result on the high-order density dimension of $outlier_{\rho}$.
\State $min\_label\leftarrow label[min\_index]$; \# $min\_label$ stores the category of the object with the smallest high-order density.
\State $outlier$ $\leftarrow where(label==min\_label)$; \# Searching for objects with category $min\_label$.
\State \emph{Return $outlier$.}
\end{algorithmic}
\end{breakablealgorithm}
\normalsize

\subsection{Time complexity analysis}
ODAR actually constructs a feature space for detecting outliers, its time complexity is discussed as follows. When calculating local density, it needs to iterate through $k$ neighbors of each object, so the time complexity is $O(kN)$, in which $k\ll N$. When calculating high-order density, it needs to first sort local density values and then iterate through the neighbors in the neighborhood of each object with radius $\sigma$, so the time complexity is $O(sN)$, $s$ is the average number of neighbors per object and $s\ll N$. When compacting the clusters in ODAR space, it needs to iterate through $\beta$ neighbors of each object, so the time complexity is $O(\beta N)$, in which $\beta\ll N$. In summary, the total time complexity of ODAR is $(k+s+\beta)N$. We will compare the running times of ODAR and other methods in Section \ref{sec:Runtime}.

\section{EXPERIMENTS}
\label{sec:experiments}

\subsection{Experimental Settings}
Following, we describe the datasets, baseline methods and their parameters, the evaluation indicator that we use, and the clustering algorithms being tested.

\subsubsection{Datasets}
We select some common synthetic datasets and real-world datasets to test ODAR. Synthetic datasets (from clustering basic benchmark \cite{ClusteringDatasets}) are described in the first part of Table \ref{tab:Datasets}. Real-world datasets (from ODDS Library \cite{Rayana2016}) are described in the second part of Table \ref{tab:Datasets}, and their more detailed information is described as follows.
\begin{itemize}
\item \emph{Annthyroid} consists of many health indicators, which determine whether a patient is hypothyroid (\emph{i.e.}, outlier). 
\item In the \emph{arrhythmia} dataset, the hyperfunction and subnormal instances are treated as outliers. 
\item \emph{Glass} is a dataset about glass types. Category 6 is a clear minority class, so the objects of category 6 are marked as outliers.
\item \emph{Ionosphere} is a binary dataset, in which outliers are the objects in the \emph{bad} class. 
\item \emph{Letter} records black-and-white rectangular pixel of capital letters in the English alphabet. Outliers are a few instances of letters that are not in normal letters. 
\item In the \emph{lympho} dataset, outliers are the rare lymphatics instances. 
\item \emph{Mnist} is a common handwritten dataset, and it is converted for outlier detection. Digit-$0$ images are considered as normal objects, and outliers are some images sampled from digit-$6$ because the handwritten $6$ and $0$ are easily confused. 
\item \emph{Thyroid} is a dataset of thyroid disease, in which outliers are the hyperfunction instances. 
\item \emph{Vowels} records the speech-time-series data of multiple male speakers, and outliers are from a certain speaker. 
\item \emph{Wbc} stores the measurements for breast cancer instances. Outliers are malignant instances.
\end{itemize}

\begin{table}
  \caption{The Details of Datasets.}
  \label{tab:Datasets}
  \centering
  \setlength{\tabcolsep}{1mm}
  \begin{tabular}{c|ccccccccccc}
    \hline
    \multicolumn{1}{c}{ }&\multicolumn{1}{c}{Name}&\multicolumn{1}{c}{Number}&\multicolumn{1}{c}{Dimension}&\multicolumn{1}{c}{Outliers(\%)}\\
    \hline
    \multirow{11}{*}{Synthetic datasets}
    & \emph{T4.8k} & 8000 & 2 & 13.61\%\\
    & \emph{T7.10k} & 10000 & 2 & 11.02\%\\
    & \emph{Worm-num-least} & 18799 & 2 & 8.69\%\\
    & \emph{Worm-num-medium} & 22503 & 2 & 22.01\%\\
    & \emph{Worm-num-most} & 29932 & 2 & 40.92\%\\
    & \emph{Unbalanced-2} & 1454 & 2 & 0.55\%\\
    & \emph{Unbalanced-10} & 1070 & 2 & 0.84\%\\
    & \emph{Unbalanced-15} & 1553 & 2 & 0.58\%\\
    \hline
    \multirow{10}{*}{Real-world datasets}
    & \emph{Annthyroid} & 7200 & 6 & 7.42\%\\
    & \emph{Arrhythmia} & 452 & 274 & 15\%\\
    & \emph{Glass} & 214 & 9 & 3.2\%\\
    & \emph{Ionosphere} & 351 & 33 & 36\%\\
    & \emph{Letter} & 1600 & 32 & 6.25\%\\
    & \emph{Lympho} & 148 & 18 & 4.1\%\\
    & \emph{Mnist} & 7603 & 100 & 9.2\%\\
    & \emph{Thyroid} & 3772 & 6 & 2.5\%\\
    & \emph{Vowels} & 1456 & 12 & 3.4\%\\
    & \emph{Wbc} & 278 & 30 & 5.6\%\\
    \hline

  \end{tabular}
\end{table}

\begin{table*}
  \caption{Input Parameters of ODAR on each Dataset.}
  \label{tab:table2}
  \centering
  \begin{tabular}{cccccccccccc}
    \hline
    \multicolumn{1}{c}{ }&\multicolumn{1}{c}{\emph{Annthyroid}}&\multicolumn{1}{c}{\emph{Arrhythmia}}&\multicolumn{1}{c}{\emph{Glass}}&\multicolumn{1}{c}{\emph{Ionosphere}}&\multicolumn{1}{c}{\emph{Letter}}&\multicolumn{1}{c}{\emph{Lympho}}&\multicolumn{1}{c}{\emph{Mnist}}&\multicolumn{1}{c}{\emph{Thyroid}}&\multicolumn{1}{c}{\emph{Vowels}}&\multicolumn{1}{c}{\emph{Wbc}}\\
    
    \hline
    ODAR (\emph{delta clustering}) & 3 & 17 & 11 & 5 & 6 & 3 & 7 & 4 & 9 & 7 \\
    ODAR (\emph{kmeans}) & 2 & 4 & 13 & 18 & 4 & 7 & 19 & 9 & 12 & 4 \\
    ODAR (\emph{DPC}) & 3 & 20 & 20 & 18 & 2 & 18 & 19 & 19 & 8 & 5\\
    \hline

  \end{tabular}
\end{table*}

\subsubsection{Baseline Methods and Parameters}
\label{sectionparameter}
ODAR is an outlier detection approach for clustering algorithms, so we select outlier-clustering algorithms (that is, clustering algorithms with outlier detection ability) for comparison: \emph{extreme clustering} \cite{wang2020extreme}, \emph{delta clustering} \cite{wang2018delta}, \emph{DPC} \cite{rodriguez2014clustering}, and \emph{DBSCAN} \cite{ester1996density}. Besides, we also select some popular or recently proposed general outlier detection methods. Specifically, six traditional methods: \emph{LOF}  \cite{breunig2000lof}, \emph{Isforest} \cite{liu2012isolation}, \emph{ECOD} \cite{li2022ecod}, \emph{COPOD} \cite{2020COPOD}, \emph{LSCP} \cite{zhao2019lscp}, \emph{OTF} \cite{huang2023novel}, and seven neural network-based methods: \emph{DIF} \cite{xu2023deep}, \emph{LUNAR} \cite{goodge2022lunar}, \emph{BAE} \cite{sarvari2021unsupervised}, \emph{RCA} \cite{liu2021rca}, \emph{MO-GAAL} \cite{liu2020generative}, \emph{SO-GALL} \cite{liu2020generative}, \emph{DeepSVDD} \cite{ruff2018deep}. Since some methods' optimal parameters are difficult to determine directly, we execute them with successive parameter intervals and select the values with the best performance as their input parameters. 
\begin{itemize}
\item For \emph{DBSCAN}, the interval of $minpts$ is from 1 to 20, and the interval of $eps$ is $dis*s$, in which $s$ is from 1 to 20, and $dis$ is the mean of the distance between objects and their nearest objects. 
\item The interval of $\delta$ in \emph{extreme clustering} is $dis*s$, in which $s$ is from 1 to 20.
\item The interval of $radius$ in \emph{delta clustering} is $dis*s$, in which $s$ is from 1 to 20. 
\item The interval of $n\_neighbors$ in \emph{LOF} is $i$, in which $i$ is from 1 to 20. 
\item The interval of $k$ in \emph{MO-GAAL} is from 1 to 20. 
\item The interval of $n\_neighbours$ in \emph{LUNAR} is from 1 to 20. 
\item The interval of $n\_estimators$ in \emph{DIF} is from 1 to 20.
\item As for \emph{LSCP} and \emph{Isforest}, their results are unstable, so we repeat them 30 times and select the best values as final accuracies. 
\item The input parameters of ODAR with three clustering algorithms are recorded in Table \ref{tab:table2}. 
\end{itemize}

\subsubsection{clustering algorithms}
We select the three clustering algorithms, \emph{delta clustering}, \emph{DPC} and \emph{kmeans}, to verify the robustness and effectiveness of ODAR.

\textbf{Why do we only select the three algorithms in this paper?} Our choice does not mean that ODAR can only match the three clustering algorithms. On the contrary, any clustering algorithm can detect outliers with the help of ODAR. Here, we select them for the following reasons: 

\textbf{$\bullet$ Verifying that ODAR is universal.} As we know, different clustering algorithms have different cluster identification abilities. Some algorithms can identify very complex clusters, but some algorithms can only identify simple clusters. If ODAR is a universal outlier detection approach, then ODAR must be friendly to all clustering algorithms. \emph{Delta clustering}, \emph{DPC}, and \emph{kmeans} are three representative clustering algorithms. \emph{Kmeans} can only identify spherical clusters. \emph{DPC} is better than \emph{kmeans}, it can identify simple shape clusters. However, once shape clusters are complex, \emph{DPC} will fail. \emph{Delta clustering} has high accuracy, regardless of the shape of clusters, it can accurately identify them. Therefore, if the three clustering algorithms can accurately detect outliers with the help of ODAR, it is sufficient to show that ODAR is also friendly to other clustering algorithms, \emph{i.e.}, ODAR is an excellent universal outlier detection approach for clustering algorithms. 

\textbf{$\bullet$ Explaining the value of ODAR.} \emph{Delta clustering} and \emph{DPC} have the ability to detect outliers. If their detected results with the help of ODAR are better than their original detected results, it is sufficient to show that ODAR is very valuable.

\subsubsection{Evaluation Indicator}
Two approaches are used to measure outlier detection accuracy:

\textbf{$\bullet$ Numerical approach.} Outlier-clustering algorithms (including the clustering algorithms with the help of ODAR) directly output outliers as results, but general outlier detection methods usually only output the score that an object is an outlier. Traditional evaluation indicator, ROC, is unable to assess the performance between outlier-clustering algorithms and general outlier detection methods. Therefore, we treat the top $x$\% of objects with the highest score ($x$\% is the ground-truth outlier rate recorded in Table \ref{tab:Datasets}) as outliers of general outlier detection methods. Next, we employ
\begin{equation}
accuracy=\frac{1}{2}(\frac{TP}{TP+FN}+\frac{TN}{FP+TN})
\end{equation}
as the evaluation indicator to compare the detected results and ground-truth labels. \emph{TP} is the number of ground-truth outliers detected as outliers. \emph{FN} is the number of ground-truth outliers detected as normal objects. \emph{TN} is the number of ground-truth normal objects detected as normal objects. \emph{FP} is the number of ground-truth normal objects detected as outliers. The closer the indicator value is to 1, the more accurate the outlier detection result.

\textbf{$\bullet$ Visualization approach.} For the 2-dimensional dataset, we visualize the dataset and mark the detected outliers in orange. Since the naked eye can distinguish which objects are ground-truth outliers, the higher the number of marked ground-truth outliers and the lower the number of marked ground-truth normal objects, the more accurate the outlier detection result.

\begin{table*}
  \caption{Comparison of Accuracy between ODAR and Baseline Methods.}
  \label{tab:table3}
  \setlength{\tabcolsep}{0.5mm}
  \centering
  \begin{tabular}{ccccccccccc|c}
    \hline
    \multicolumn{1}{c}{ }&\multicolumn{1}{c}{\emph{Annthyroid}}&\multicolumn{1}{c}{\emph{Arrhythmia}}&\multicolumn{1}{c}{\emph{Glass}}&\multicolumn{1}{c}{\emph{Ionosphere}}&\multicolumn{1}{c}{\emph{Letter}}&\multicolumn{1}{c}{\emph{Lympho}}&\multicolumn{1}{c}{\emph{Mnist}}&\multicolumn{1}{c}{\emph{Thyroid}}&\multicolumn{1}{c}{\emph{Vowels}}&\multicolumn{1}{c}{\emph{Wbc}}&\multicolumn{1}{|c}{Average}\\
    
    \hline
    \emph{Extreme clustering} & 0.56 &	0.56 & 0.55 & 0.5 & 0.58 & 0.58 & 0.5 & 0.64 & 0.75 & 0.52 & 0.574\\
    \emph{DBSCAN} & 0.69 & 0.72 & \textbf{0.91} & \textbf{0.90} & 0.78 & 0.70 & 0.66 & 0.90 & 0.95 & 0.82 & 0.803\\
    \emph{Delta clustering} & 0.65 & 0.71 & 0.86 & 0.89 & 0.58 & 0.64 & 0.67 & 0.88 & 0.95 & 0.82 & 0.765\\
    \emph{DPC} & 0.52 & 0.50 & 0.49 & 0.50 & 0.61 & 0.55 & 0.50 & 0.51 & 0.51 & 0.50 & 0.519\\
    \emph{LOF} & 0.64 &0.72 &0.59 & 0.88 & 0.76 & 0.91 & 0.65 & 0.69 & 0.69 & 0.77 & 0.730\\
    \emph{Isforest} & 0.64 & 0.73 & 0.54 & 0.76 & 0.53 & \textbf{1.00} & 0.66 & 0.81 & 0.61 & 0.82 & 0.710\\
    \emph{ECOD} & 0.61 & 0.64 &	0.54 & 0.54 & 0.55 & 0.99 & 0.51 & \textbf{0.93} & 0.57 & 0.66 & 0.654\\
    \emph{COPOD} & 0.56 & 0.62 & 0.48 & 0.55 & 0.52 & 0.99 & 0.54 & 0.85 & 0.49 & 0.76 & 0.636\\
    \emph{LSCP} & 0.65 & 0.72 & 0.59 & 0.74 & 0.67 & 0.90 & 0.68 & 0.72 & 0.70 & 0.80 & 0.717\\
    \emph{SO-GAAL} & 0.47 & 0.50 & 0.48 & 0.37 & 0.48 & 0.48 & 0.50 & 0.49 & 0.48 & 0.47 & 0.472\\
    \emph{MO-GAAL} & 0.55 & 0.50 & 0.65 & 0.47 & 0.50 & 0.48 & 0.50 & 0.54 & 0.48 & 0.52 & 0.519\\
    \emph{BAE} & 0.58  & 0.70 & 0.54 & 0.67 & 0.50 & 0.83 & 0.66 & 0.71 & 0.53 & 0.82 & 0.654\\
    \emph{RCA} & 0.61 & 0.66 & 0.55 & 0.75 & 0.61 & 0.75 & 0.71 & 0.68 & 0.66 & 0.76 & 0.674\\
    \emph{DeepSVDD} & 0.61 & 0.60 & 0.49 & 0.56 & 0.51 & 0.74 & 0.58 & 0.49 & 0.48 & 0.52 & 0.558\\
    \emph{LUNAR} & 0.62  & 0.68 & 0.68 & 0.64 & 0.77 & 0.97 & 0.68 & 0.88 & 0.88 & 0.82 & 0.762\\
    \emph{DIF} & 0.58 & 0.66 & 0.48 & 0.76 & 0.54 & 0.74 & 0.69 & 0.76 & 0.55 & 0.65 & 0.641\\
    \emph{OTF} & 0.63 & 0.66 & 0.54 & 0.75 & 0.81 & 0.82 & 0.72 & 0.84 & 0.79 & 0.85 & 0.741\\
    \hline
    ODAR+\emph{kmeans} & \textbf{0.70} & 0.75 & 0.79 & 0.89 & \textbf{0.86} & 0.90 & \textbf{0.74} & 0.84 & 0.93 & 0.85 & 0.825\\
    \multirow{2}{*}{ODAR+\emph{delta}} & 0.69 & \textbf{0.77} & 0.90 & \textbf{0.90} & 0.84 & 0.96 & 0.71 & 0.91 & 0.95 & \textbf{0.92} & \textbf{0.855}\\
    &(\textcolor{black}{+6.2\%})&(\textcolor{black}{+8.5\%})&(\textcolor{black}{+4.7\%})&(\textcolor{black}{+1.1\%})&(\textcolor{black}{+44.8\%})&(\textcolor{black}{+50\%})&(\textcolor{black}{+6.0\%})&(\textcolor{black}{+3.4\%})&(+0\%)&(\textcolor{black}{+12.2\%})&(\textcolor{black}{+11.8\%})\\
    \multirow{2}{*}{ODAR+\emph{DPC}} & \textbf{0.70} & 0.76 & 0.89 & 0.88 & \textbf{0.86} & 0.96 & \textbf{0.74} & 0.88 & \textbf{0.96} & 0.85 & 0.848\\
    &(\textcolor{black}{+34.6\%})&(\textcolor{black}{+52\%})&(\textcolor{black}{+81.6\%})&(\textcolor{black}{+76\%})&(\textcolor{black}{+41.0\%})&(\textcolor{black}{+74.5\%})&(\textcolor{black}{+48\%})&(\textcolor{black}{+72.5\%})&(\textcolor{black}{+88.2\%})&(\textcolor{black}{+70\%})&(\textcolor{black}{+63.4\%})\\
    \hline

  \end{tabular}
\end{table*}

\subsection{Comparison Experiments}
To verify the effectiveness of ODAR, we compared it with other methods on real-world datasets, and tested their performance under different parameters.

\subsubsection{Accuracy Analysis}
The accuracy of all methods is recorded in Table \ref{tab:table3}, in which the best results are bolded. Since normal objects and outliers in ODAR space belong to two clusters, $2$ is used as the parameter of \emph{kmeans}, and two clustering centers are selected from the \emph{DPC} decision graph. Based on Table \ref{tab:table3}, it can be concluded that:

\textbf{$\bullet$ ODAR can further improve the detection effect of outlier-clustering algorithms.} \emph{Delta clustering} and \emph{DPC} are outlier-clustering algorithms (\emph{i.e.}, clustering algorithms that can directly detect outliers). However, the effect of \emph{DPC} is very poor, its average accuracy is only 0.519. The average accuracy of \emph{delta clustering} is 0.765. With the help of ODAR, the accuracies of \emph{DPC} and \emph{delta clustering} have been improved, and the improvement rates are recorded in parentheses in Table \ref{tab:table3}. Specifically, the average accuracy of \emph{delta clustering} is improved to 0.855 and the average accuracy of \emph{DPC} is improved to 0.848. The average improvement rate of \emph{DPC} is up to 63.4\%. In addition, ODAR enables \emph{kmeans}, a clustering algorithm without outlier detection ability, to accurately detect outliers.

\textbf{$\bullet$ ODAR is an excellent outlier detection approach for clustering algorithms.} ODAR (\emph{i.e.}, three clustering algorithms with the help of ODAR) performs the best on 7 out of 10 datasets, such as \emph{annthyroid}, \emph{arrhythmia}, \emph{ionosphere}, \emph{letter}, \emph{mnist}, \emph{vowels}, and \emph{wbc}. ODAR does not achieve the best results on \emph{glass}, \emph{lympho}, and \emph{thyroid} datasets, but it achieves second-best results. In terms of average accuracy, ODAR is as high as 0.84, outperforming baseline methods by at least 5\%.

\begin{table*}
  \caption{The Performance of \emph{Delta Clustering} with ODAR under Different Parameter Values.}
  \label{tab:table4}
  \setlength{\tabcolsep}{3.5mm}
  \centering
  \begin{tabular}{cccccccccccc}
    \hline
    \multicolumn{1}{c}{ }&\multicolumn{1}{c}{\emph{k}=2}&\multicolumn{1}{c}{\emph{k}=4}&\multicolumn{1}{c}{\emph{k}=6}&\multicolumn{1}{c}{\emph{k}=8}&\multicolumn{1}{c}{\emph{k}=10}&\multicolumn{1}{c}{\emph{k}=12}&\multicolumn{1}{c}{\emph{k}=14}&\multicolumn{1}{c}{\emph{k}=16}&\multicolumn{1}{c}{\emph{k}=18}&\multicolumn{1}{c}{\emph{k}=20}\\
    
    \hline
    \emph{Annthyroid dataset} & 0.68 & 0.69 & 0.68 & 0.66 & 0.68 & 0.68 & 0.67 & 0.66 & 0.66 & 0.66\\
    \emph{Arrhythmia dataset} & 0.68 & 0.68 & 0.69 & 0.69 & 0.75 & 0.68 & 0.76 & 0.70 & 0.76 & 0.76\\
    \emph{Glass dataset} & 0.71 & 0.87 & 0.88 & 0.87 & 0.88 & 0.88 & 0.89 & 0.89 & 0.89 & 0.85\\
    \emph{Ionosphere dataset} & 0.89 & 0.88 & 0.87 & 0.89 & 0.89 & 0.88 & 0.88 & 0.87 & 0.87 & 0.87\\
    \emph{Letter dataset} & 0.68 & 0.79 & 0.84 & 0.83 & 0.68 & 0.68 & 0.83 & 0.69 & 0.80 & 0.72\\
    \emph{Lympho dataset} & 0.88 & 0.95 & 0.91 & 0.93 & 0.94 & 0.96 & 0.93 & 0.95 & 0.62 & 0.92\\
    \emph{Mnist dataset} & 0.63 & 0.65 & 0.69 & 0.68 & 0.71 & 0.70 & 0.66 & 0.67 & 0.66 & 0.71\\
    \emph{Thyroid dataset} & 0.88 & 0.91 & 0.90 & 0.90 & 0.90 & 0.90 & 0.89 & 0.90 & 0.90 & 0.89\\
    \emph{Vowels dataset} & 0.84 & 0.91 & 0.80 & 0.79 & 0.78 & 0.90 & 0.83 & 0.77 & 0.77 & 0.92\\
    \emph{Wbc dataset} & 0.81 & 0.82 & 0.82 & 0.82 & 0.82 & 0.85 & 0.86 & 0.90 & 0.84 & 0.90\\
    \hline
    \emph{Average accuracy} & 0.768 & 0.815 & 0.808 & 0.806 & 0.803 & 0.811 & 0.820 & 0.800 & 0.777 & 0.820\\
    \hline

  \end{tabular}
\end{table*}

\begin{table*}
  \caption{The Performance of \emph{Kmeans} with ODAR under Different Parameter Values.}
  \label{tab:table5}
  \setlength{\tabcolsep}{3.5mm}
  \centering
  \begin{tabular}{cccccccccccc}
    \hline
    \multicolumn{1}{c}{ }&\multicolumn{1}{c}{\emph{k}=2}&\multicolumn{1}{c}{\emph{k}=4}&\multicolumn{1}{c}{\emph{k}=6}&\multicolumn{1}{c}{\emph{k}=8}&\multicolumn{1}{c}{\emph{k}=10}&\multicolumn{1}{c}{\emph{k}=12}&\multicolumn{1}{c}{\emph{k}=14}&\multicolumn{1}{c}{\emph{k}=16}&\multicolumn{1}{c}{\emph{k}=18}&\multicolumn{1}{c}{\emph{k}=20}\\
    
    \hline
    \emph{Annthyroid dataset} & 0.70&	0.70&	0.69&	0.68&	0.67&	0.67&	0.67&	0.66&	0.66&	0.66\\
    \emph{Arrhythmia dataset} & 0.74&	0.75&	0.68&	0.70&	0.70&	0.70&	0.70&	0.70&	0.70&	0.70\\
    \emph{Glass dataset} & 0.76&	0.74&	0.75&	0.76&	0.78&	0.79&	0.79&	0.79&	0.78&	0.73\\
    \emph{Ionosphere dataset} & 0.77&	0.77&	0.78&	0.78&	0.82&	0.87&	0.86&	0.87&	0.89&	0.88\\
    \emph{Letter dataset} & 0.85&	0.86&	0.84&	0.83&	0.84&	0.83&	0.83&	0.83&	0.82&	0.83\\
    \emph{Lympho dataset} & 0.87&	0.89&	0.89&	0.88&	0.73&	0.82&	0.73&	0.74&	0.76&	0.74\\
    \emph{Mnist dataset} & 0.70&	0.71&	0.72&	0.73&	0.72&	0.73&	0.73&	0.72&	0.71&	0.73\\
    \emph{Thyroid dataset} & 0.81&	0.83&	0.83&	0.84&	0.84&	0.83&	0.82&	0.82&	0.82&	0.82\\
    \emph{Vowels dataset} & 0.83&	0.90&	0.89&	0.87&	0.81&	0.93&	0.91&	0.88&	0.89&	0.81\\
    \emph{Wbc dataset} & 0.78&	0.85&	0.76&	0.76&	0.76&	0.80&	0.82&	0.82&	0.82&	0.82\\
    \hline
    \emph{Average accuracy} & 0.781&	0.800&	0.783&	0.783&	0.767&	0.797&	0.786&	0.783&	0.785&	0.772\\
    \hline

  \end{tabular}
\end{table*}

\begin{table*}
  \caption{The Performance of \emph{DPC} with ODAR under Different Parameter Values.}
  \label{tab:table6}
  \setlength{\tabcolsep}{3.5mm}
  \centering
  \begin{tabular}{cccccccccccc}
    \hline
    \multicolumn{1}{c}{ }&\multicolumn{1}{c}{\emph{k}=2}&\multicolumn{1}{c}{\emph{k}=4}&\multicolumn{1}{c}{\emph{k}=6}&\multicolumn{1}{c}{\emph{k}=8}&\multicolumn{1}{c}{\emph{k}=10}&\multicolumn{1}{c}{\emph{k}=12}&\multicolumn{1}{c}{\emph{k}=14}&\multicolumn{1}{c}{\emph{k}=16}&\multicolumn{1}{c}{\emph{k}=18}&\multicolumn{1}{c}{\emph{k}=20}\\
    
    \hline
    \emph{Annthyroid dataset} & 0.65&	0.68&	0.69&	0.66&	0.67&	0.68&	0.67&	0.65&	0.66&	0.66\\
    \emph{Arrhythmia dataset} & 0.71&	0.75&	0.68&	0.69&	0.68&	0.70&	0.69&	0.75&	0.75&	0.76\\
    \emph{Glass dataset} & 0.72&	0.65&	0.75&	0.83&	0.83&	0.80&	0.82&	0.83&	0.88&	0.89\\
    \emph{Ionosphere dataset} & 0.76&	0.71&	0.78&	0.75&	0.74&	0.85&	0.88&	0.86&	0.88&	0.87\\
    \emph{Letter dataset} & 0.86&	0.85&	0.80&	0.78&	0.84&	0.83&	0.83&	0.74&	0.68&	0.81\\
    \emph{Lympho dataset} & 0.88&	0.94&	0.93&	0.92&	0.94&	0.71&	0.90&	0.90&	0.96&	0.71\\
    \emph{Mnist dataset} & 0.69&	0.70&	0.62&	0.72&	0.70&	0.73&	0.72&	0.67&	0.72&	0.73\\
    \emph{Thyroid dataset} & 0.81&	0.83&	0.82&	0.84&	0.85&	0.84&	0.82&	0.82&	0.87&	0.86\\
    \emph{Vowels dataset} & 0.85&	0.90&	0.90&	0.96&	0.93&	0.90&	0.94&	0.93&	0.88&	0.88\\
    \emph{Wbc dataset} & 0.79&	0.81&	0.67&	0.65&	0.67&	0.66&	0.84&	0.84&	0.84&	0.84\\
    \hline
    \emph{Average accuracy} & 0.772&	0.782&	0.764&	0.780&	0.785&	0.770&	0.811&	0.799&	0.812&	0.801\\
    \hline

  \end{tabular}
\end{table*}

\begin{figure}[h]
  \centering
  \includegraphics[width=3in]{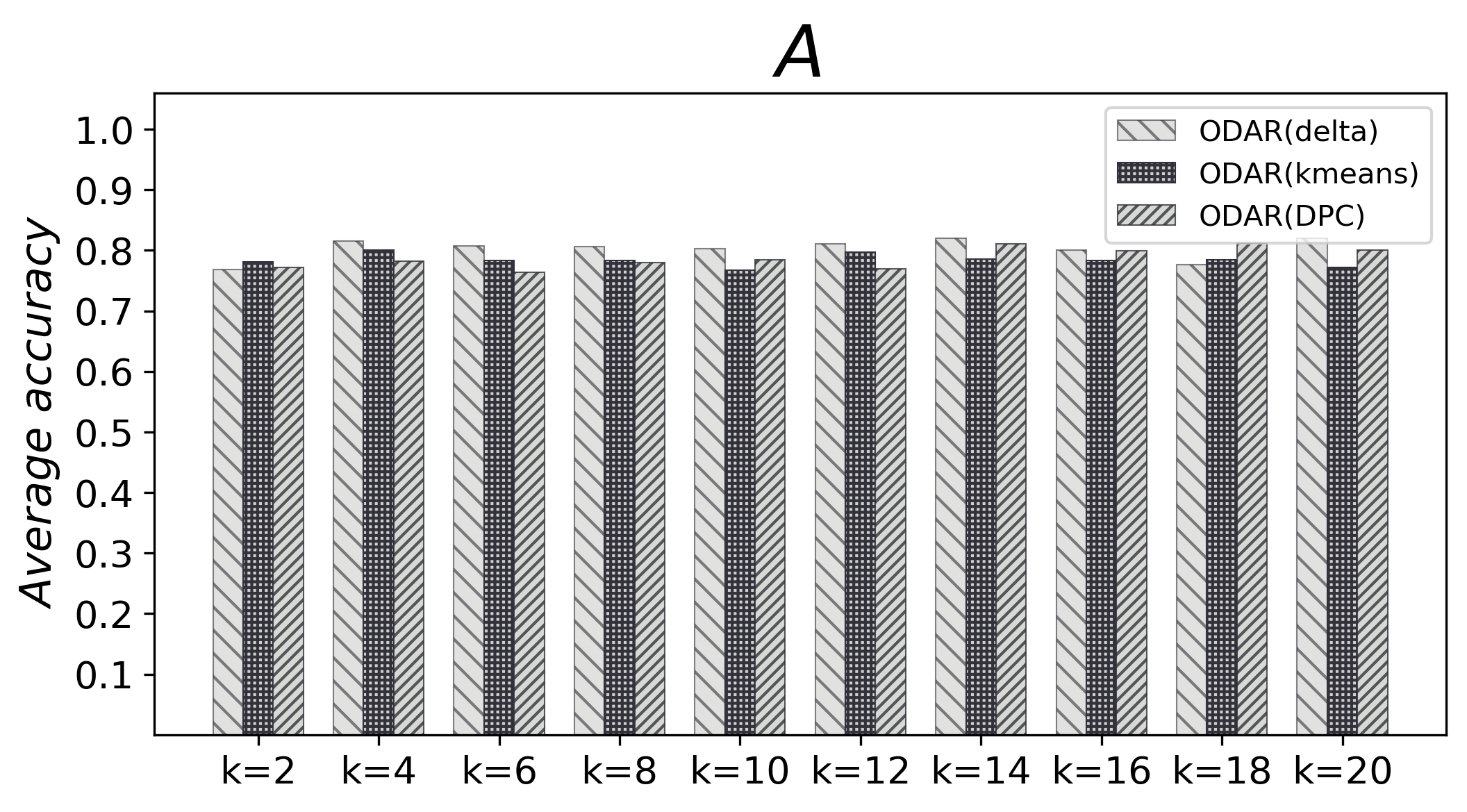}\\
  \includegraphics[width=3in]{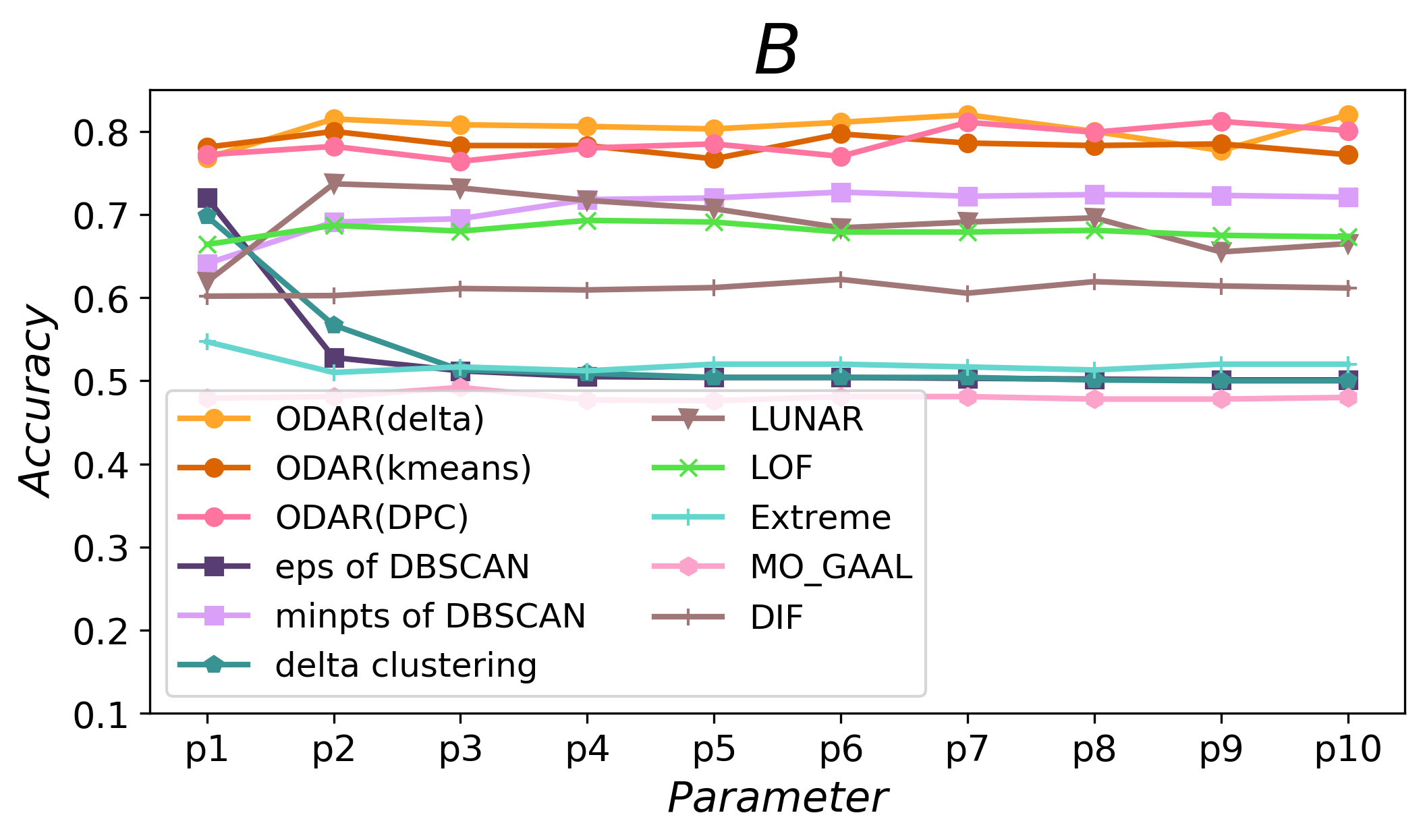}
  \caption{(A) ODAR is insensitive to the parameter $k$. (B) Under different parameter values, ODAR is superior to other methods.}
  \label{fig:parameter_compare}
\end{figure}

\subsubsection{Parameter Analysis}

Generally, we recommend that the parameter $k$ of ODAR is set to less than 20. An excellent approach should be insensitive to the input parameter. Thence, we further analyze the sensitivity of ODAR to input parameter $k$. Specifically, we modify ODAR's parameter $k$ from 2 to 20. Tables \ref{tab:table4}, \ref{tab:table5}, and \ref{tab:table6} record the accuracies of the three clustering algorithms with the help of the ODAR under different $k$. We also calculate the average accuracy of each clustering algorithm under each $k$. Figure \ref{fig:parameter_compare}(A) shows the variation of these average accuracies through a histogram. Experimental results show that the performance of each clustering algorithm is very stable regardless of the parameter $k$ of ODAR, and there are no extremes. Apparently, ODAR is not sensitive to input parameter $k$.

We also test the baseline methods, whose optimal parameters are difficult to determine directly, under different parameters. For each method, 10 parameter values are extracted from the parameter interval (described in Section \ref{sectionparameter}) in an arithmetic sequence. The average accuracies of all methods (including the three clustering algorithms with the help of ODAR) under different parameters are shown in Figure \ref{fig:parameter_compare}(B). We can observe that \emph{DBSCAN} is sensitive to parameter \emph{eps}, and only one parameter value makes it perform well. \emph{Delta clustering} and \emph{LUNAR} are also sensitive to parameters. Under most parameter values, \emph{delta clustering} performs poorly. Although the performance of \emph{MO-GAAL} and \emph{DIF} are stable, their accuracies are low. As for ODAR, no matter which clustering algorithm is matched, it is not only insensitive to parameter, but also superior to baseline methods under each parameter value.

\begin{figure}[h]
  \centering
  \includegraphics[width=3.4in]{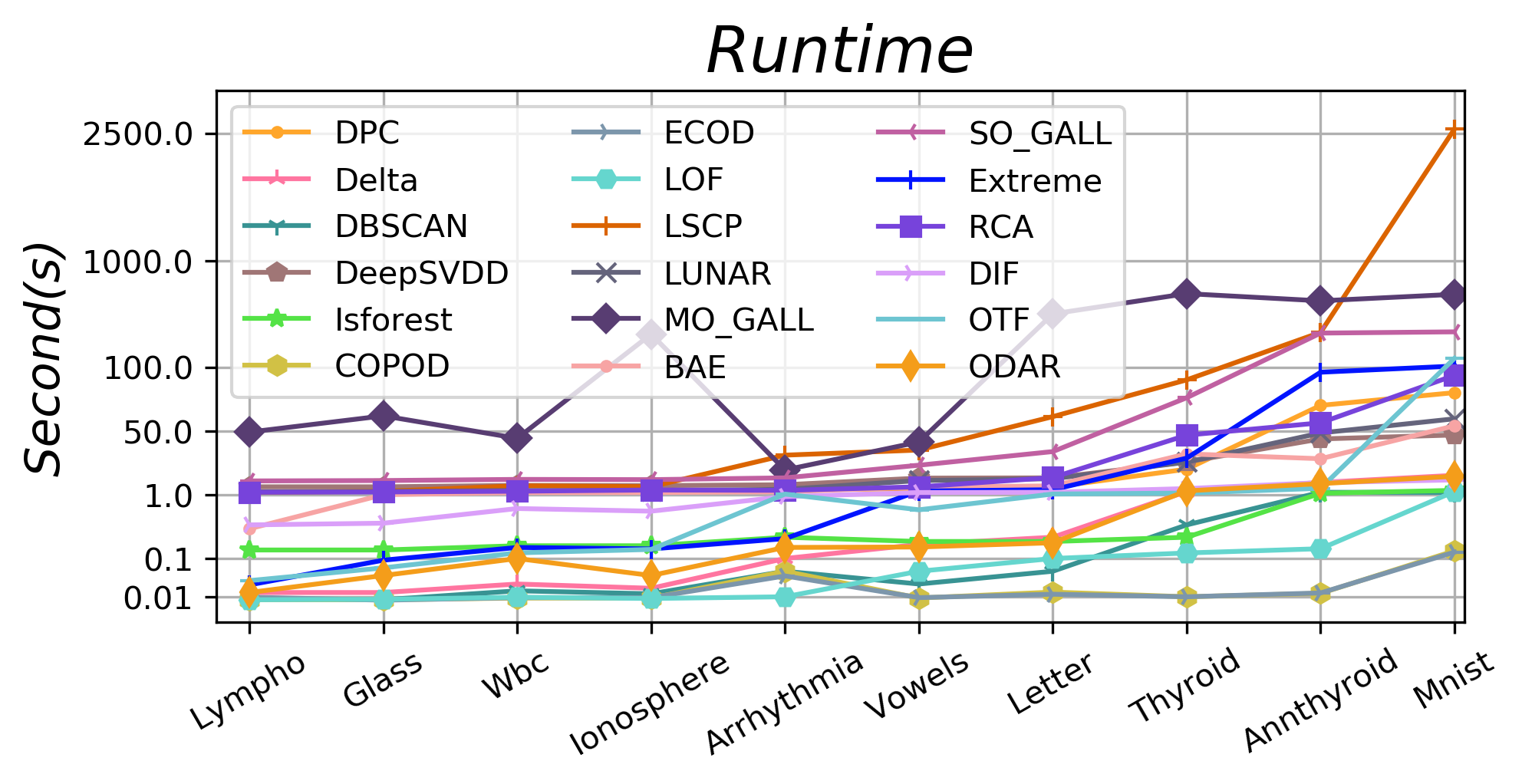}
  \caption{Comparison on running time.}
  \label{fig:runtime}
\end{figure}

\subsubsection{Runtime Analysis}
\label{sec:Runtime}
We further compare running time between \emph{ODAR} and baseline methods on real-world datasets, as shown in Figure \ref{fig:runtime}. The experimental results show that \emph{extreme clustering}, \emph{MO-GALL}, \emph{SO-GALL} and \emph{LSCP} have very long running times, running over 100 seconds on \emph{annthyroid} and \emph{mnist}, and even \emph{LSCP} has a running time of over 2500 seconds. \emph{DPC}, \emph{RCA}, \emph{BAE}, \emph{OTF}, \emph{LUNAR}, and \emph{DeepSVDD} run faster than the above methods, but they still have to run over 50 seconds on some large datasets. \emph{ODAR} and the remaining methods run less than 1 second on most datasets. COPOD and ECOD are the fastest, but their accuracy is much lower than ODAR on most datasets (see Table \ref{tab:table3} for details). Therefore, ODAR's runtime is acceptable.

\begin{figure*}[h]
  \centering
  \includegraphics[width=1.2in]{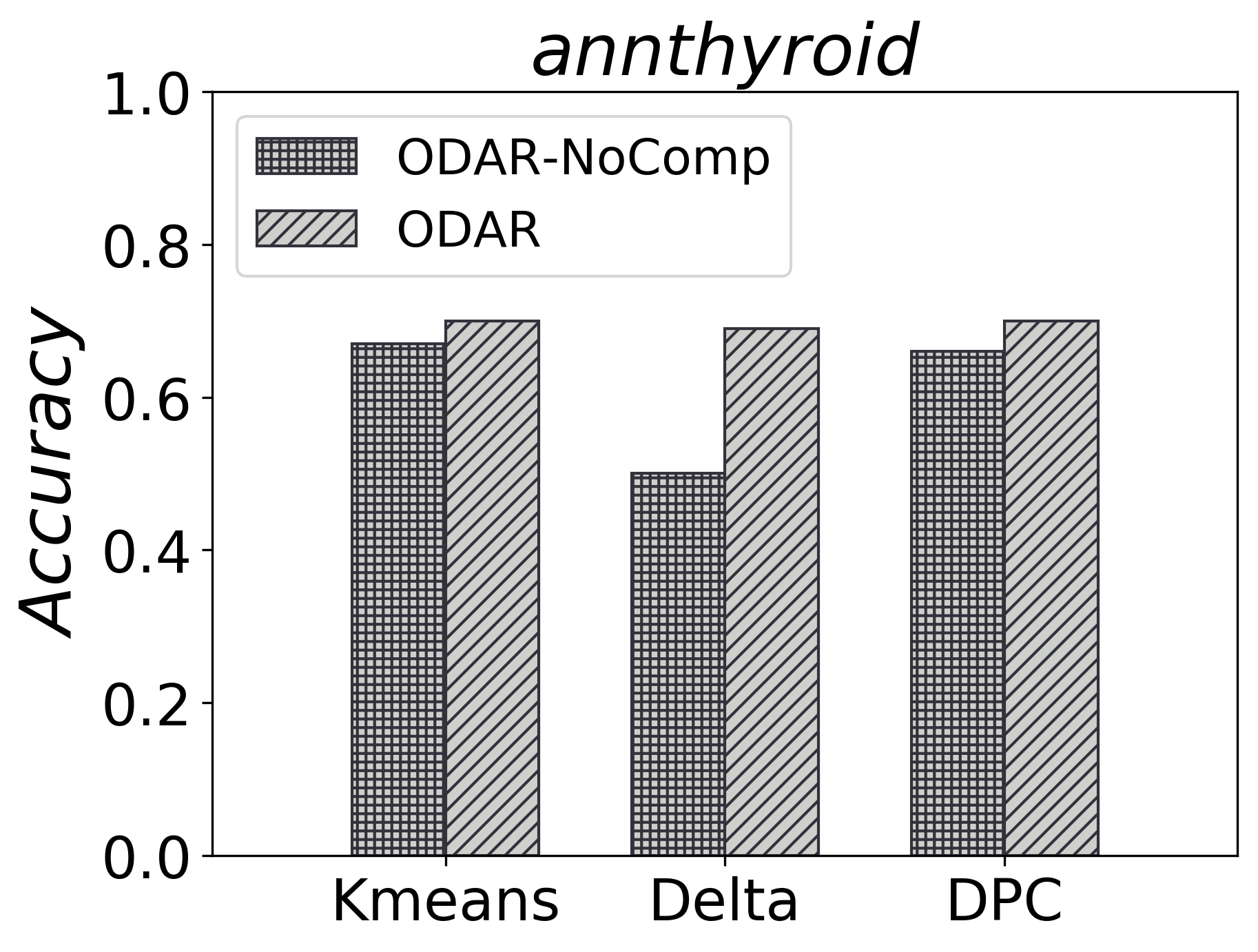}
  \quad
  \includegraphics[width=1.2in]{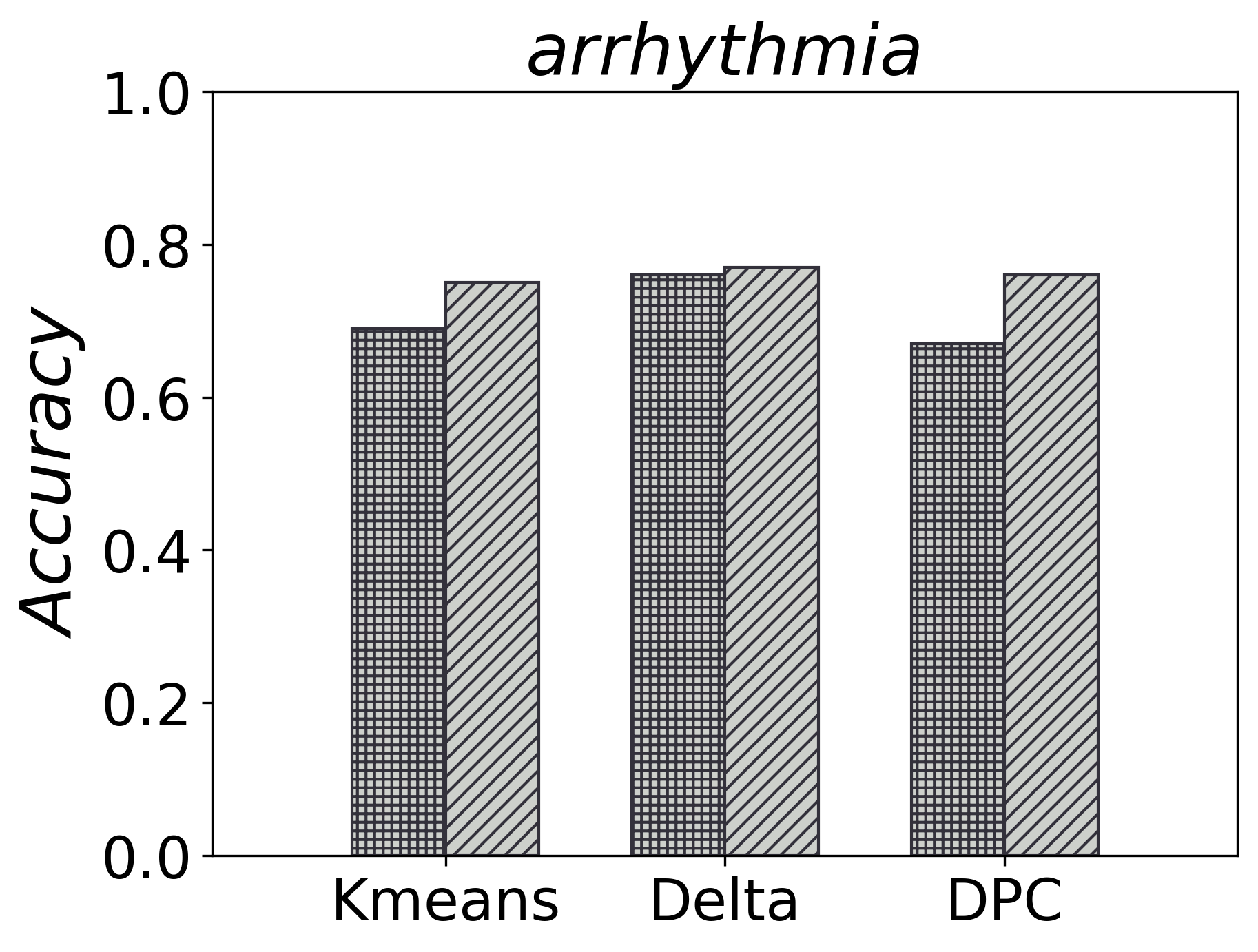}
  \quad
  \includegraphics[width=1.2in]{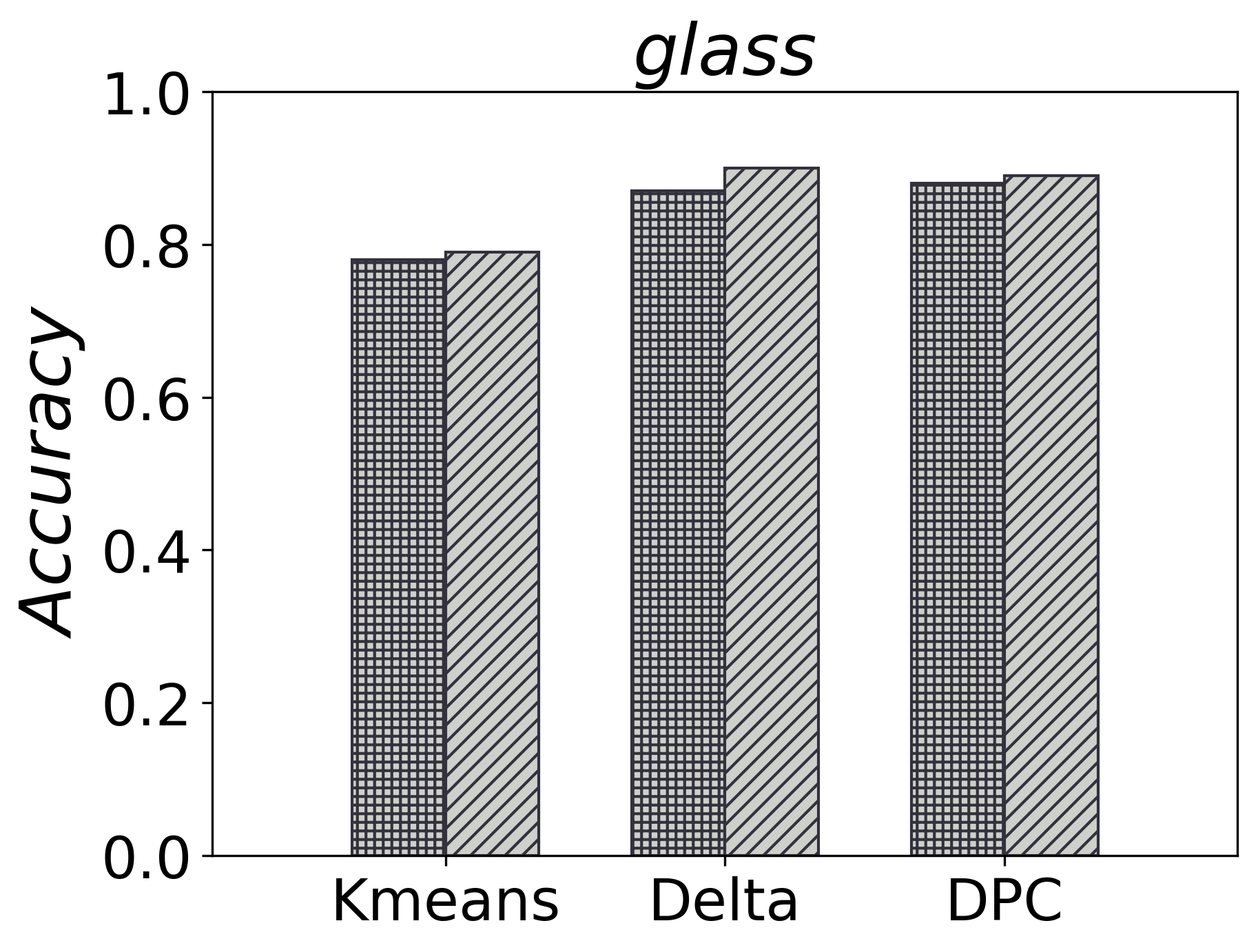}
  \quad
  \includegraphics[width=1.2in]{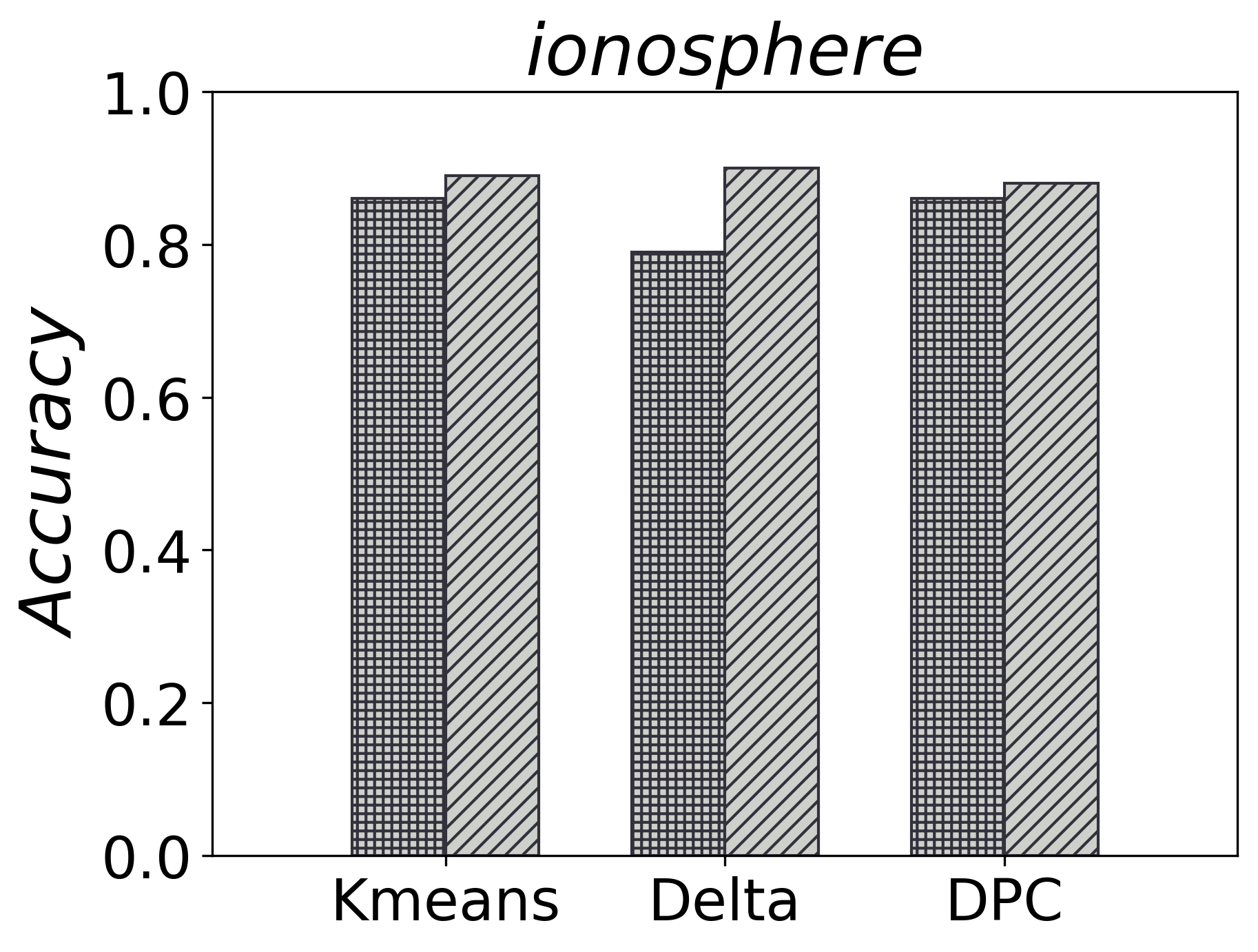}
  \quad
  \includegraphics[width=1.2in]{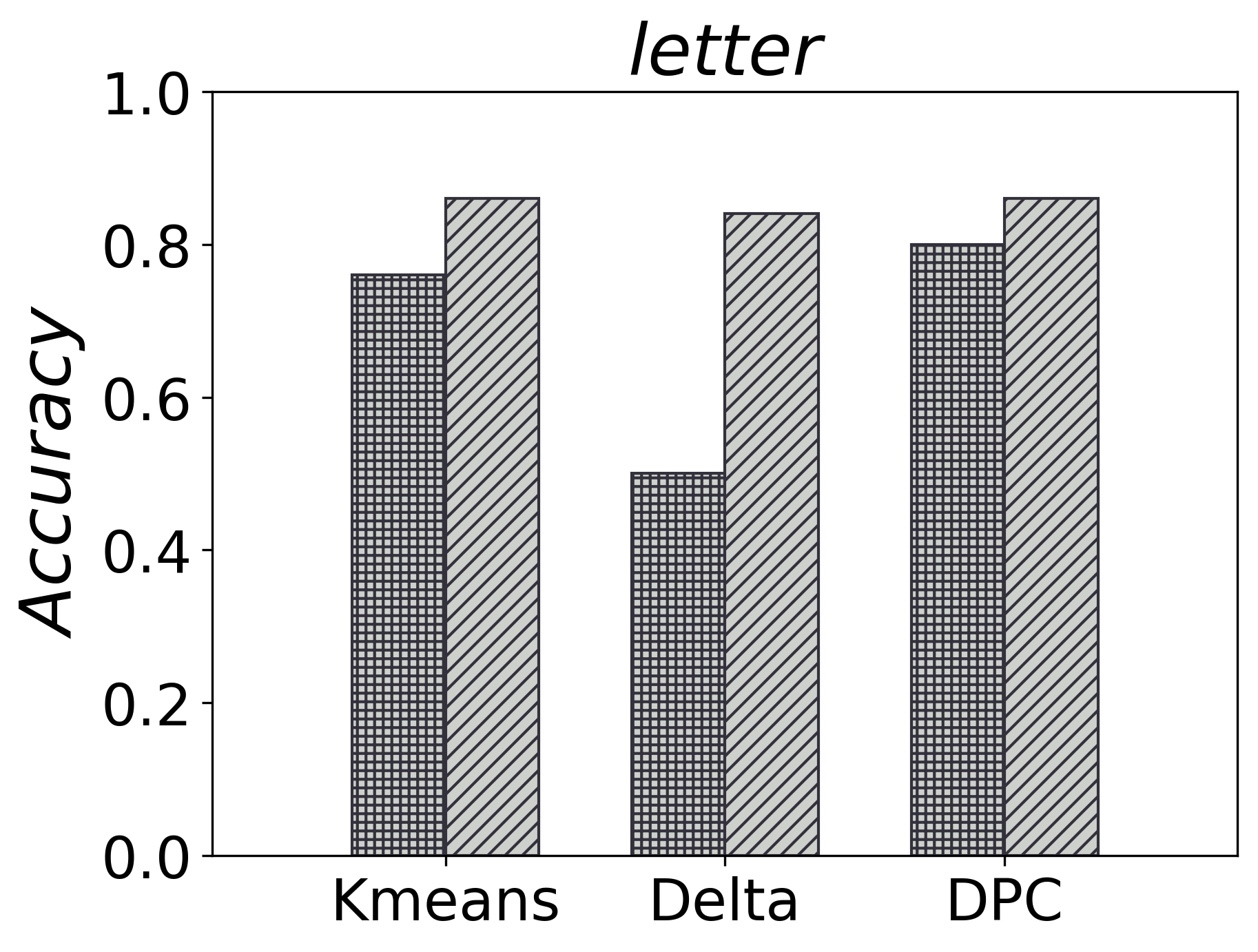}\\
  \includegraphics[width=1.2in]{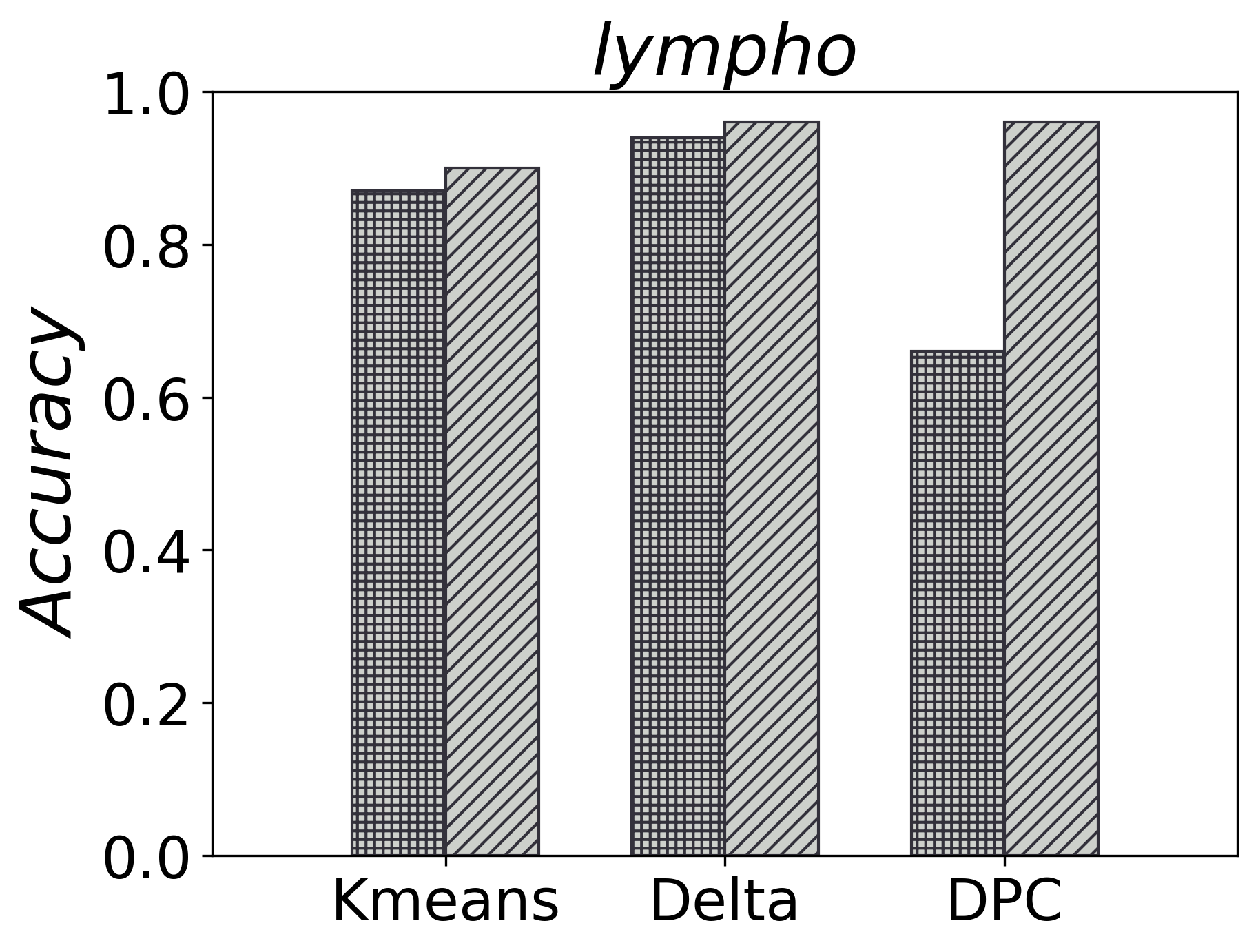}
  \quad
  \includegraphics[width=1.2in]{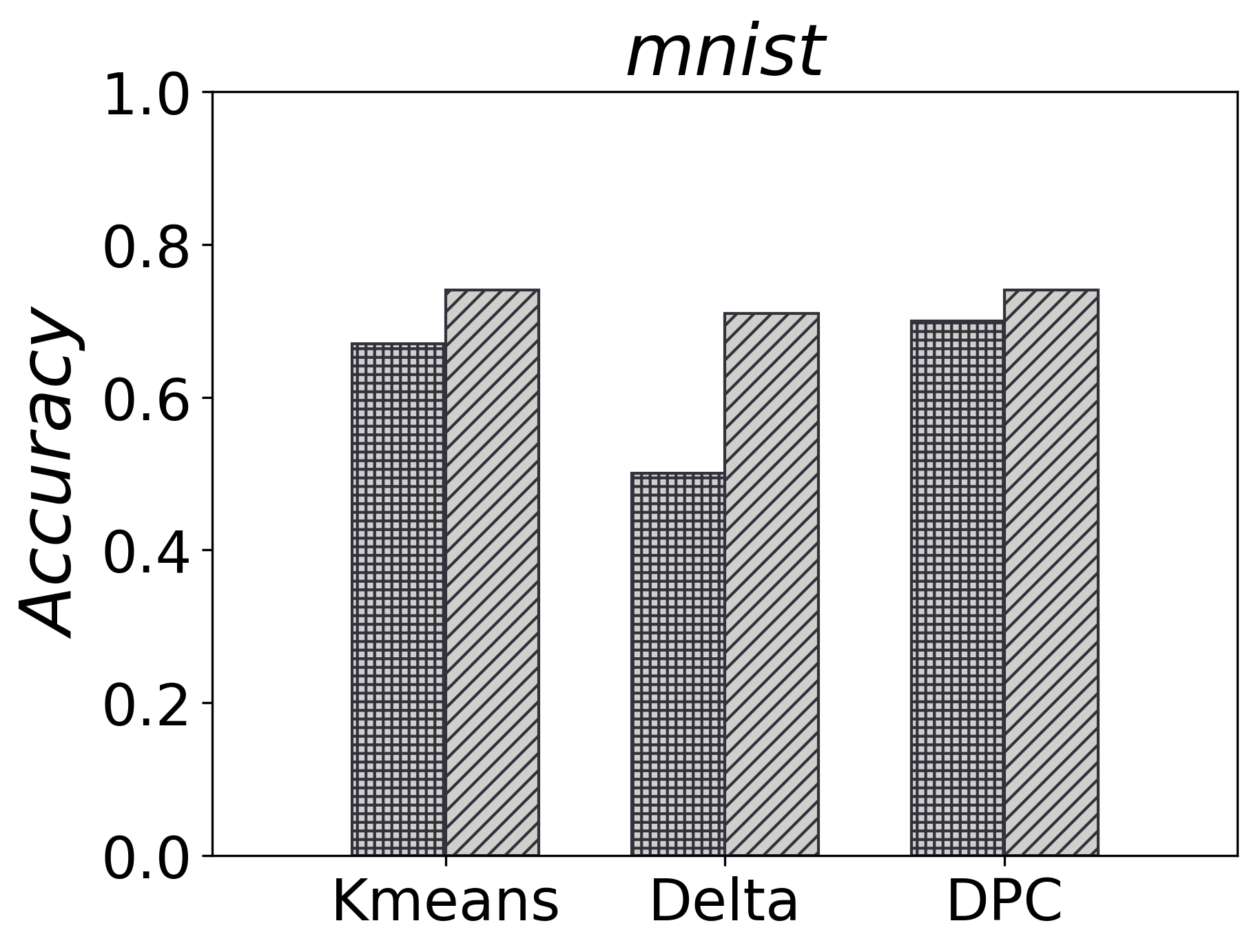}
  \quad
  \includegraphics[width=1.2in]{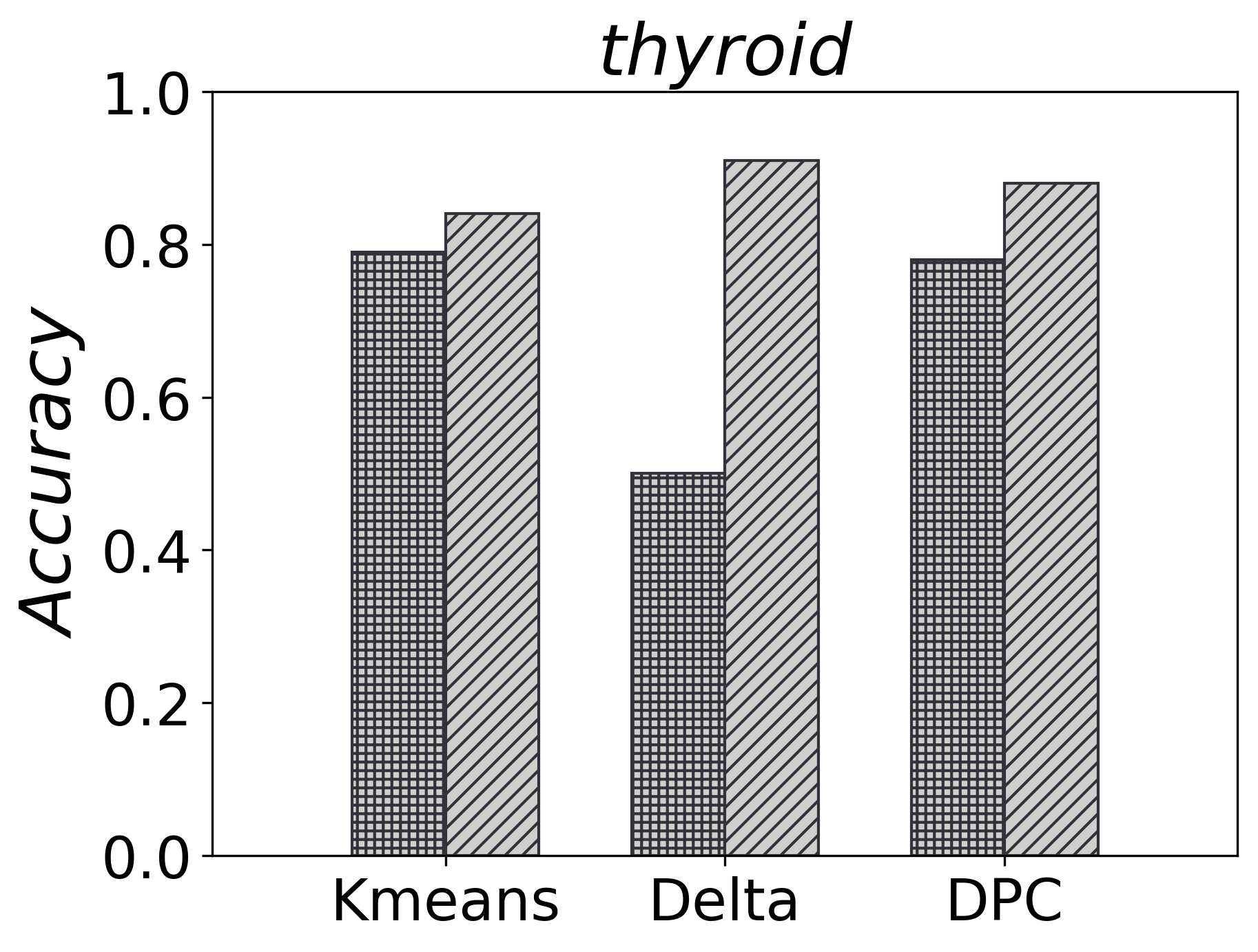}
  \quad
  \includegraphics[width=1.2in]{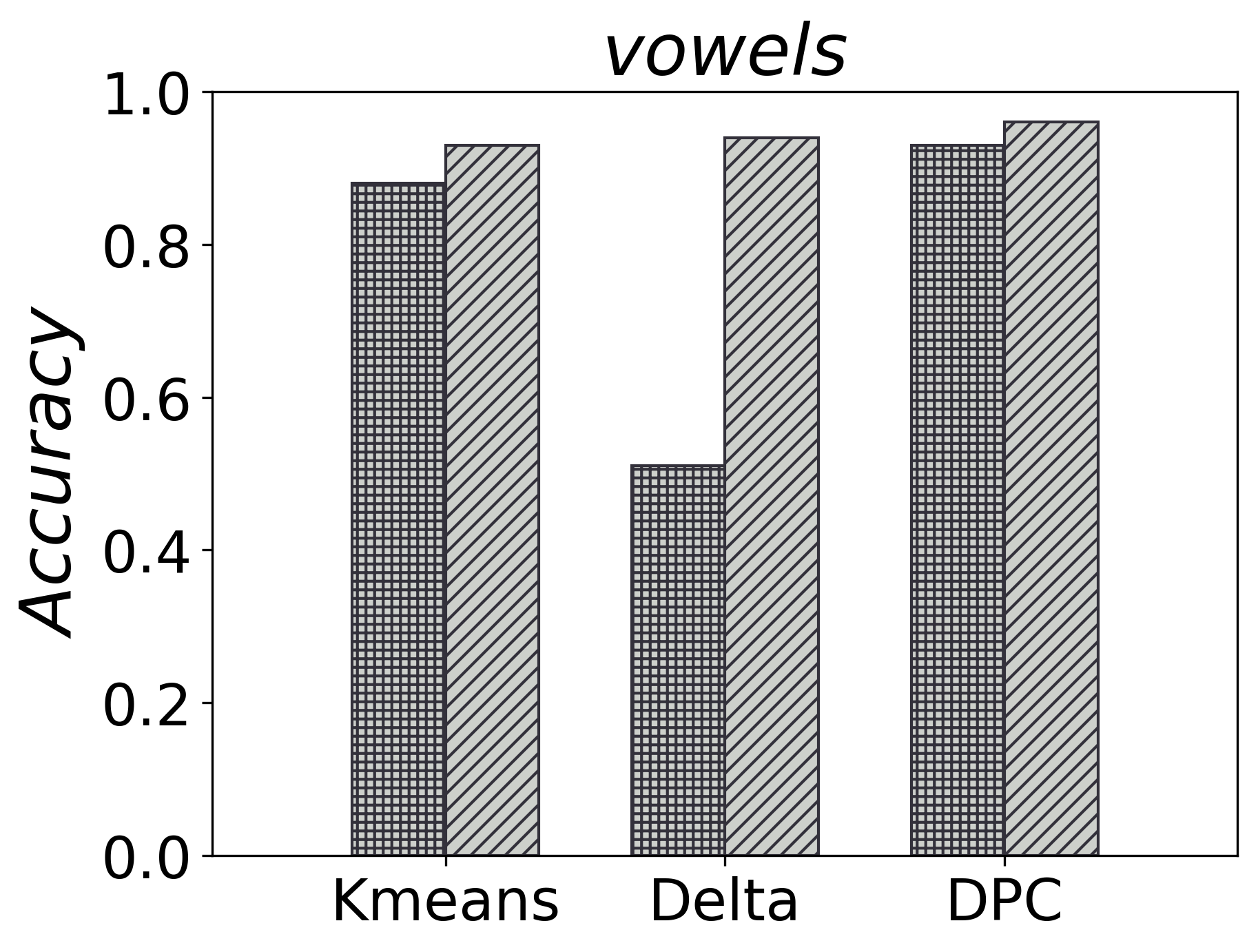}
  \quad
  \includegraphics[width=1.2in]{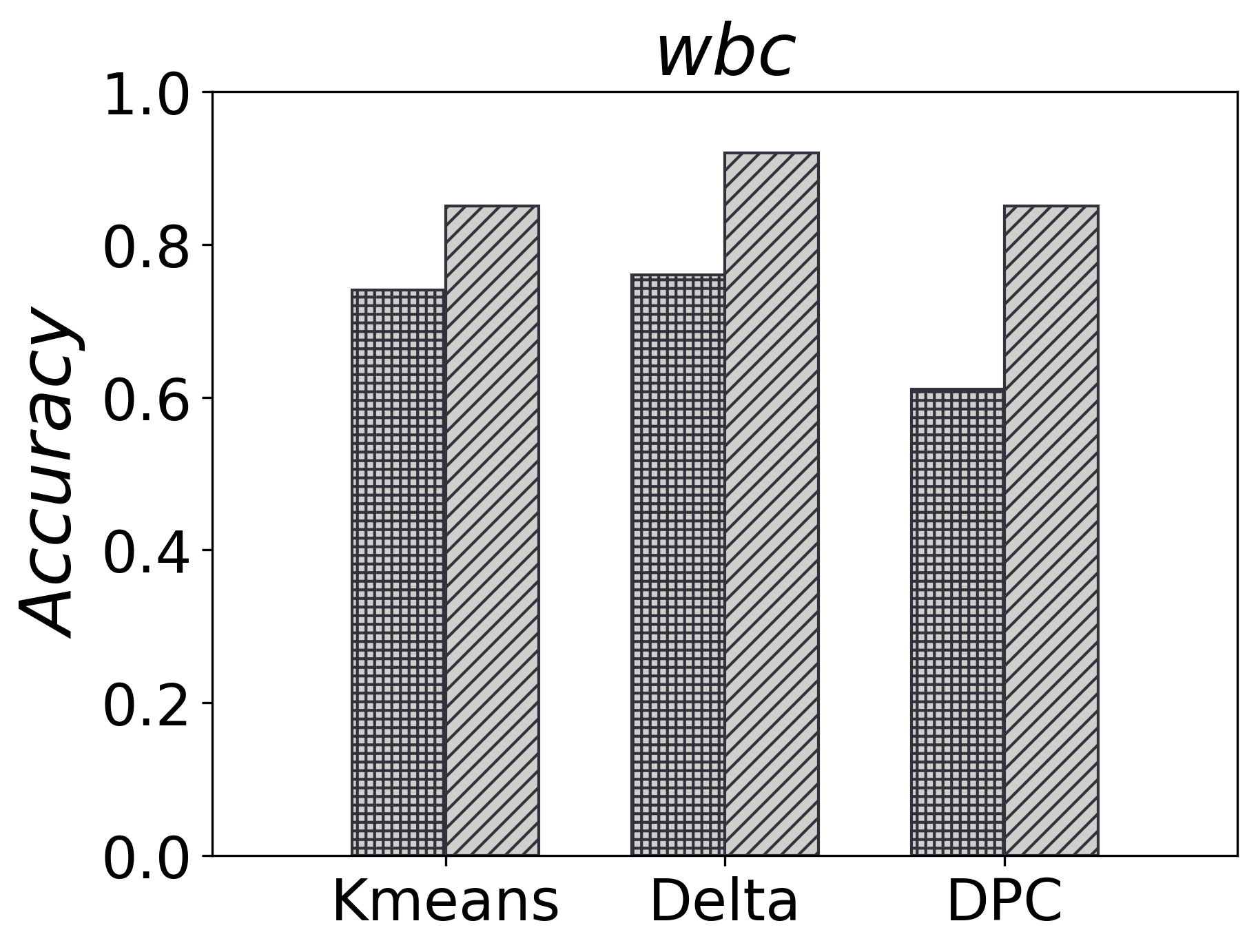}
  \caption{ODAR-NoComp is the ODAR that is not equipped with the component clustering strategy. The accuracies of the clustering algorithms with the help of ODAR are all higher than those of the clustering algorithms with the help of ODAR-NoComp. Therefore, the component clustering strategy is necessary for ODAR.}
  \label{fig:strategy}
\end{figure*}

\subsection{Ablation Experiments}
In this subsection, we tested the necessity of the shrinking method and the component clustering strategy for ODAR.

\subsubsection{Necessity of the Component Clustering  Strategy}
\label{sec:component clystering strategy}
To verify the necessity of the component clustering strategy, we compare ODAR with ODAR-NoComp. ODAR-NoComp is the ODAR that is not equipped with the component clustering strategy. Figure \ref{fig:strategy} shows the accuracies of the three clustering algorithms with the help of ODAR and ODAR-NoComp on 10 real-world datasets. The experimental results show that the accuracies of the clustering algorithms with the help of ODAR are all higher than those of the clustering algorithms with the help of ODAR-NoComp. For example, on \emph{vowels} dataset, the accuracy of the \emph{delta clustering} with the help of ODAR-NoComp is only 0.51, but the accuracy of the \emph{delta clustering} with the help of ODAR is as high as 0.94; on \emph{lympho} dataset, the accuracy of the \emph{DPC} with the help of ODAR-NoComp is only 0.66, but the accuracy of the \emph{DPC} with the help of ODAR is as high as 0.96; on \emph{letter} dataset, the accuracy of the \emph{kmeans} with the help of ODAR-NoComp is 0.76, but the accuracy of the \emph{kmeans} with the help of ODAR is as high as 0.86. In terms of average accuracy, the clustering algorithms with the help of ODAR is 0.84, and the clustering algorithms with the help of ODAR-NoComp is 0.73. The component clustering strategy improves the outlier detection accuracy by 15\%. Therefore, the component clustering strategy is necessary for ODAR.

\begin{figure}[h]
  \centering
  \includegraphics[width=3in]{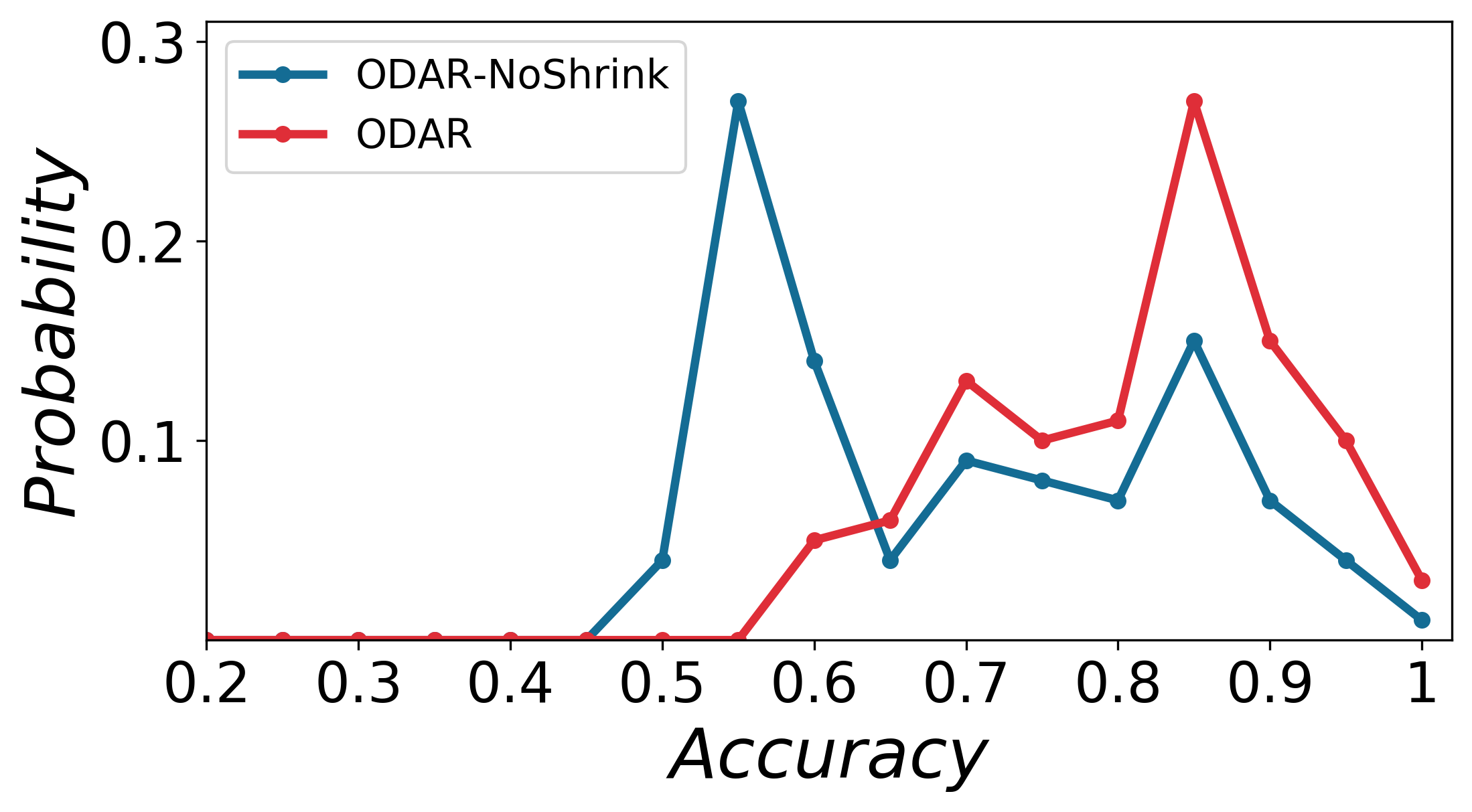}
  \caption{ODAR-NoShrink is the ODAR that is not equipped with the shrinking method. The \emph{delta clustering} with the help of ODAR has a higher probability of obtaining higher accuracy than the \emph{delta clustering} with the help of ODAR-NoShrink. Therefore, the shrinking method is necessary for ODAR.}
  \label{fig:shrink}
\end{figure}

\subsubsection{Necessity of the Shrinking Method}
\label{sec:the shrink method}
\emph{Delta clustering} is a parameter-sensitive clustering algorithm, different input parameter values may produce completely opposite clustering results. Here, we verify whether the shrinking method can improve the adaptability of ODAR to \emph{delta clustering}. We refer to the ODAR that is not equipped with the shrinking method as ODAR-NoShrink. We compare ODAR with ODAR-NoShrink on 10 real-world datasets. Specifically, within the same parameter interval (containing 100 parameter values), we execute the \emph{delta clustering} with the help of ODAR and the \emph{delta clustering} with the help of ODAR-NoShrink, respectively. Next, we count the probability of different accuracies, as shown in Figure \ref{fig:shrink}. The experimental results show that the \emph{delta clustering} with the help of ODAR has a higher probability of obtaining higher accuracy than the \emph{delta clustering} with the help of ODAR-NoShrink. The reason ODAR excels is that the shrinking method facilitates \emph{delta clustering} to identify the outlier-cluster in the ODAR space, thus ensuring the stability of ODAR. Therefore, the shrinking method is necessary for ODAR.

\subsection{Robustness Experiments}
Outlier detection is easily affected by some factors, such as the distribution and number of outliers, or unbalanced density. In this subsection, we tested the robustness of ODAR in different scenarios, and verified whether these robustness will change with different clustering algorithms.

\begin{figure*}[h]
  \centering
  \includegraphics[width=1.3in]{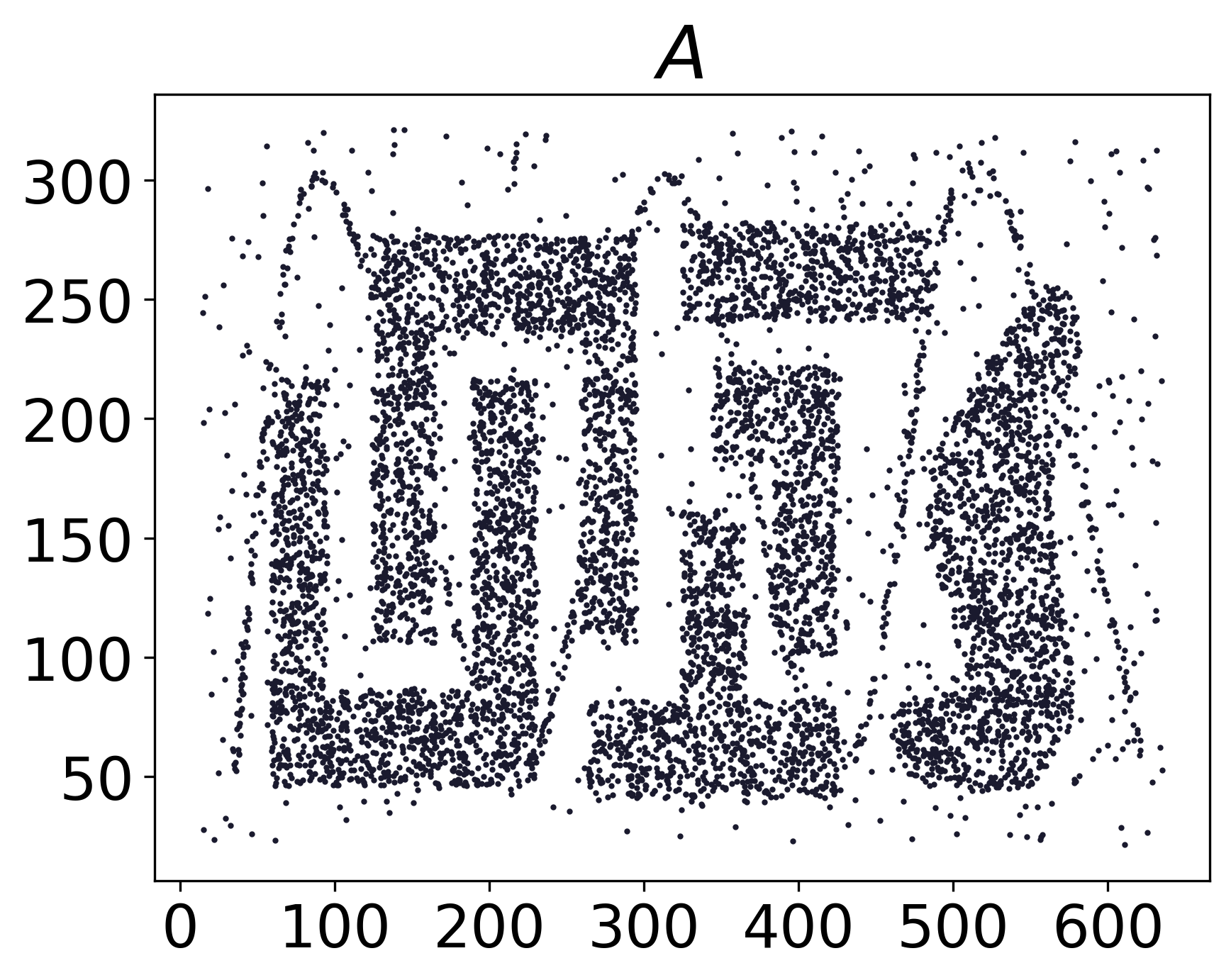}
  \quad\quad
  \includegraphics[width=1.3in]{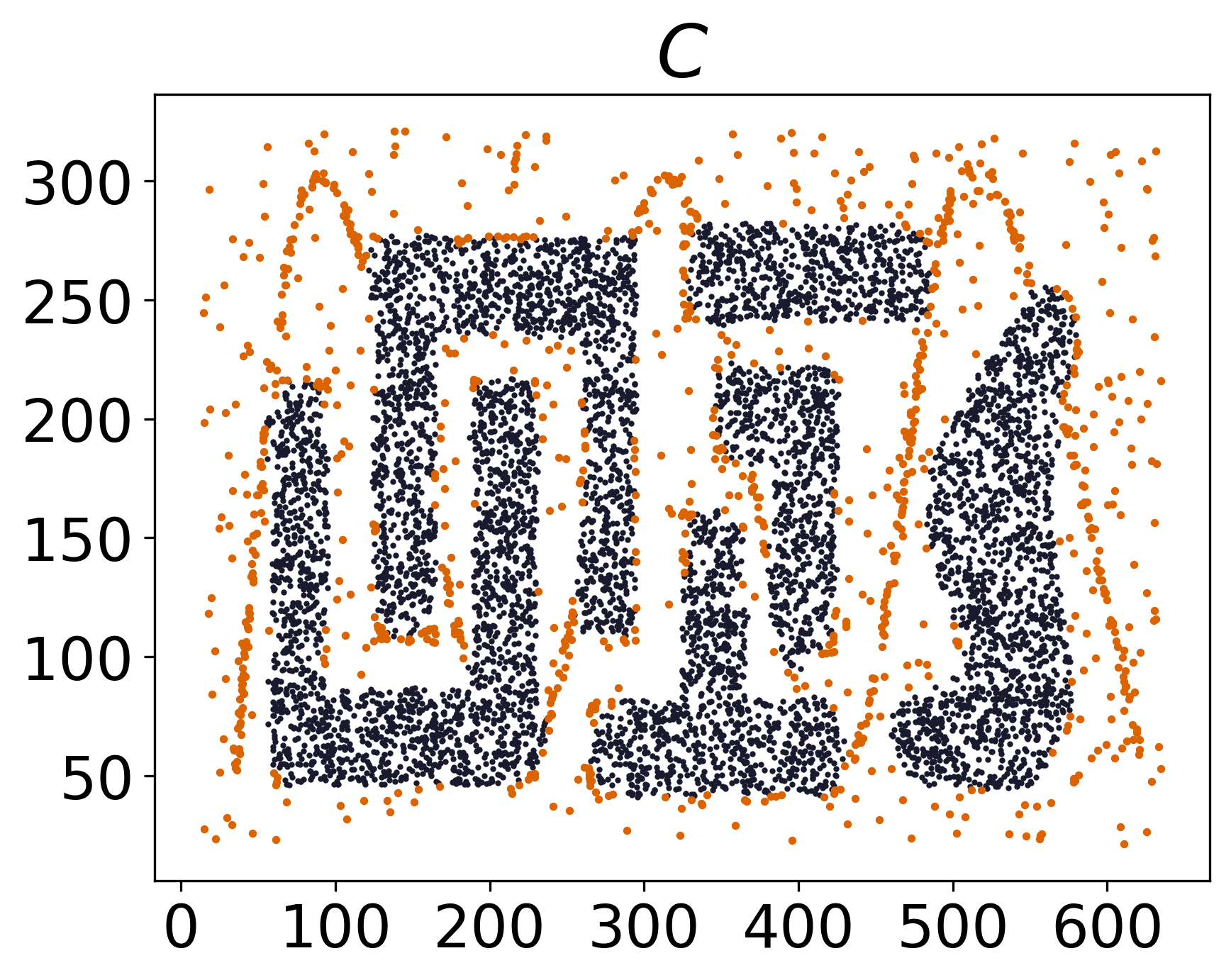}
  \quad\quad
  \includegraphics[width=1.3in]{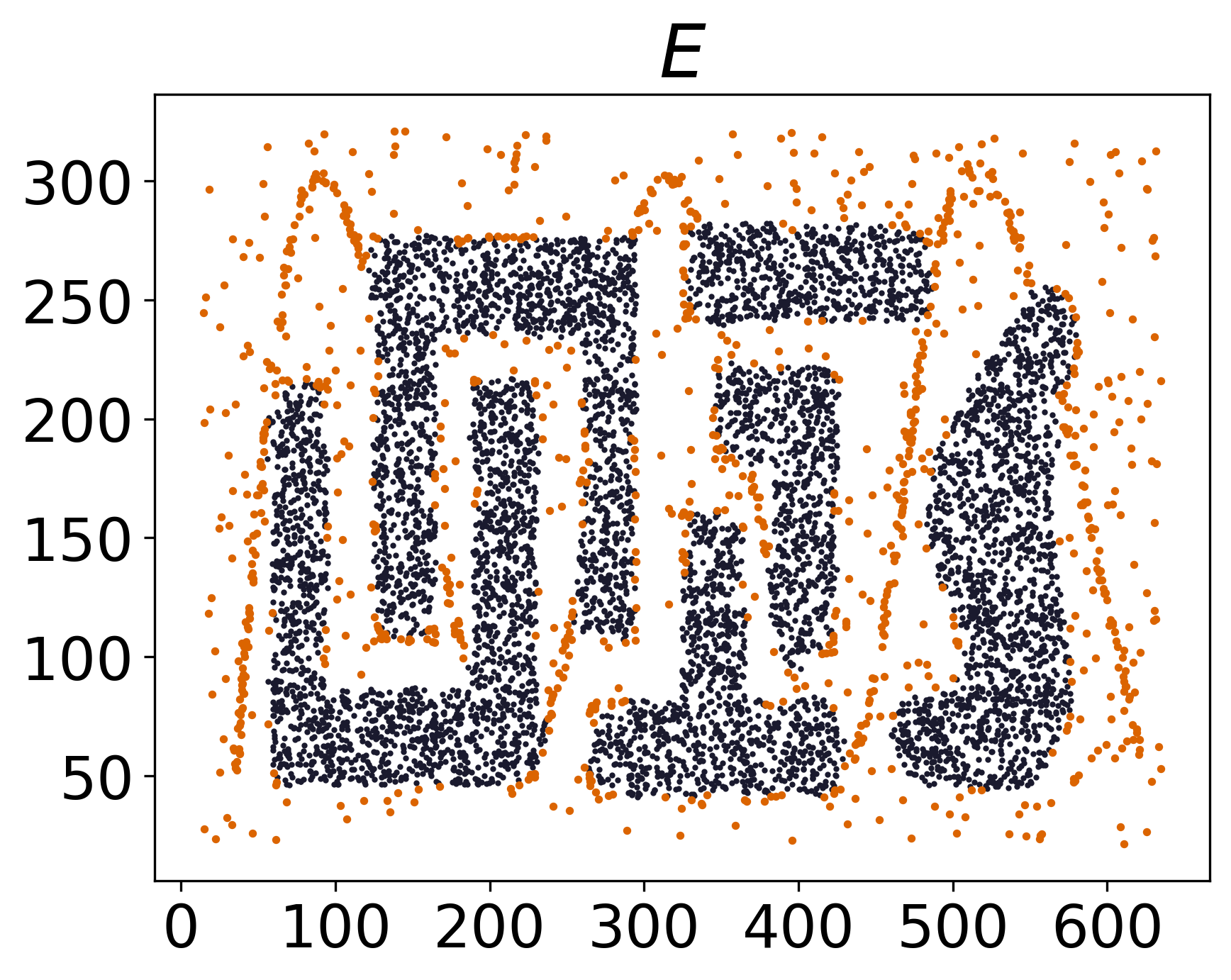}
  \quad\quad
  \includegraphics[width=1.3in]{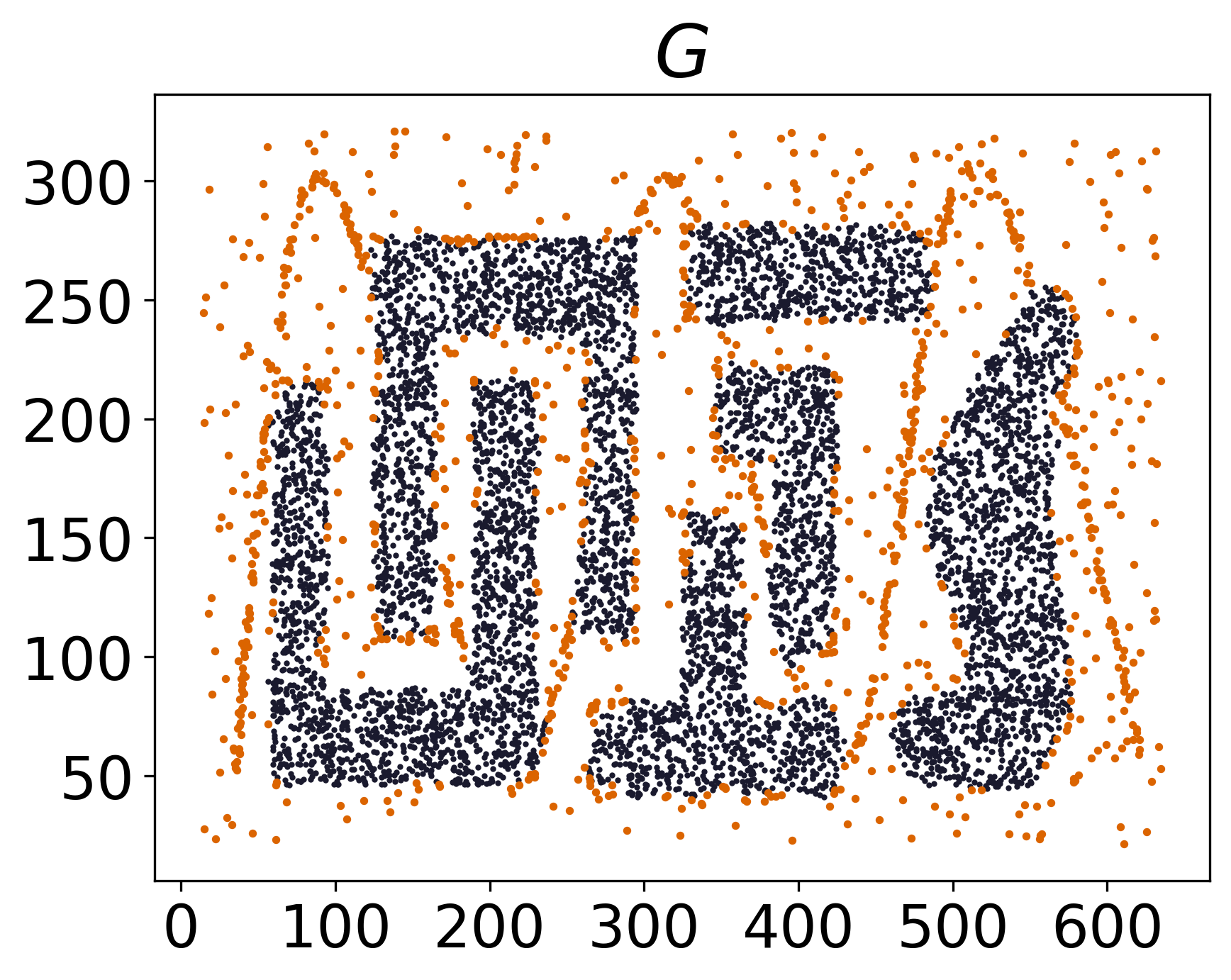}\\
  \includegraphics[width=1.3in]{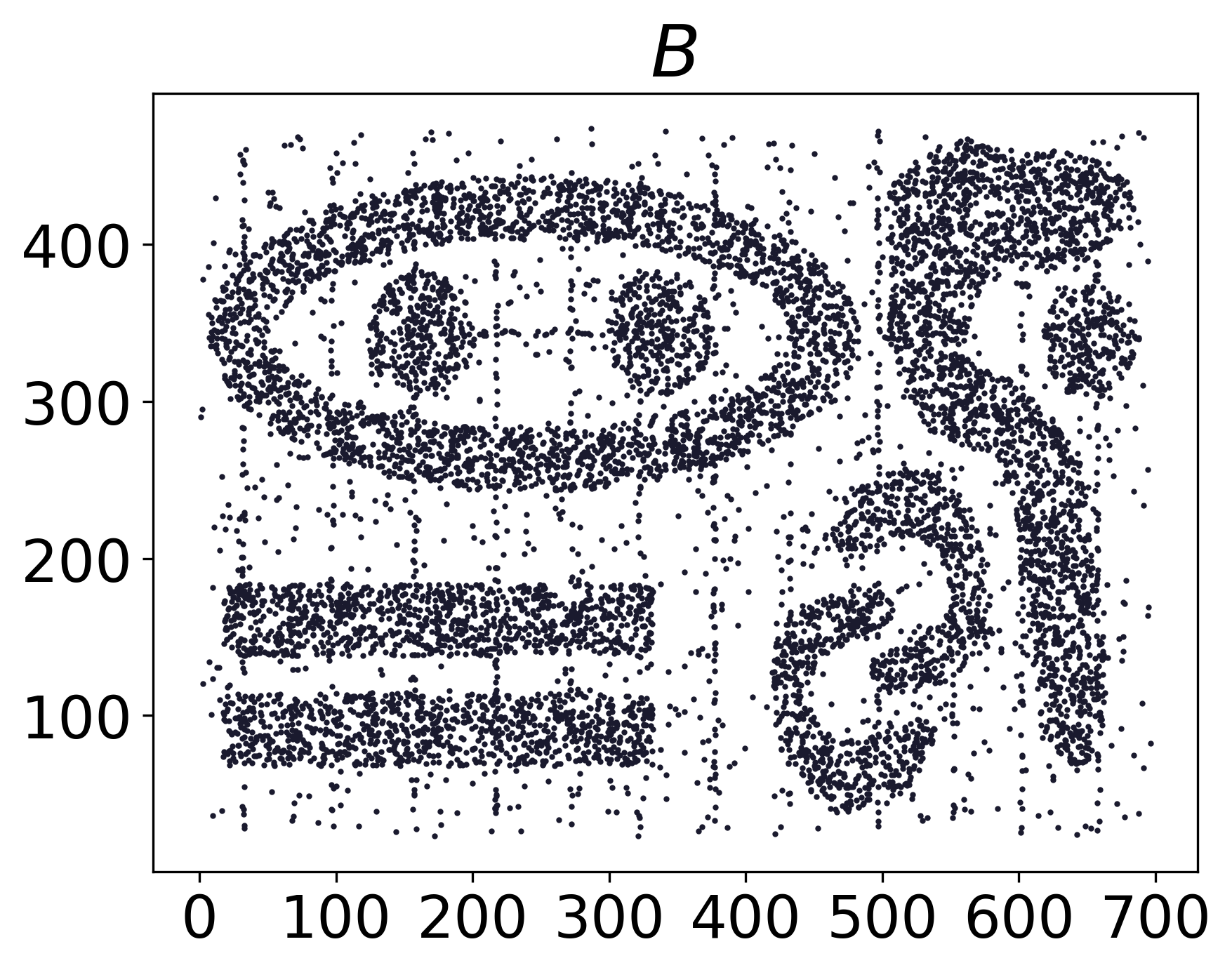}
  \quad\quad
  \includegraphics[width=1.3in]{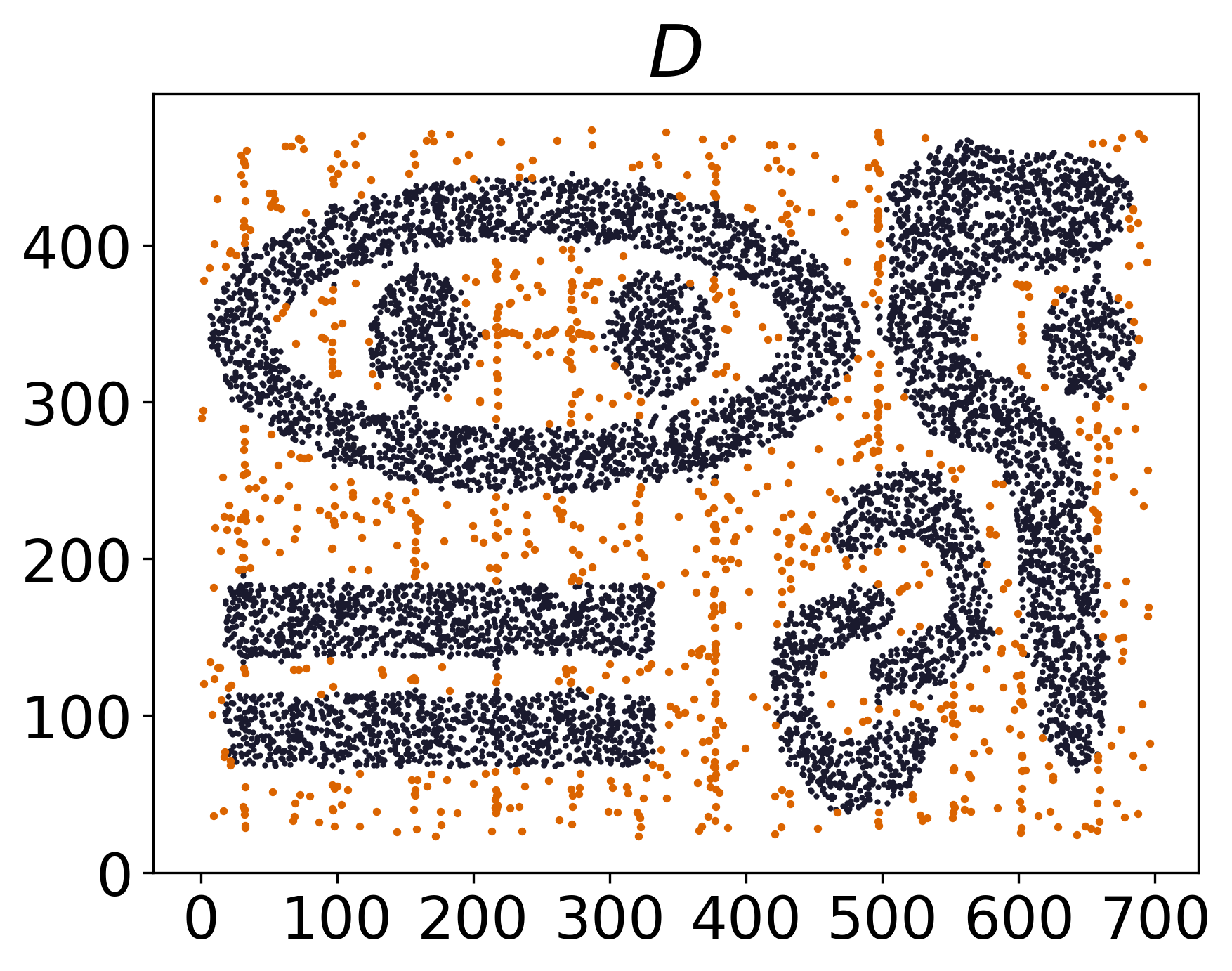}
  \quad\quad
  \includegraphics[width=1.3in]{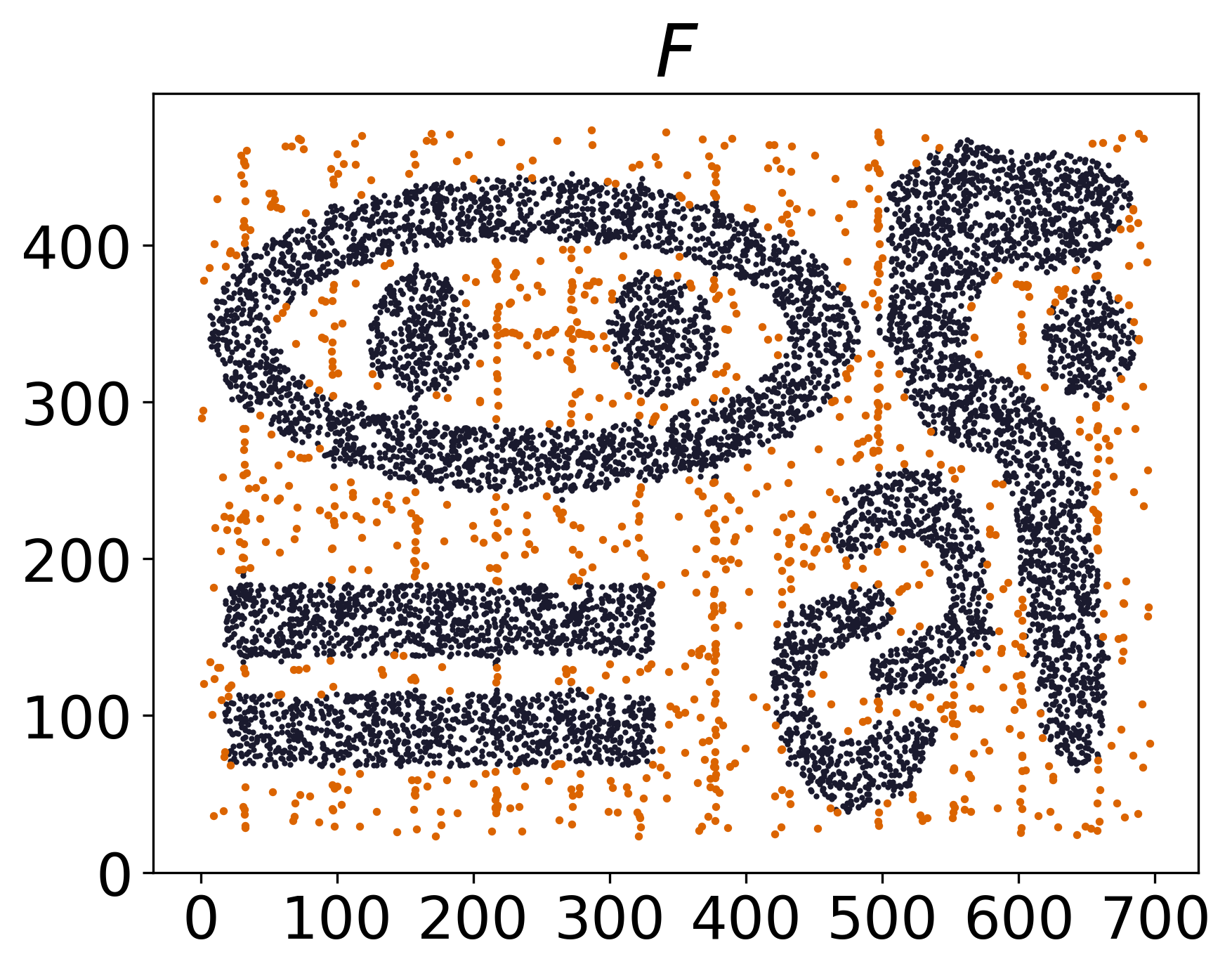}
  \quad\quad
  \includegraphics[width=1.3in]{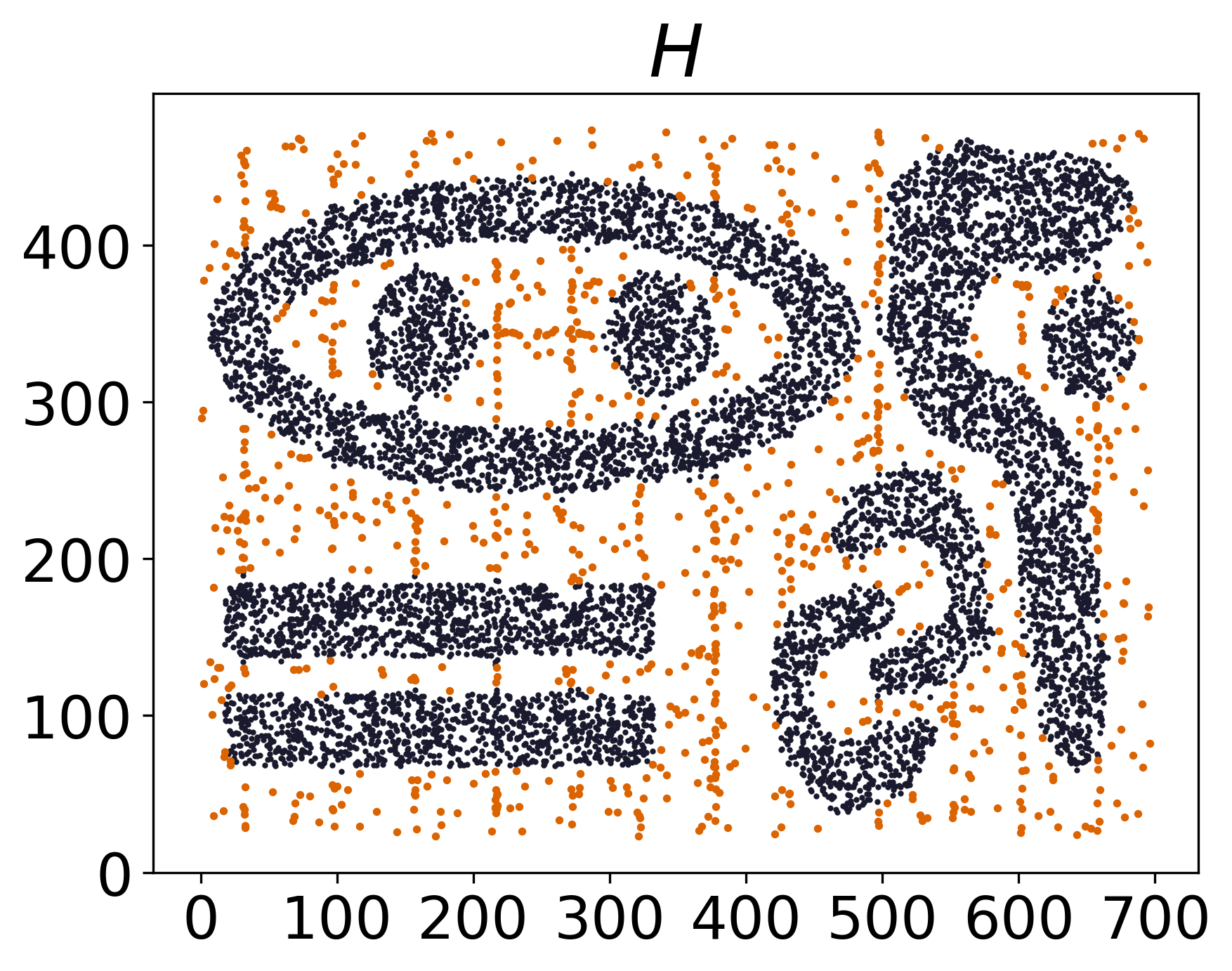}
  \caption{With the help of ODAR, the outliers detected by \emph{delta clustering} ($C,D$), \emph{kmeans} ($E,F$), and \emph{DPC} ($G,H$) for \emph{t4.8k} ($A$) and \emph{t7.10k} ($B$) datasets. The three clustering algorithms successfully detect ‘sin’ outliers and ‘vertical lines’ outliers interspersed between normal objects. Evidently, ODAR's performance is reliable in detecting the outliers with complex distribution.}
  \label{fig:shape}
\end{figure*}

\begin{figure*}[h]
  \centering
  \includegraphics[width=1.2in]{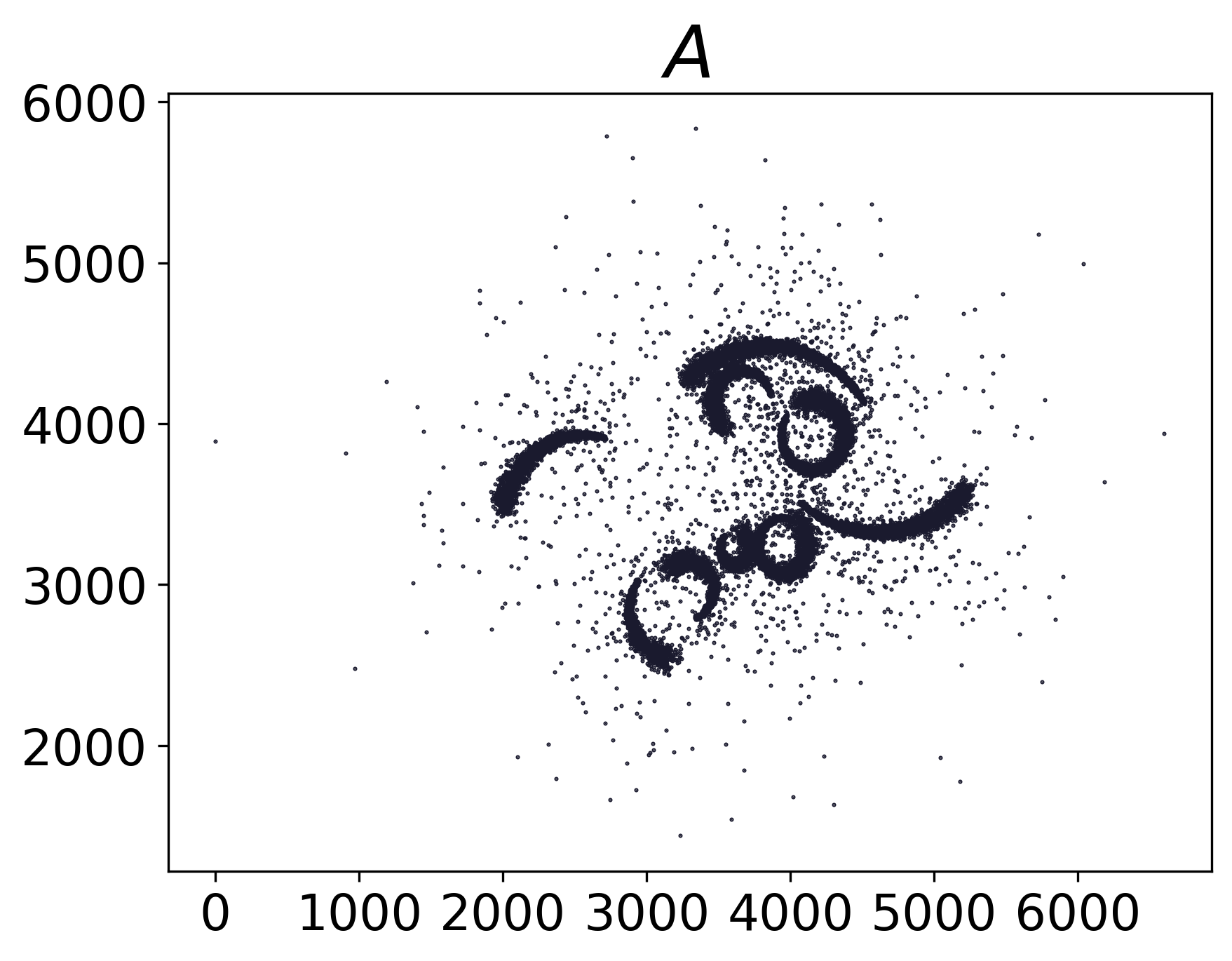}
  \quad\quad
  \includegraphics[width=1.2in]{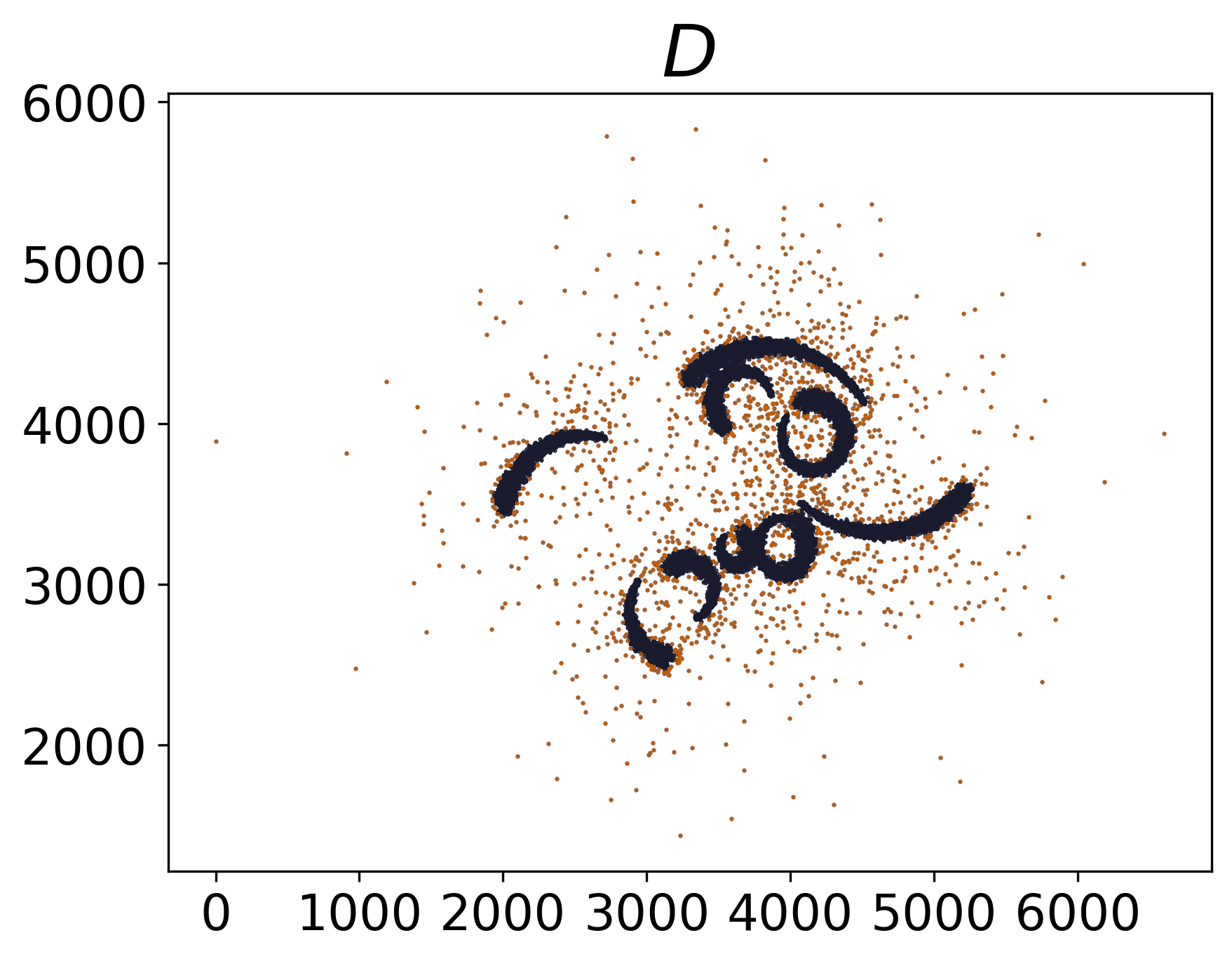}
  \quad\quad
  \includegraphics[width=1.2in]{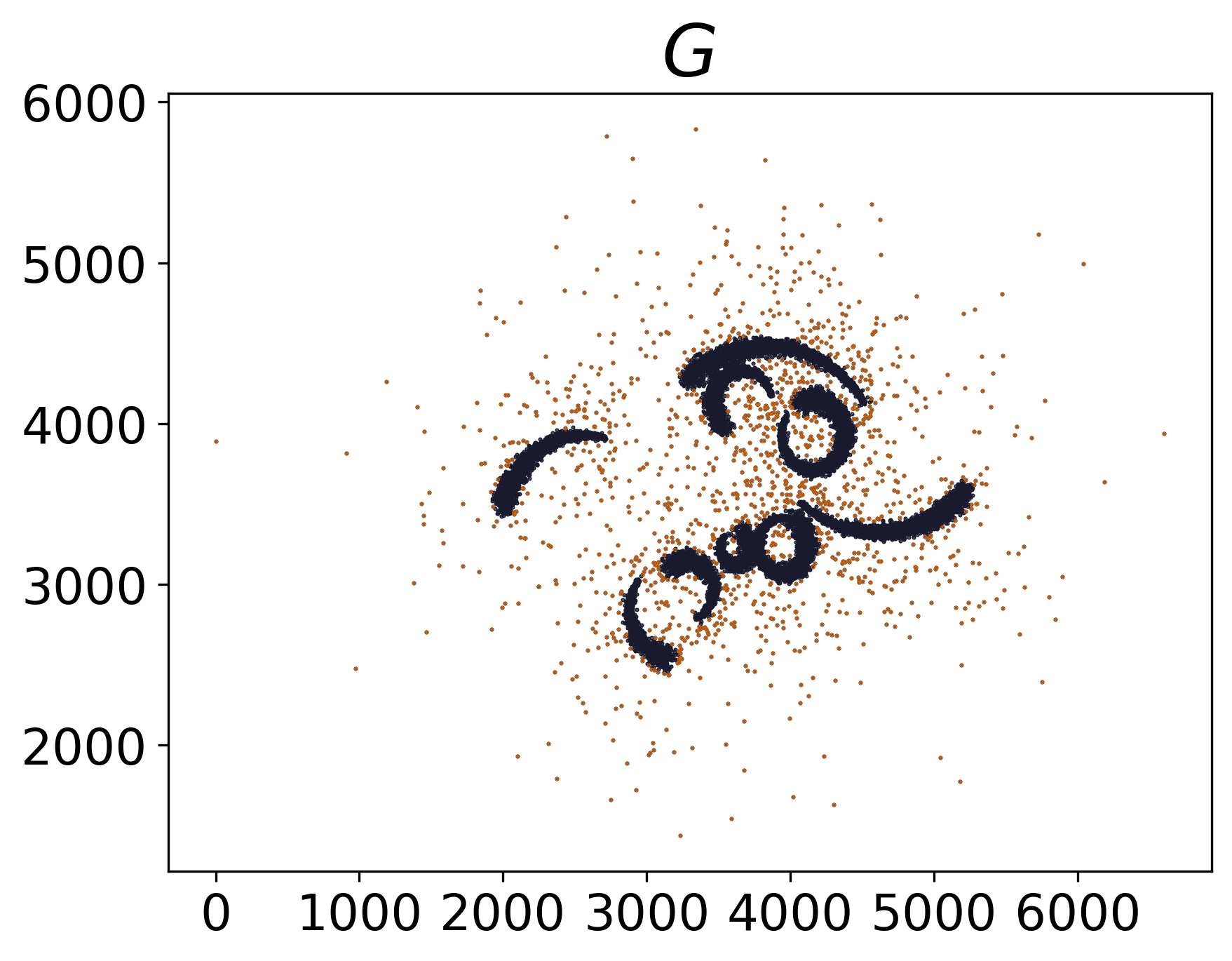}
  \quad\quad
  \includegraphics[width=1.2in]{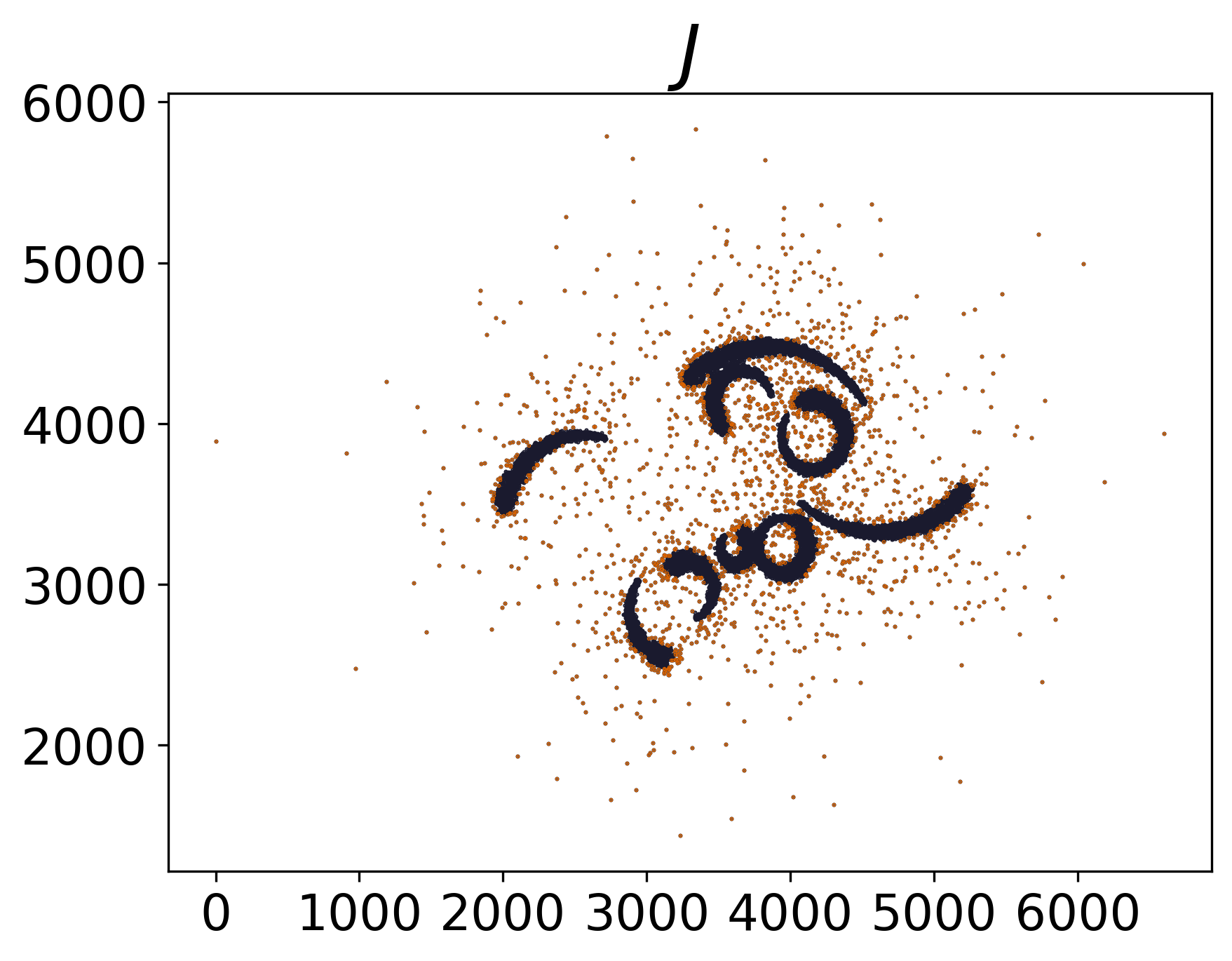}\\
  \includegraphics[width=1.2in]{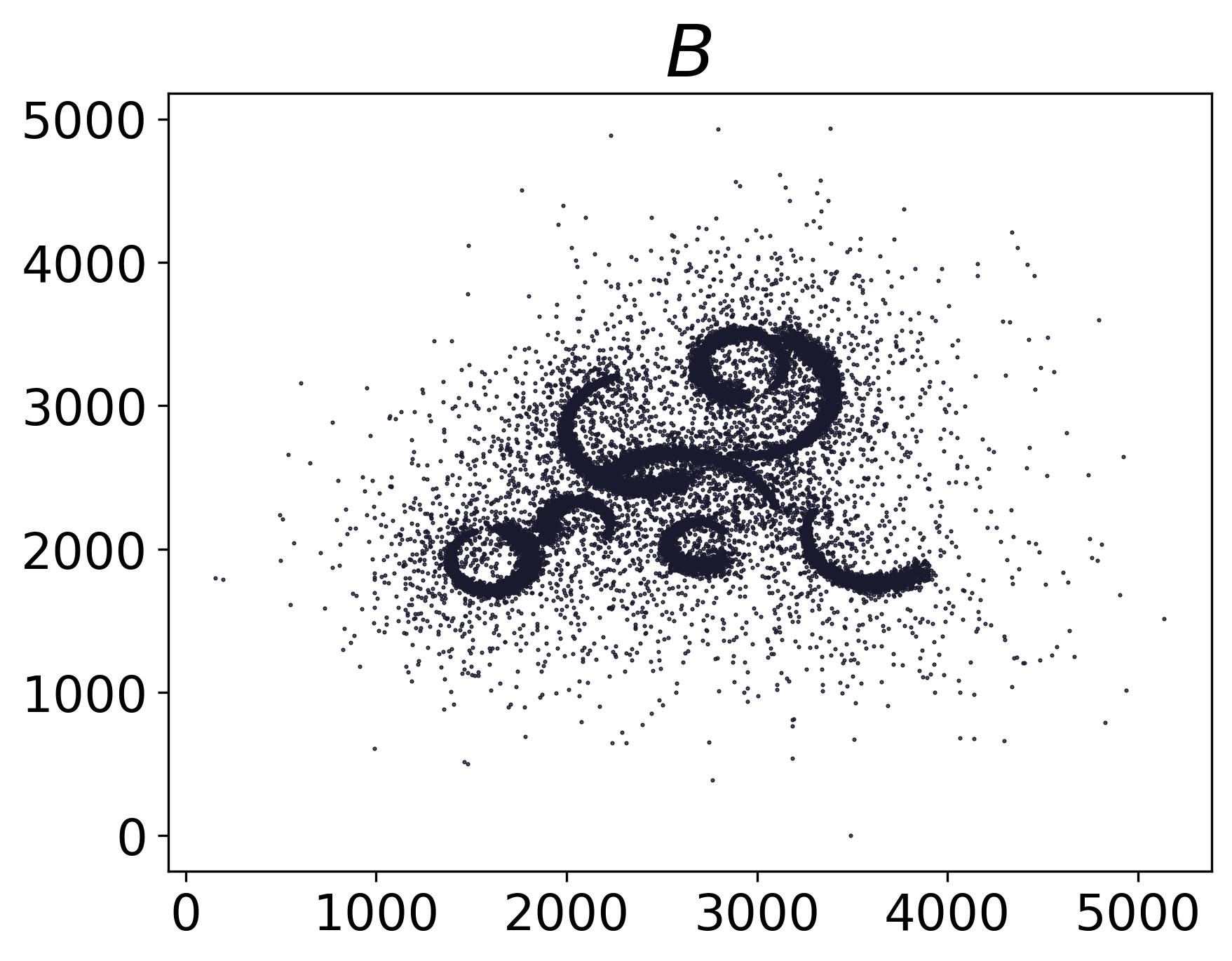}
  \quad\quad
  \includegraphics[width=1.2in]{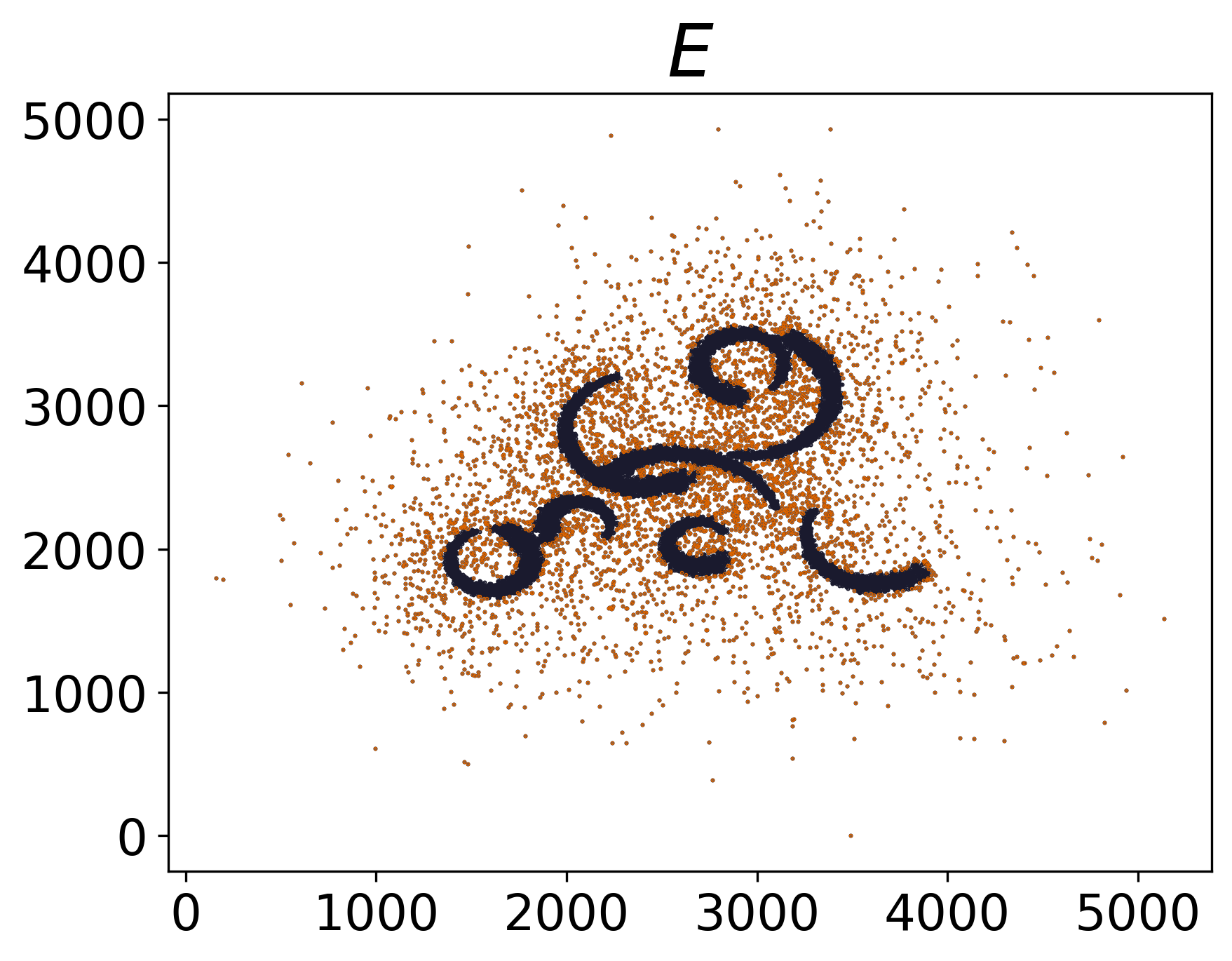}
  \quad\quad
  \includegraphics[width=1.2in]{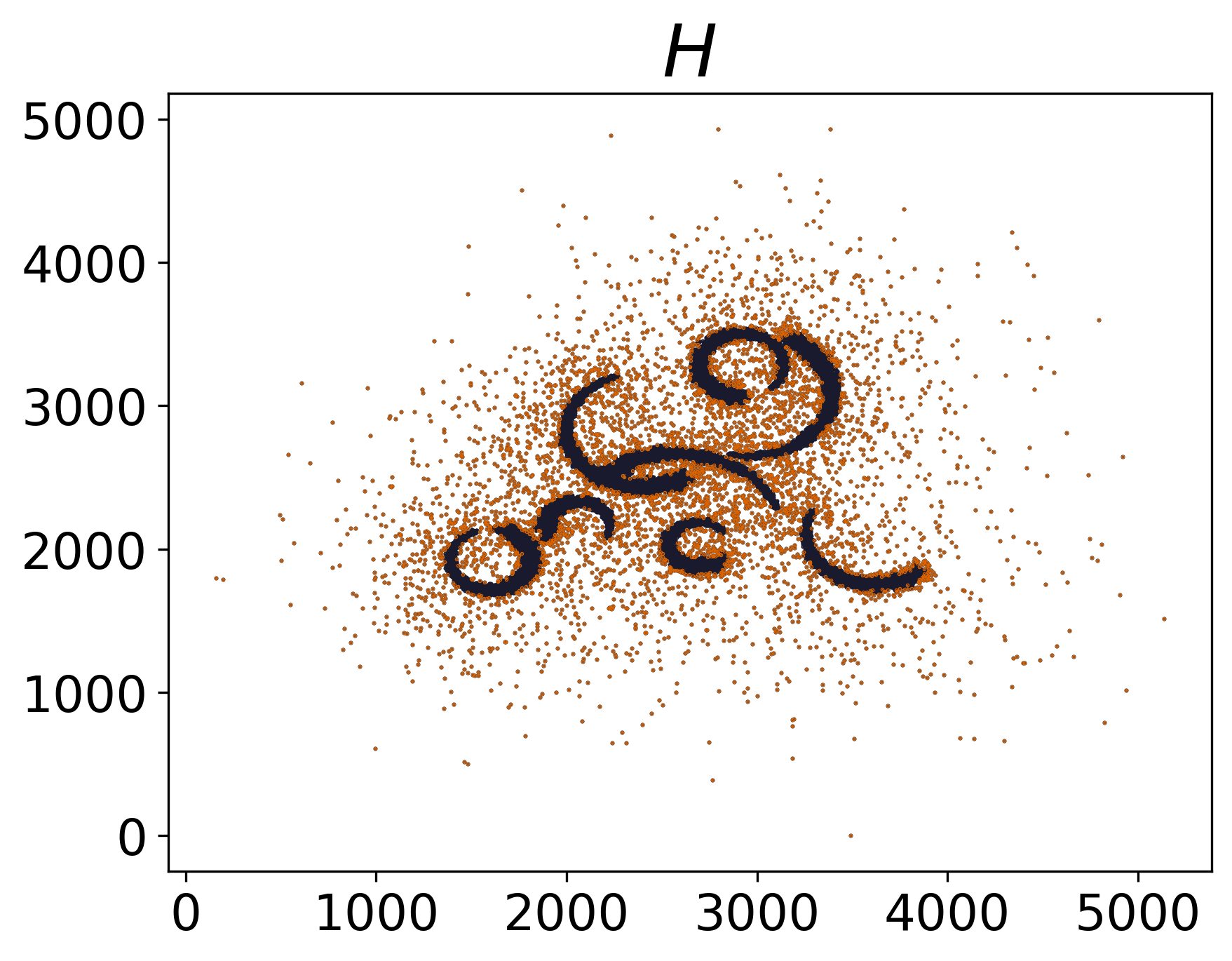}
  \quad\quad
  \includegraphics[width=1.2in]{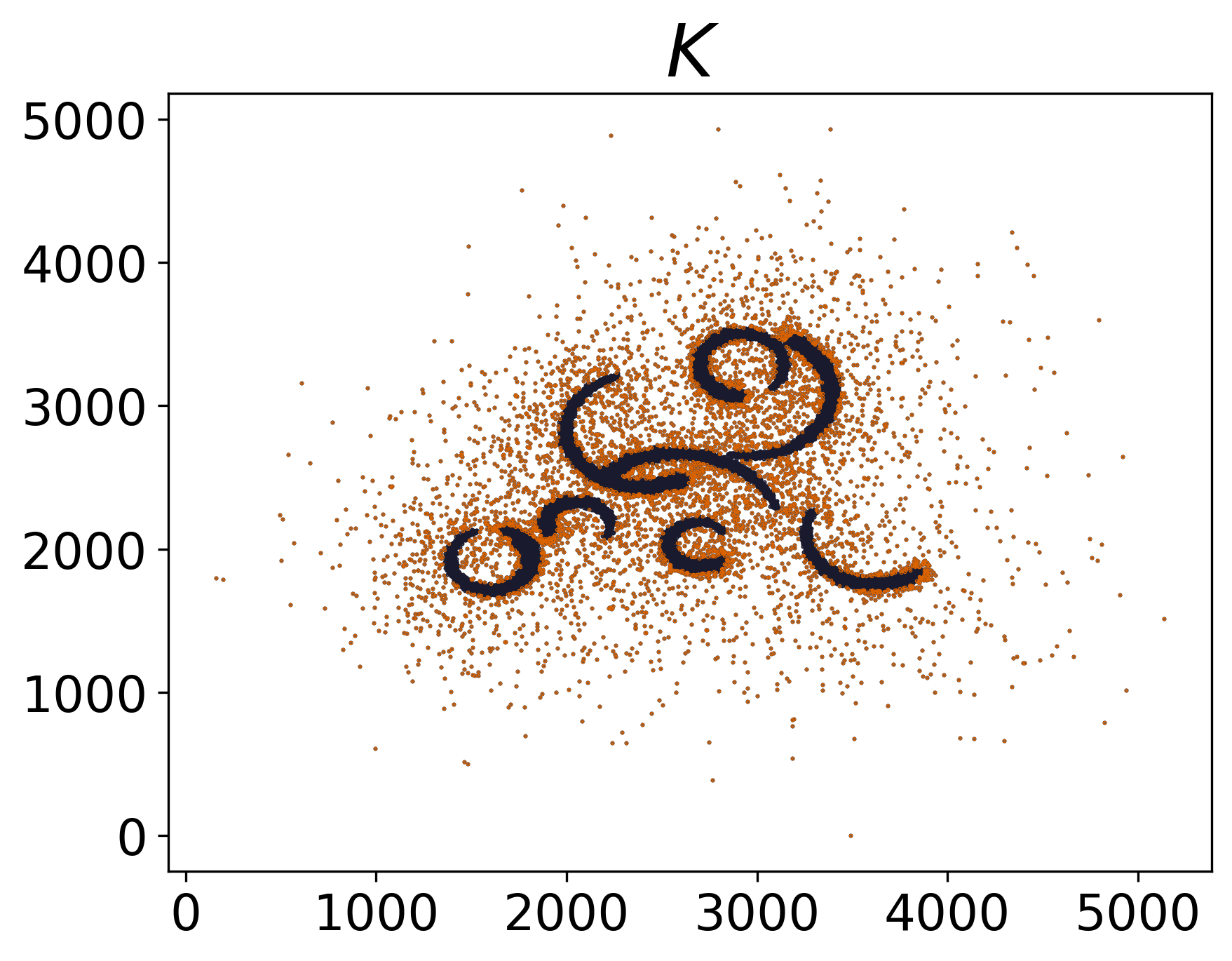}\\
  \includegraphics[width=1.2in]{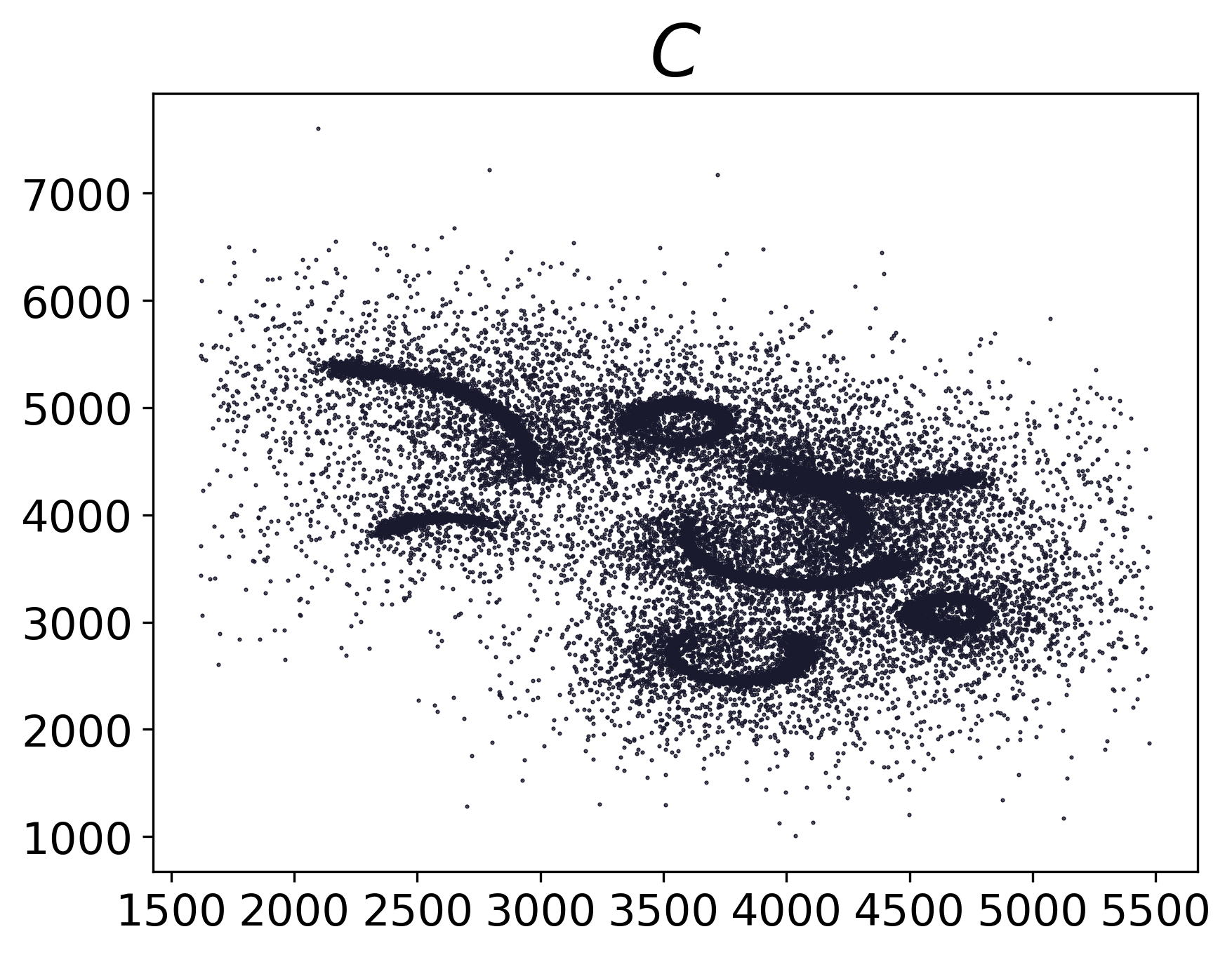}
  \quad\quad
  \includegraphics[width=1.2in]{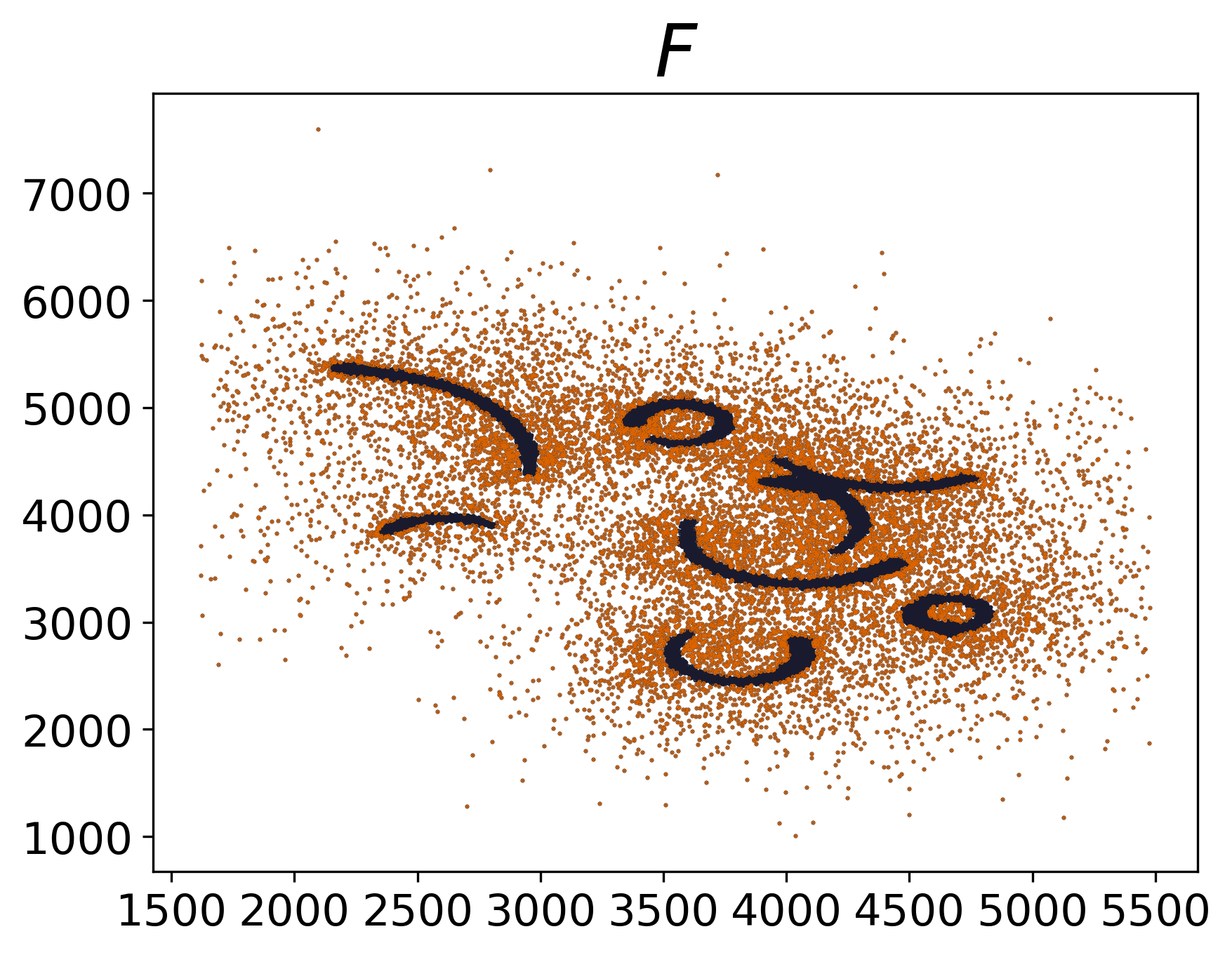}
  \quad\quad
  \includegraphics[width=1.2in]{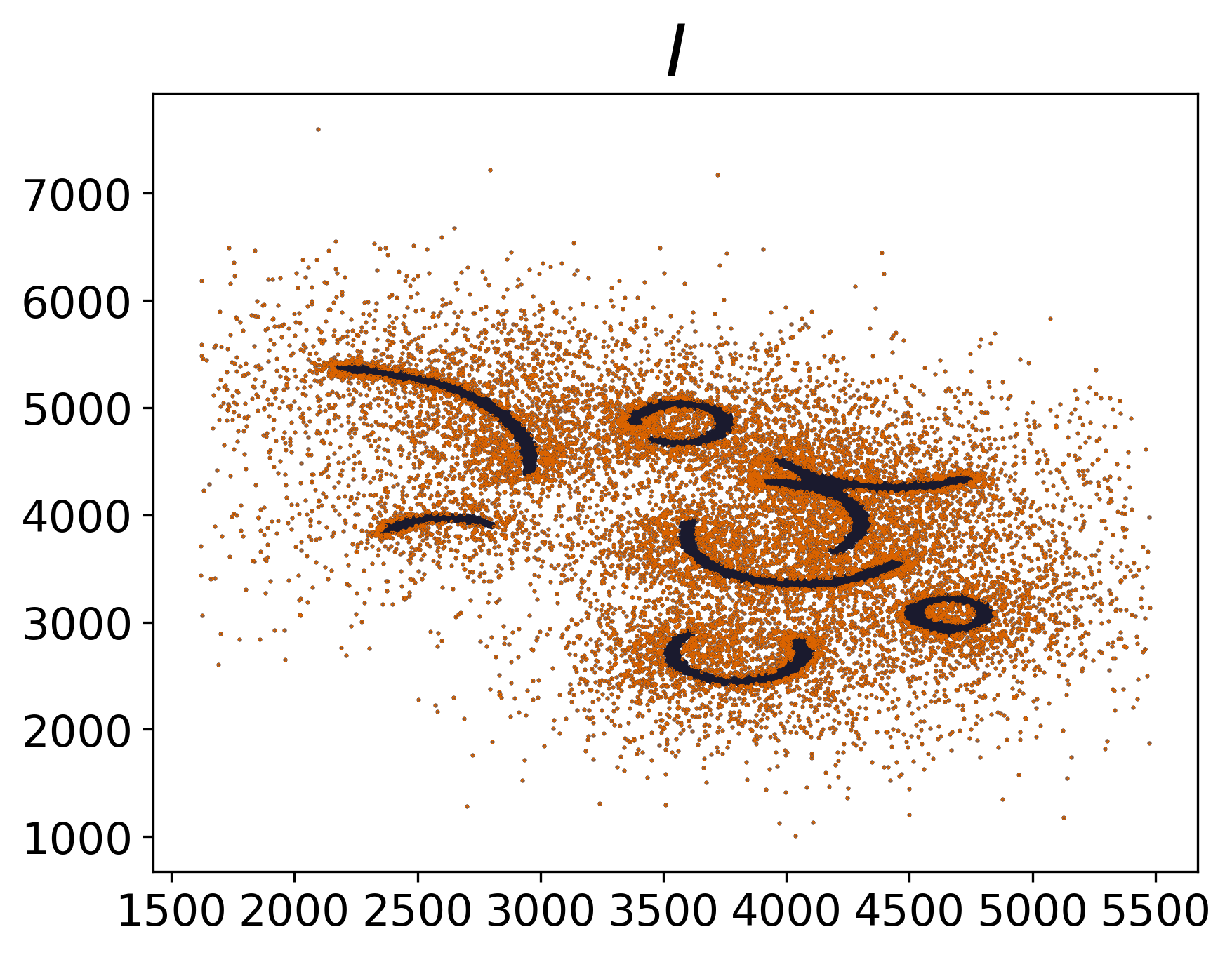}
  \quad\quad
  \includegraphics[width=1.2in]{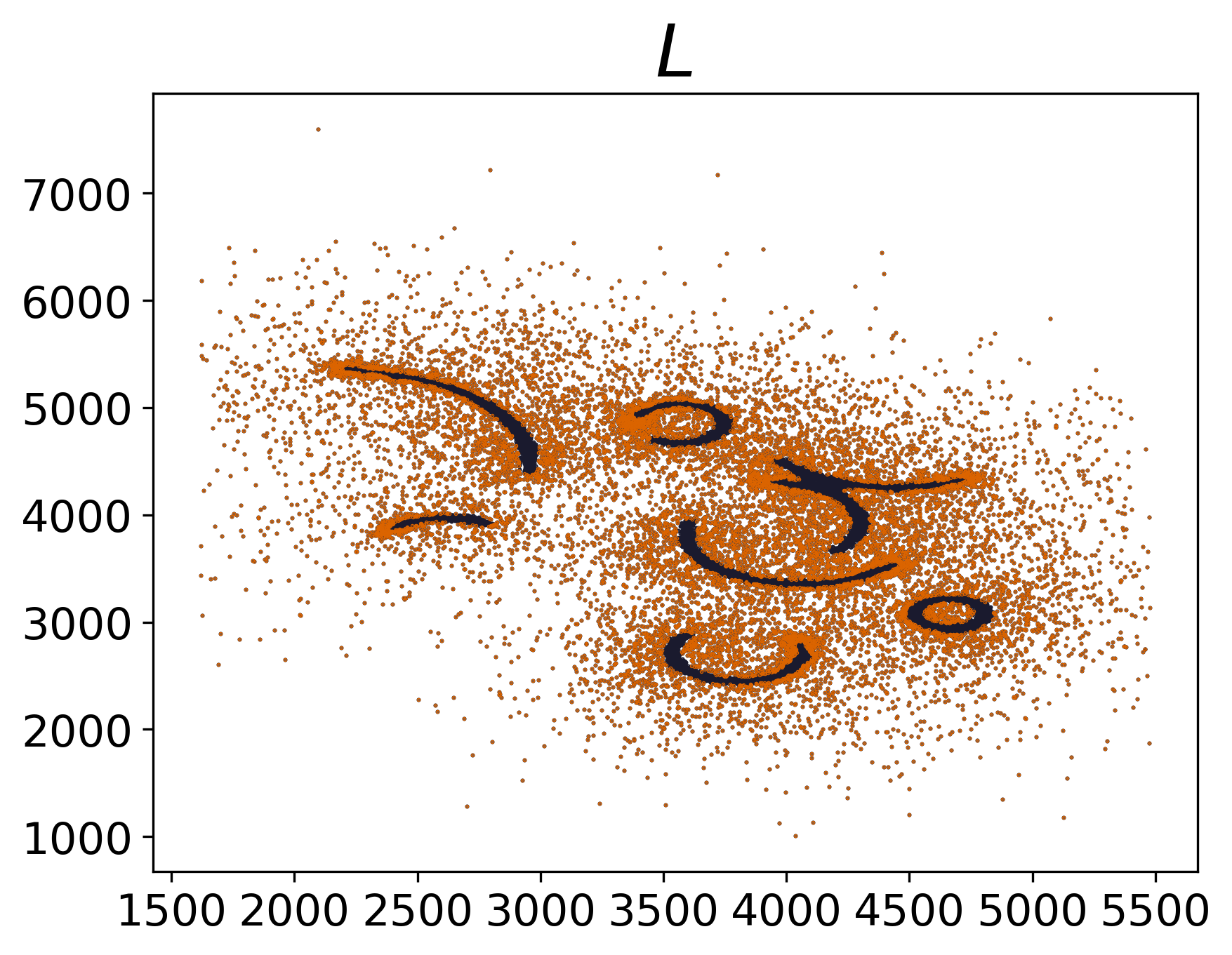}
  
  \caption{With the help of ODAR, the outliers detected by \emph{delta clustering} ($D,E,F$), \emph{kmeans} ($G,H,I$), and \emph{DPC} ($J,K,L$) for \emph{worm-num-least} ($A$), \emph{worm-num-medium} ($B$), and \emph{worm-num-most} ($C$) datasets. From \emph{worm-num-least} to \emph{worm-num-most}, the number of outliers is increasing. The shape of worm clusters in \emph{worm-num-most} cannot be easily recognized with the naked eye. After detecting all outliers, the shape of these worm clusters becomes very clear.}
  \label{fig:num}
\end{figure*}

\subsubsection{Robustness to Distribution}
Sometimes, outliers appear in complex distribution. They may gather somewhere with large density or be nested within normal objects, which makes it easy for detection methods to confuse between outliers and normal objects. Here, we select two datasets, \emph{t}4.7\emph{k} and \emph{t}7.10\emph{k}, to verify the robustness of ODAR to the outliers with complex distribution. In the two datasets, normal objects are the objects inside clusters, and outliers are the objects outside clusters. The first column of Figure \ref{fig:shape} shows their distribution. In \emph{t}4.7\emph{k}, some outliers traverse normal objects in a path like the $sin$ function. In \emph{t}7.10\emph{k}, some outliers traverse normal objects in a path resembling vertical lines. Columns second to fourth of Figure \ref{fig:shape} show detected outliers by \emph{delta clustering}, \emph{kmeans} and \emph{DPC} with the help of ODAR, in which outliers are marked in orange. Although the '$sin$' outliers and 'vertical lines' outliers are connected to clusters and have larger densities than sparse-outliers, these clustering algorithms are not confused and correctly detect them with the help of ODAR. For the objects inside clusters, none are incorrectly marked as outliers. Evidently, ODAR's performance is reliable in detecting the outliers with complex distribution.

\begin{figure*}[h]
  \centering
  \includegraphics[width=1.2in]{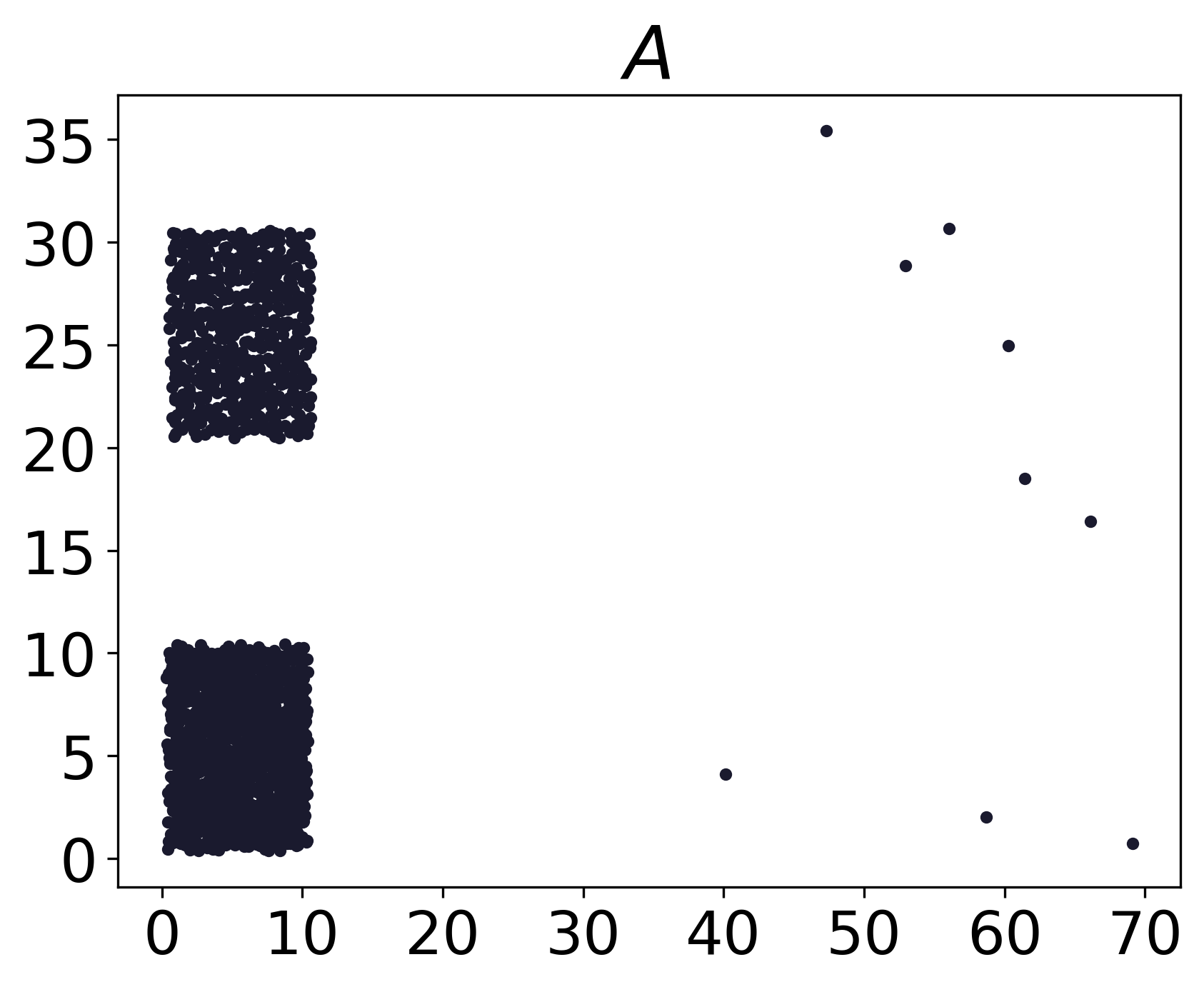}
  \quad\quad
  \includegraphics[width=1.2in]{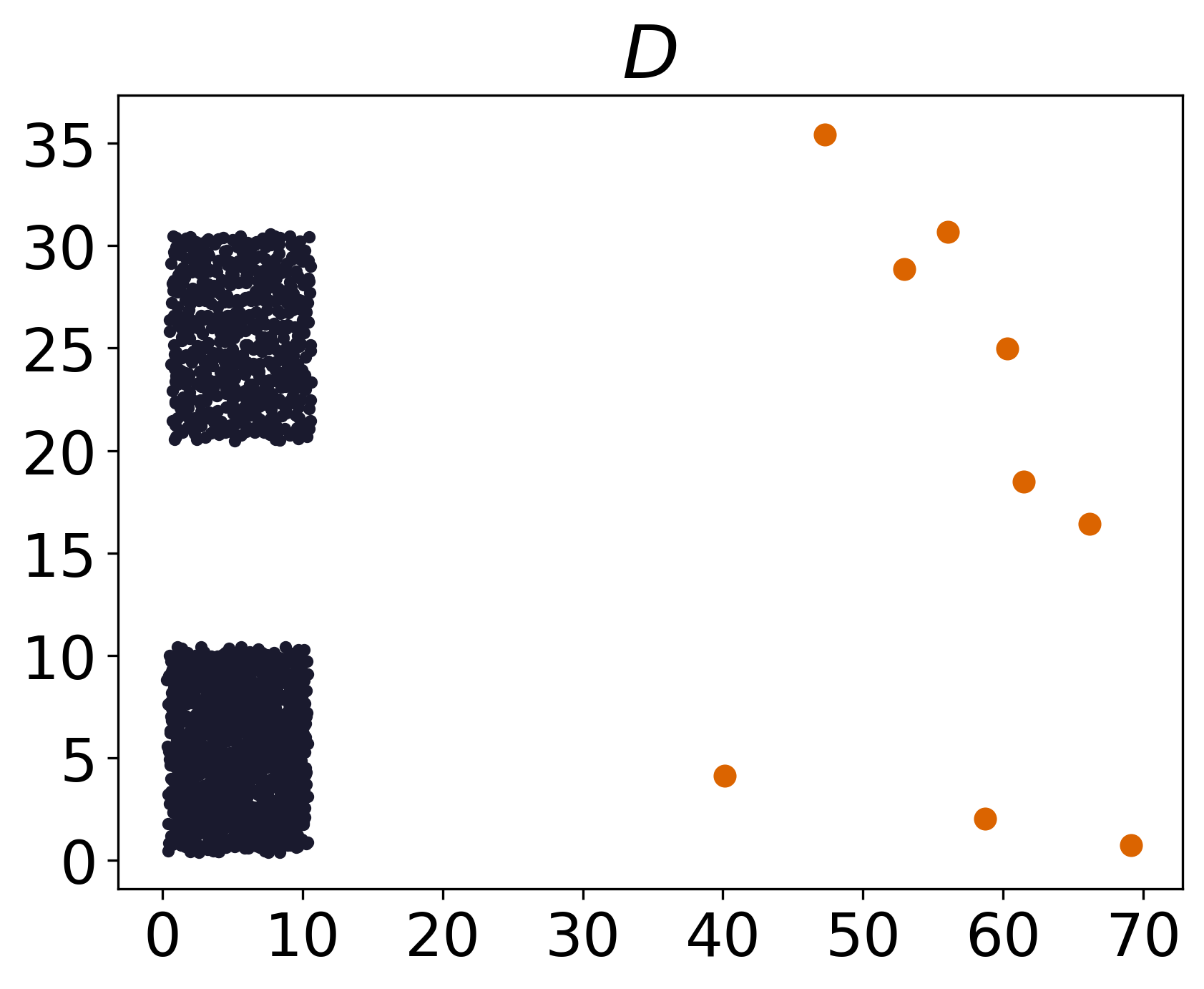}
  \quad\quad
  \includegraphics[width=1.2in]{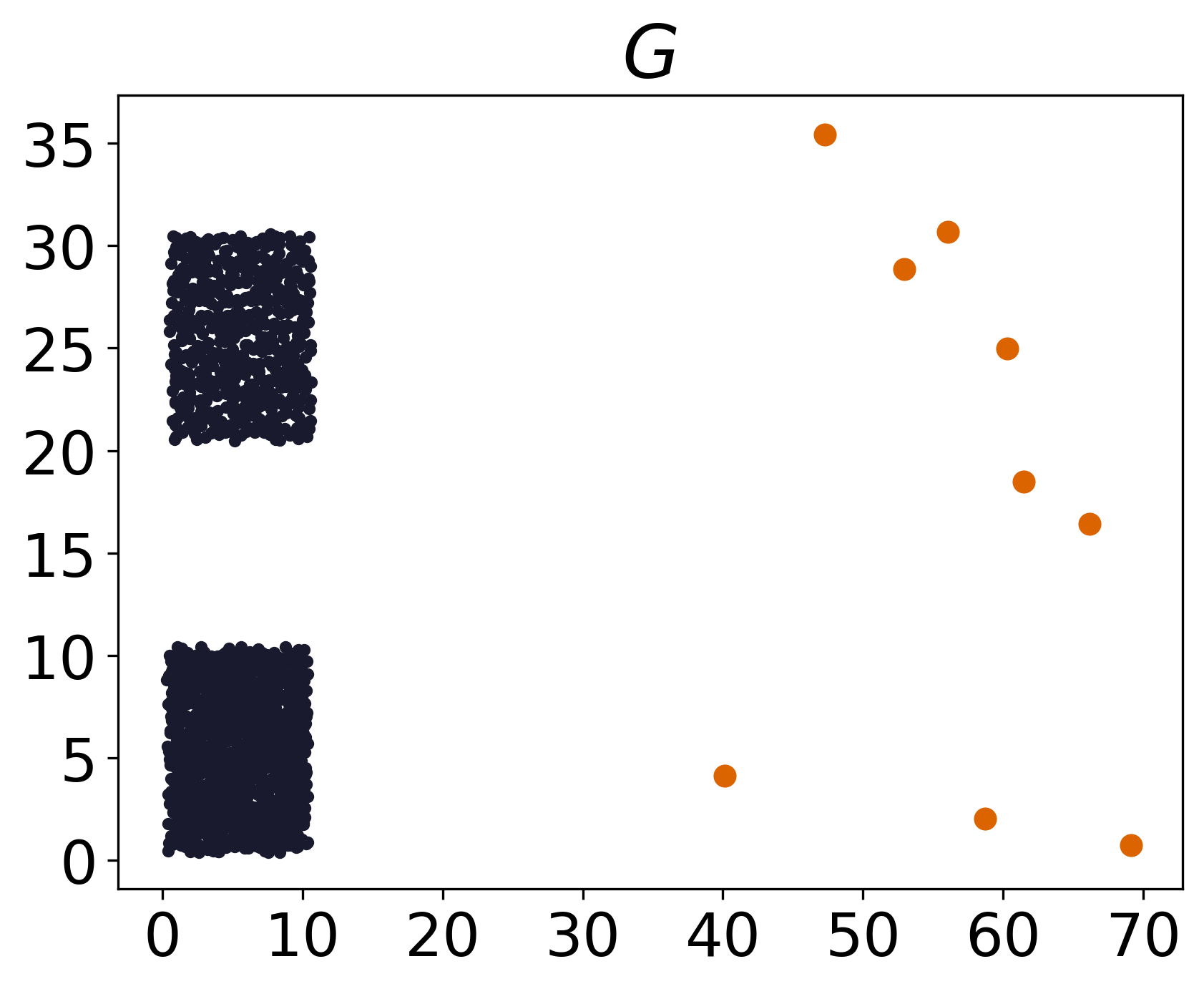}
  \quad\quad
  \includegraphics[width=1.2in]{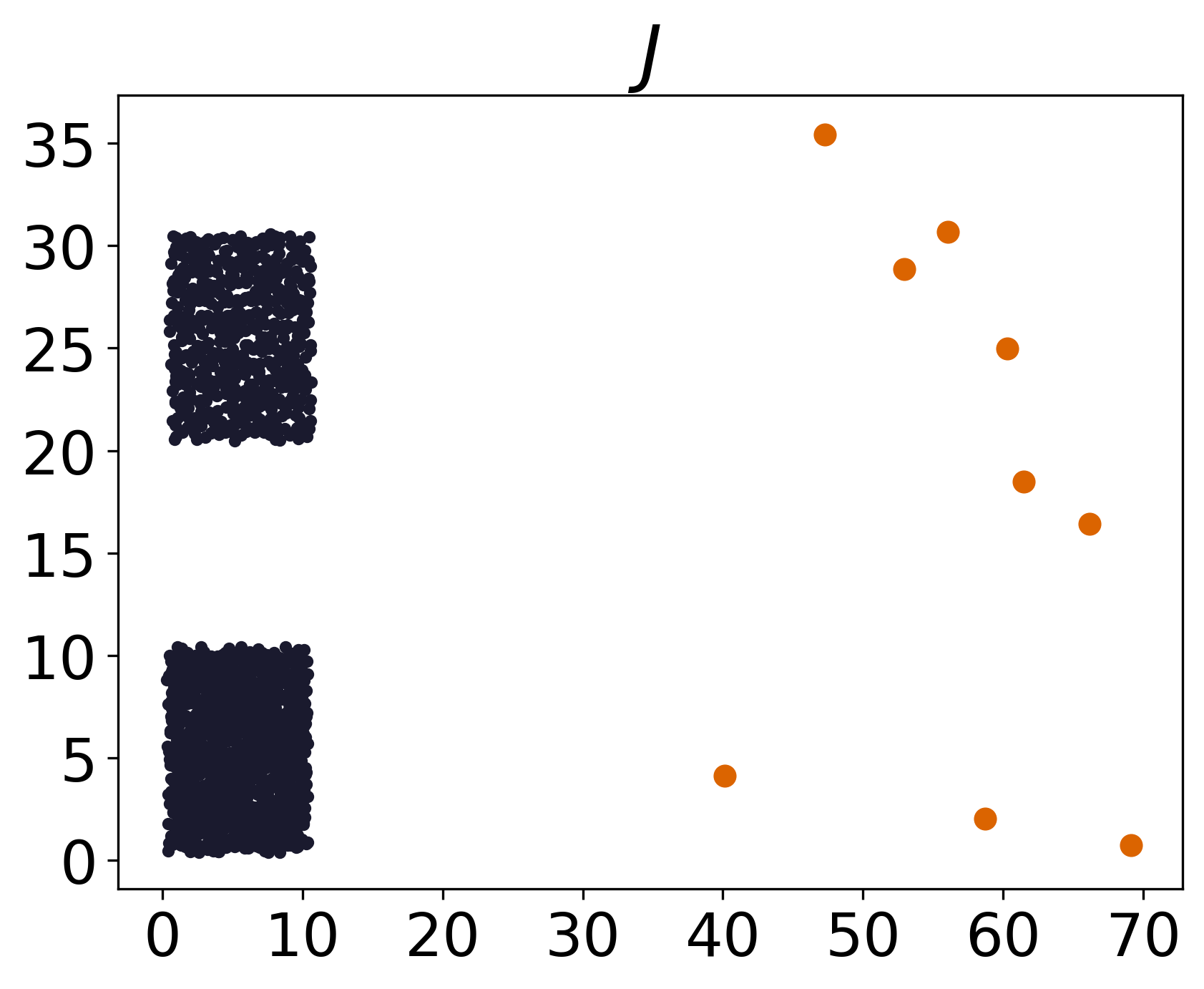}\\
  \includegraphics[width=1.2in]{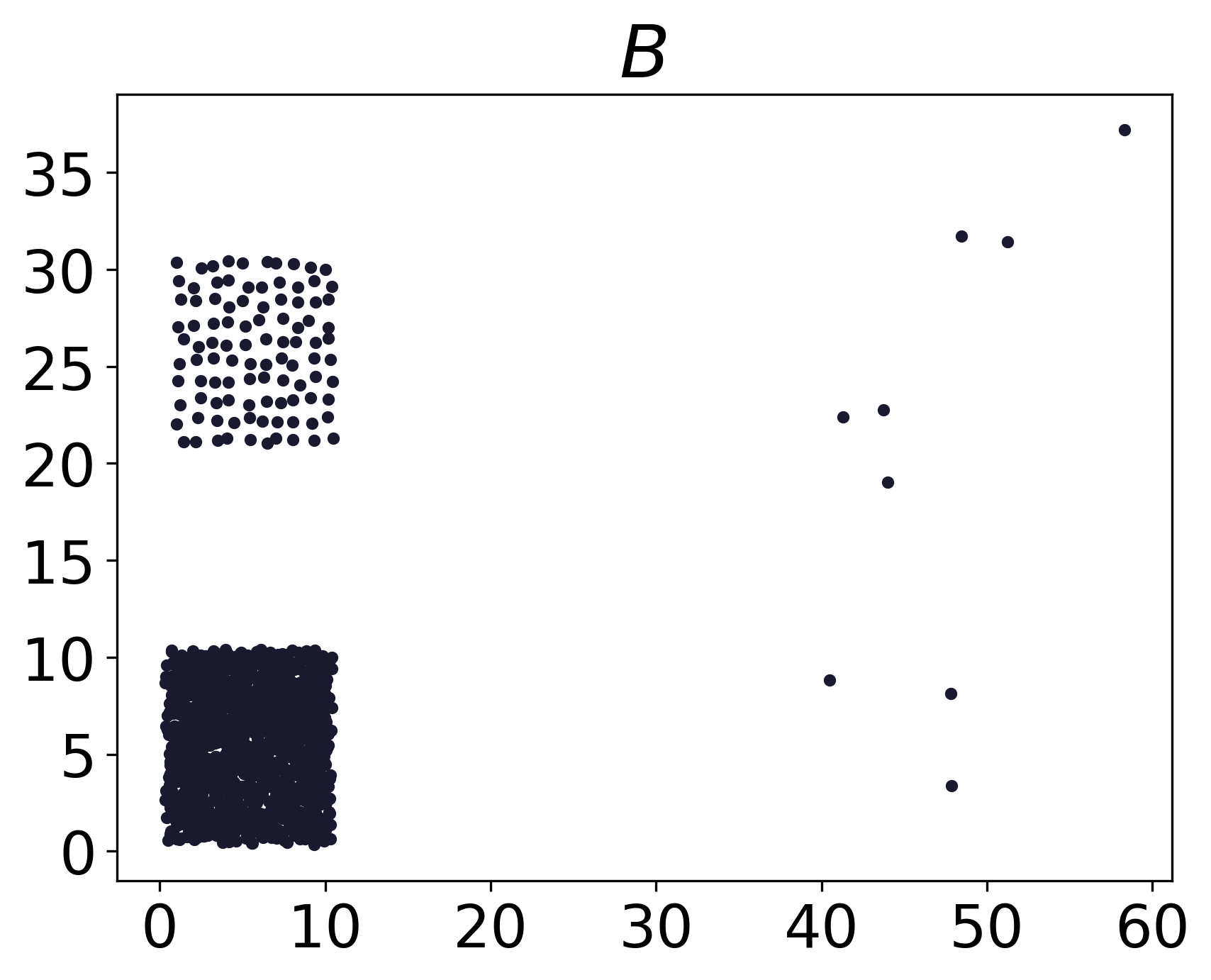}
  \quad\quad
  \includegraphics[width=1.2in]{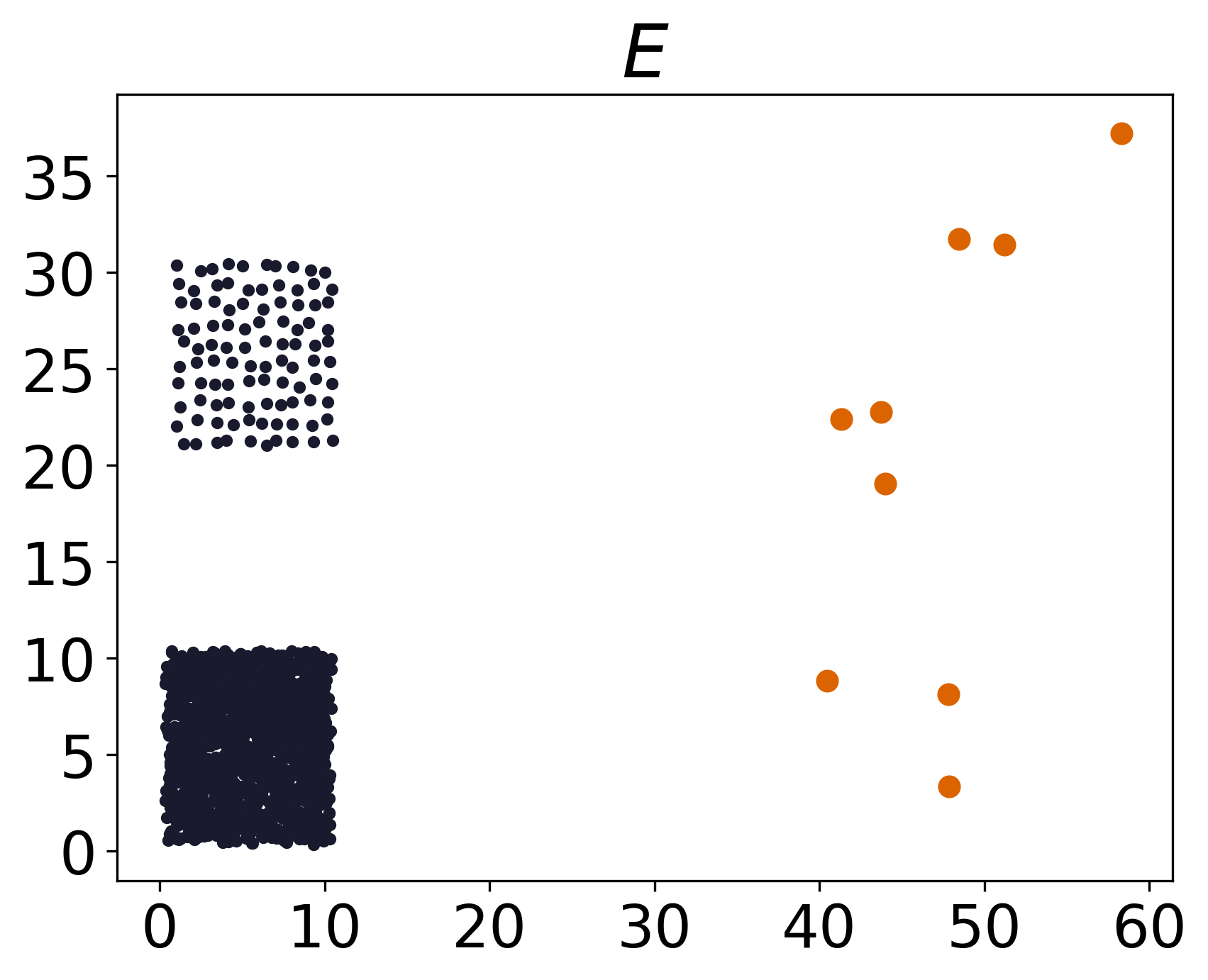}
  \quad\quad
  \includegraphics[width=1.2in]{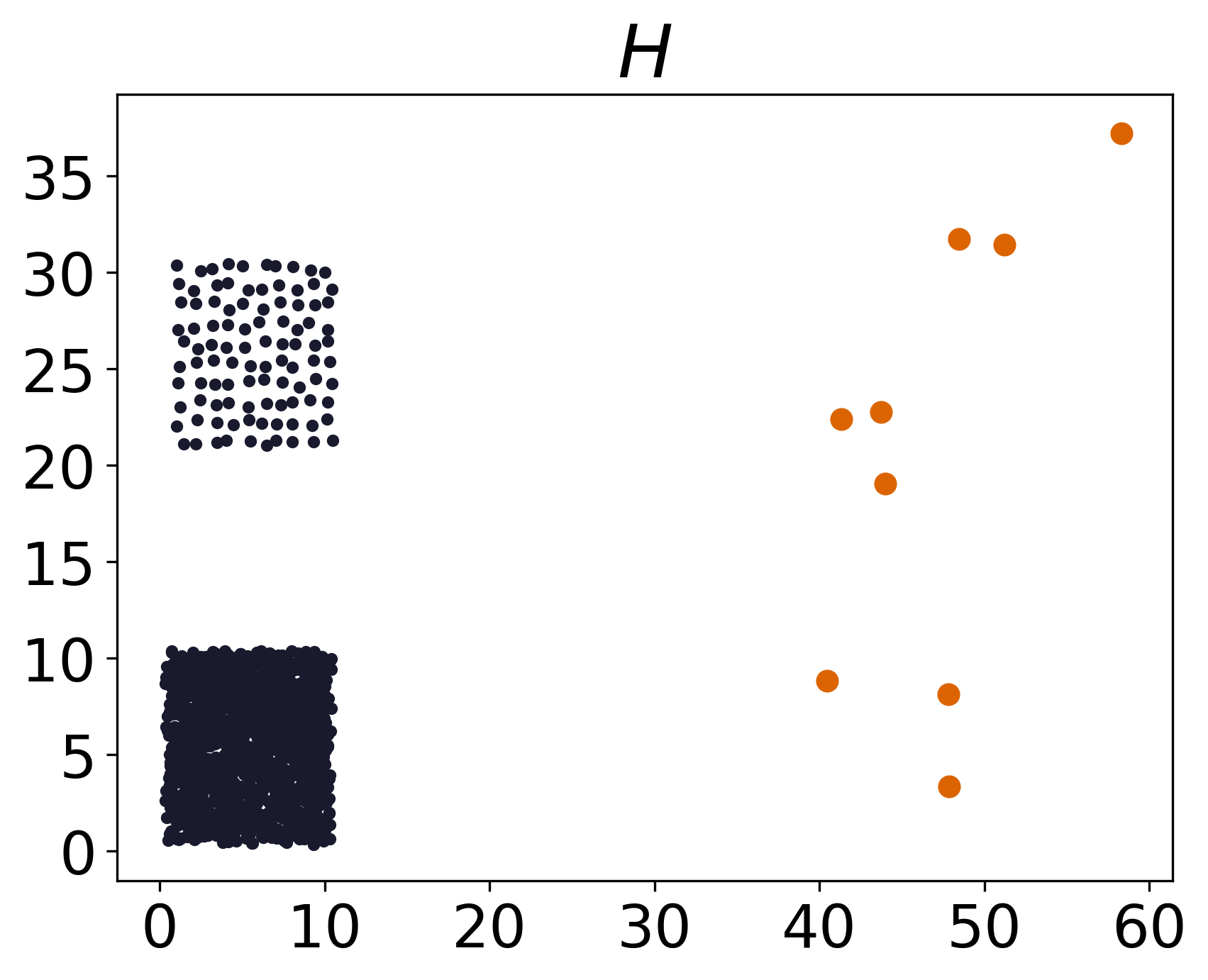}
  \quad\quad
  \includegraphics[width=1.2in]{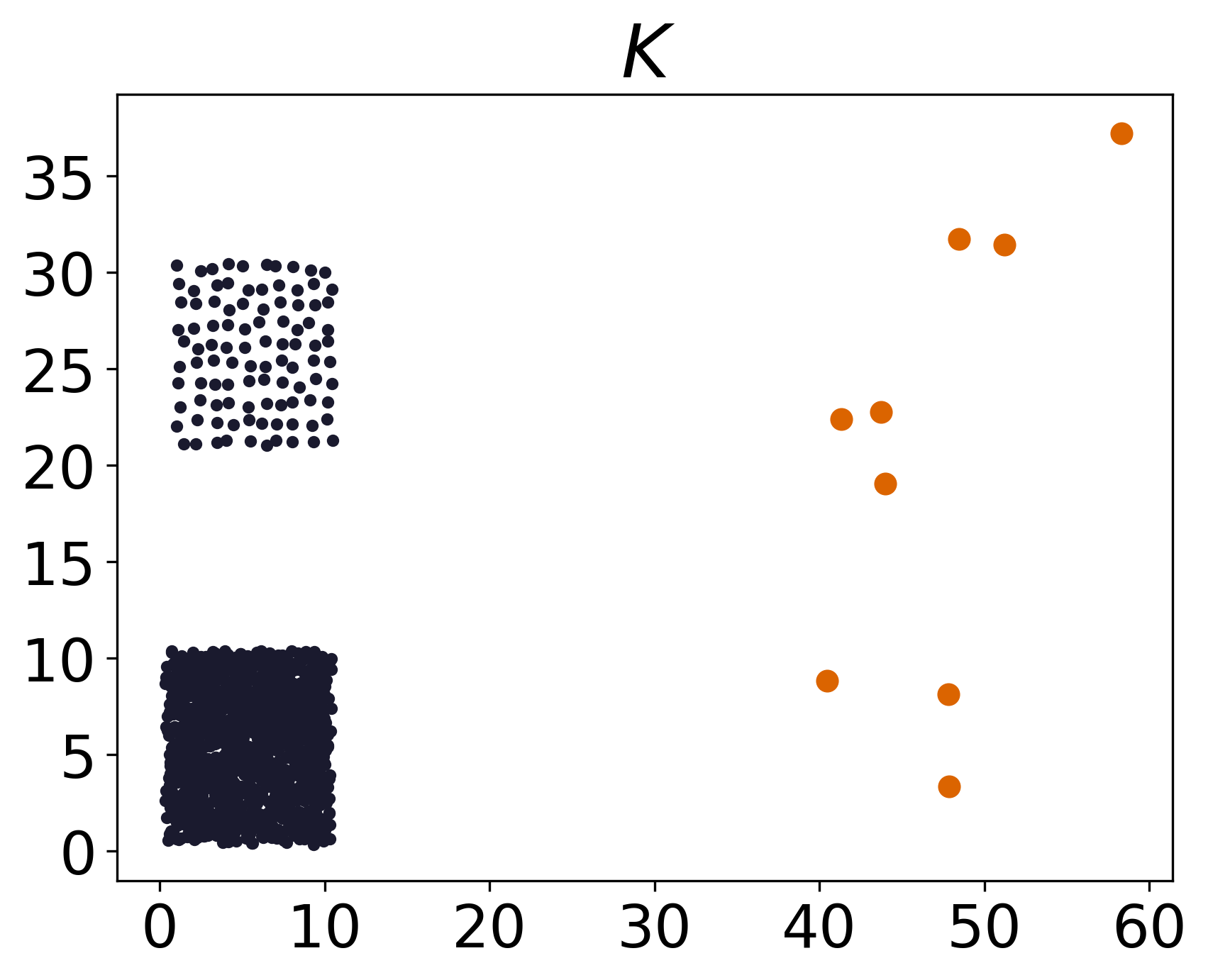}\\
  \includegraphics[width=1.2in]{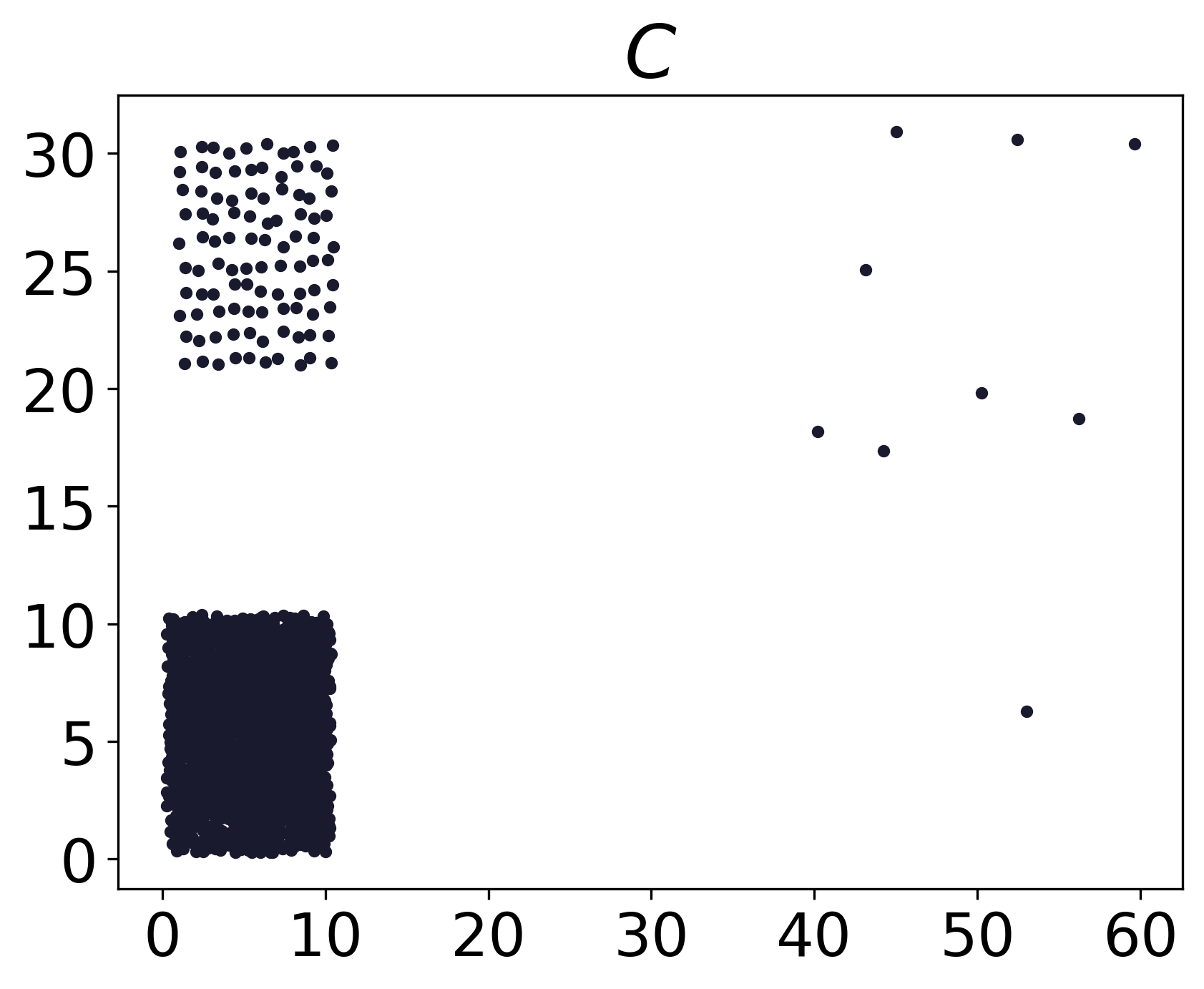}
  \quad\quad
  \includegraphics[width=1.2in]{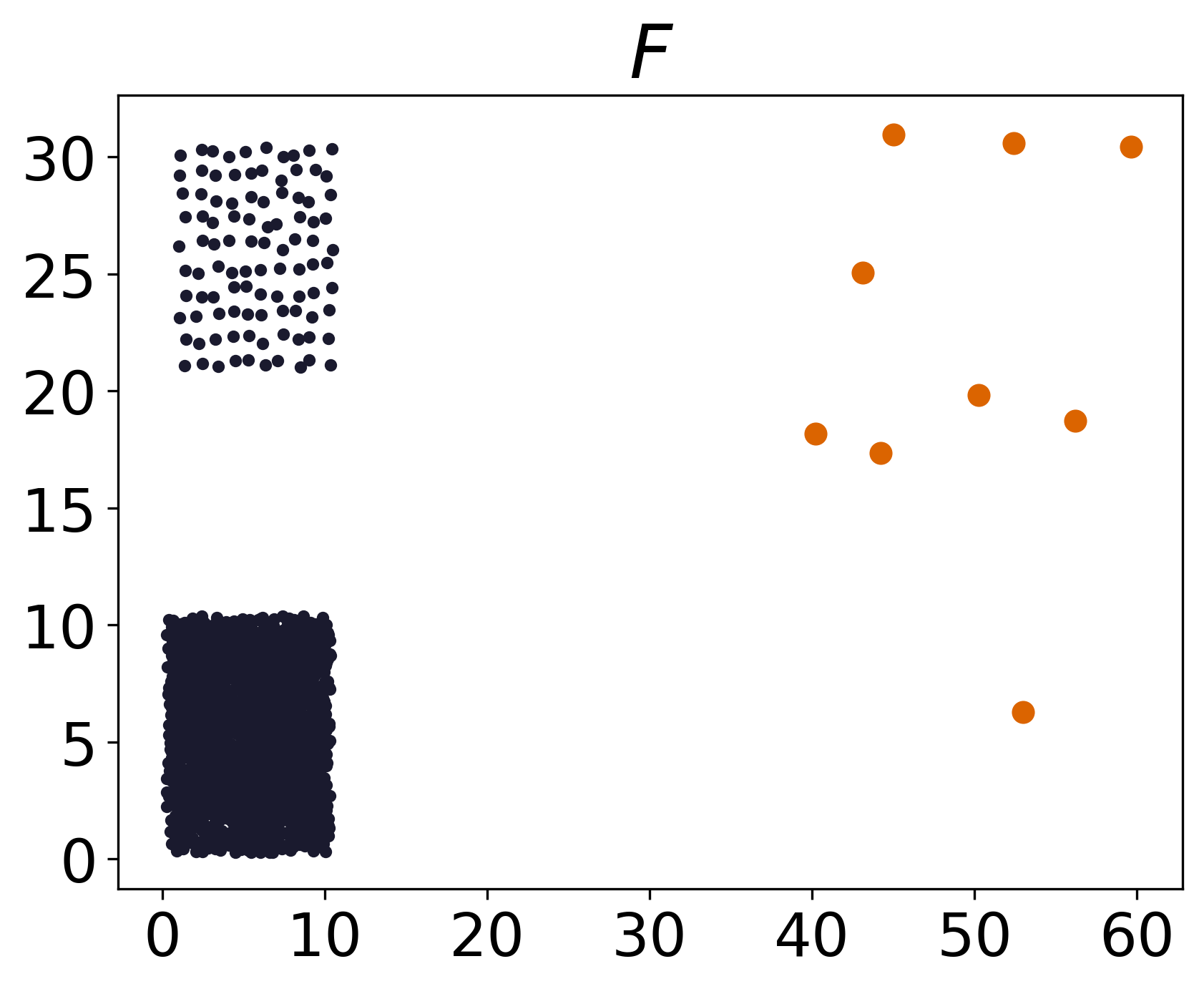}
  \quad\quad
  \includegraphics[width=1.2in]{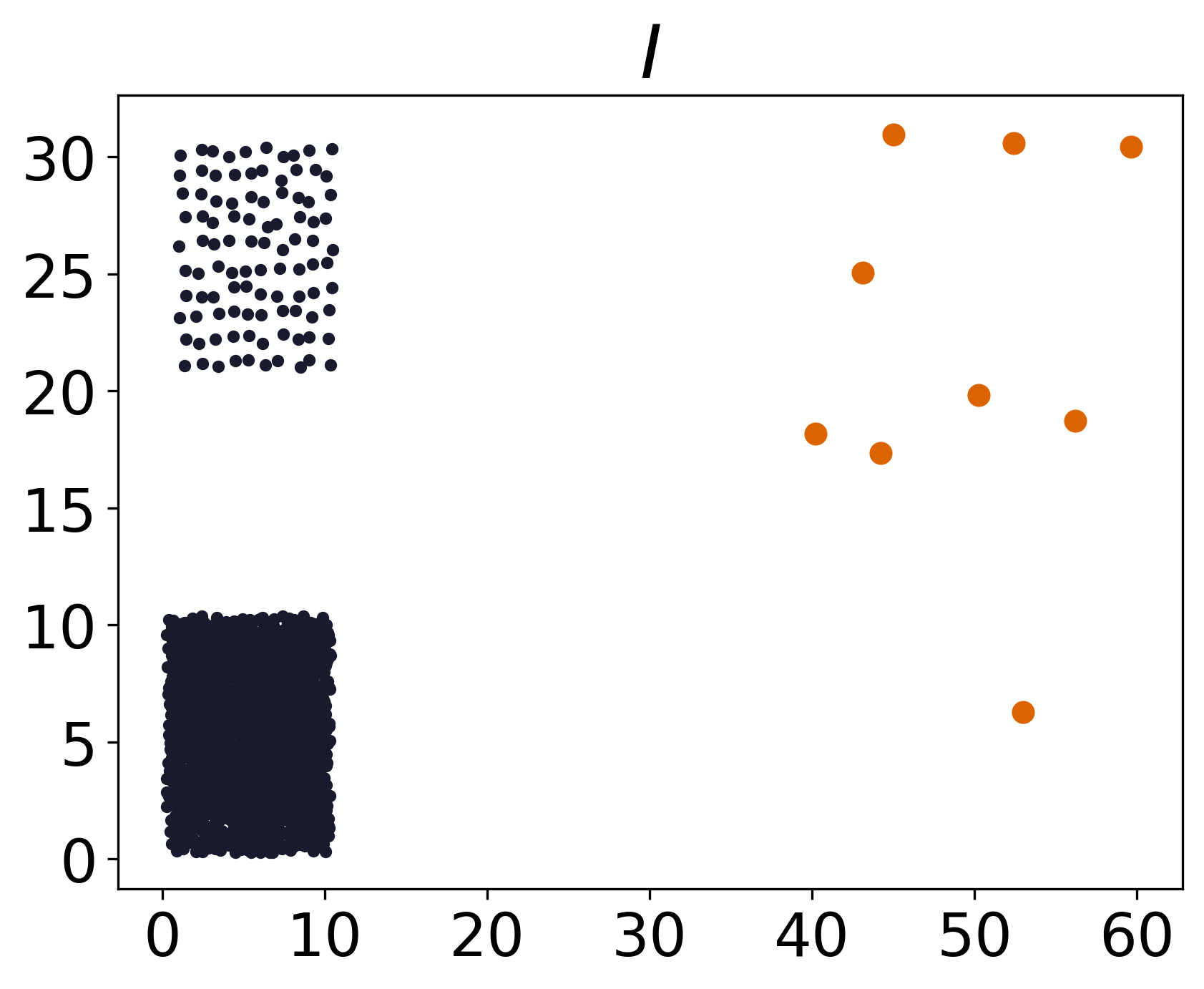}
  \quad\quad
  \includegraphics[width=1.2in]{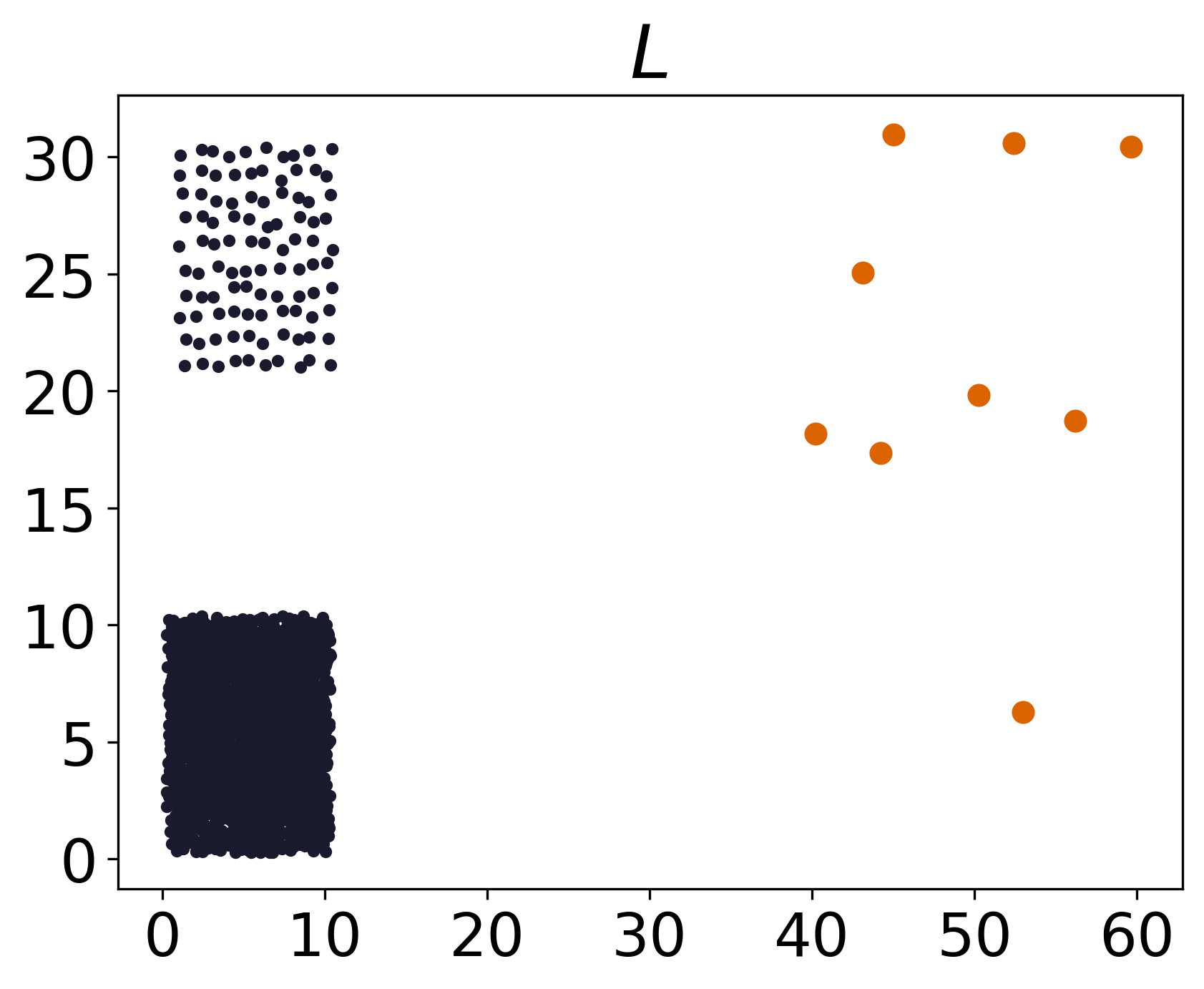}
  \caption{With the help of ODAR, the outliers detected by \emph{delta clustering} ($D,E,F$), \emph{kmeans} ($G,H,I$), and \emph{DPC} ($J,K,L$) for \emph{unbalanced-2} ($A$), \emph{unbalanced-10} ($B$), and \emph{unbalanced-15} ($C$) datasets. From \emph{unbalanced-2} to \emph{unbalanced-15}, the two clusters' density grows more unbalanced. Detection results are not disturbed by unbalanced density, and the objects in low-density clusters are not detected as outliers.}
  \label{fig:unbalance}
\end{figure*}

\subsubsection{Robustness to the Number of outliers}
ODAR assumes that the number of outliers is less than those of normal objects, and the feasibility of high-order density is based on this assumption. Here, we want to verify whether ODAR is effective when the number of outliers is close to those of normal objects. We use the \emph{worm}, a dataset production function  \cite{sieranoja2019fast}, to generate three datasets for verification. These datasets are named \emph{worm-num-least}, \emph{worm-num-medium}, \emph{worm-num-most}, and the differences between them are shown in the first column of Figure \ref{fig:num}. From \emph{worm-num-least} to \emph{worm-num-most}, the proportion of outliers increases from 8\% to 40\%. The number of outliers in \emph{worm-num-most} even exceeds the number of objects in \emph{worm-num-least}. Columns second to fourth of Figure \ref{fig:num} show the detection results, in which outliers are also marked in orange. Experimental results show that, with the help of ODAR, the three clustering algorithms correctly detect all outliers that are free from worm clusters. For the objects inside worm clusters, the three clustering algorithms correctly mark them as normal objects. Specifically, in the \emph{worm-num-most} dataset, as shown in Figure \ref{fig:num}(C), a large number of outliers make it difficult to identify the shape of worm clusters with the naked eye. After detecting outliers, all worm clusters in \emph{worm-num-most} become clearly visible, shown in Figure \ref{fig:num}(F, I, L).

\subsubsection{Robustness to Unbalanced Density}
Obviously, if densities are unbalanced among normal objects, then low-density normal objects may be similar to outliers. Since ODAR relies on density features to detect outliers, it is necessary to consider the robustness of ODAR to unbalanced density. Here, we test ODAR on three unbalanced datasets, namely \emph{unbalanced-2}, \emph{unbalanced-10}, and \emph{unbalanced-15}. The three datasets consist of a high-density cluster, a low-density cluster, and some outliers. In the \emph{unbalanced-2} dataset, the low-density cluster has 484 objects, the high-density cluster has 961 objects, so the density difference between the two clusters is about 2. In the \emph{unbalanced-10} dataset, the low-density cluster has 100 objects, the high-density cluster has 961 objects, so the density difference is about 10. In the \emph{unbalanced-15} dataset, the low-density cluster has 100 objects, the high-density cluster has 1440 objects, so the density difference is about 15. The three datasets are shown in the first column of Figure \ref{fig:unbalance}. Columns second to fourth of Figure \ref{fig:unbalance} show the results detected by different clustering algorithms with the help of ODAR, in which outliers are still marked in orange. Clearly, ODAR is not affected by unbalanced density, as the objects in low-density clusters are not detected as outliers.

\textbf{Summary of robustness.} Based on the above results, it can be concluded that ODAR is robust to the distribution and number of outliers, and unbalanced density. What's more, no matter which clustering algorithm is matched, these robustness are invariable.

\section{CONCLUSION AND FUTURE WORKS}
\label{sec:conlusion}
We propose a method called ODAR, designed to detect outliers for clustering algorithms. Through secondary density transformation, ODAR maps the dataset into a feature space. In this feature space, outliers and normal objects are gathered into two distinct clusters. As a result, any clustering algorithm can detect outliers by identifying clusters. We conduct a series of experiments to verify ODAR's robustness and confirm that it can be applied to complex datasets. To further verify the effectiveness, we test ODAR on ten real-world datasets. Experiments show that the clustering algorithms with the help of ODAR perform almost the best on all datasets, and have an overwhelming advantage over baseline methods. More importantly, ODAR is not sensitive to input parameter.

However, ODAR has a flaw. In addition to large-density objects, marginal objects among normal objects are also a special group. Their local densities are less than those of internal objects, and their number is far less than the number of internal objects. Therefore, the high-order densities of marginal objects are less than those of internal objects. Once the local densities of marginal objects are close to outliers, they will be falsely detected as outliers. In the future, we will try our best to reduce the difference in high-order density between internal objects and marginal objects to further improve ODAR.

\ifCLASSOPTIONcaptionsoff
  \newpage
\fi



\bibliographystyle{IEEEtran}

\bibliography{ODAR}

\end{document}